\numberwithin{equation}{section}
\providecommand{\customgenericname}{}
\newcommand{\newcustomtheorem}[2]{%
  \newenvironment{#1}[1]
  {%
   \renewcommand\customgenericname{#2}%
   \renewcommand\theinnercustomgeneric{##1}%
   \innercustomgeneric
  }
  {\endinnercustomgeneric}
}
\providecommand{\keywords}[1]
{
  \small	
  \textbf{\textit{Keywords---}} #1
} 
\newtheorem{lemma}{Lemma}
\newtheorem{definition}{Definition}
\newtheorem{remark}{Remark}
\newtheorem{assumption}{Assumption}
\newtheorem{proposition}{Proposition}
\newtheorem{theorem}{Theorem}
\newtheorem{corollary}{Corollary}
\begin{document}
\numberwithin{equation}{section} 
\title{Demystifying Lazy Training of Neural Networks from a Macroscopic Viewpoint}
\author{  
Yuqing Li\textsuperscript{\rm 1,2} \thanks{Corresponding author: liyuqing\underline{~}551@sjtu.edu.cn.},
Tao Luo\textsuperscript{\rm 1,2,3,4}, 
Qixuan Zhou\textsuperscript{\rm 1,2}\\
\textsuperscript{\rm 1}  School of Mathematical Sciences, Shanghai Jiao Tong University \\
\textsuperscript{\rm 2}  CMA-Shanghai, Shanghai Jiao Tong University\\
\textsuperscript{\rm 3} Institute of Natural Sciences, MOE-LSC, Shanghai Jiao Tong University \\
\textsuperscript{\rm 4}   Shanghai Artificial Intelligence Laboratory

\{liyuqing\underline{~}551, 
zhouqixuan, luotao41\}@sjtu.edu.cn.
}
\date{\today}
\maketitle
\begin{abstract}


In this paper, we advance the understanding of neural network training dynamics by examining the intricate interplay of various factors introduced by weight parameters in the initialization process.  Motivated by  the foundational work of Luo et al.~(J. Mach. Learn. Res., Vol. 22, Iss. 1,   No. 71, pp 3327-3373), we explore the gradient descent dynamics of  neural networks through the lens of macroscopic limits, where  we analyze its  behavior as  width   $m$ tends to infinity.    Our study presents a unified approach with refined techniques   designed for multi-layer fully connected neural networks, which can be readily extended to other neural network architectures. Our investigation reveals that  gradient descent  can rapidly drive deep neural networks to zero training loss, irrespective of the specific initialization schemes employed by weight parameters, provided that the initial scale of the output function $\kappa$ surpasses a certain threshold. This regime, characterized as  the theta-lazy area, accentuates the predominant influence of the initial scale $\kappa$ over other   factors on   the training behavior of  neural networks. Furthermore,  our approach   draws inspiration from the Neural Tangent Kernel (NTK) paradigm, and  we expand its applicability. While   NTK   typically assumes that $\lim_{m\to\infty}\frac{\log \kappa}{\log m}=\frac{1}{2}$, and   imposes   each   weight parameters to scale by the factor $\frac{1}{\sqrt{m}}$,  in our theta-lazy regime, we discard the factor   and relax the  conditions to $\lim_{m\to\infty}\frac{\log \kappa}{\log m}>0$. Similar to   NTK,  the behavior of overparameterized neural networks within the theta-lazy regime trained by gradient descent can be effectively described by a specific kernel. Through rigorous analysis, our investigation illuminates the pivotal role of $\kappa$ in governing the training dynamics of neural networks.


\end{abstract}
\keywords{ macroscopic limit, multi-layer  neural network,  dynamical regime, neural tangent kernel, theta-lazy regime}
\allowdisplaybreaks

\section{Introduction}
One intriguing observation  in deep learning  pertains to    the influence of initialization scales on the dynamical behaviors exhibited by Neural Networks (NNs)~\cite{zhang2021rethink}. 
Within a specific regime, NNs trained with gradient descent can be  interpreted as a kernel regression predictor, termed   the Neural Tangent Kernel (NTK)~\cite{Jacot2018Neural,gu2020generalized,Du2018Gradient}, moreover,  Chizat et al.~\cite{chizat2019lazy} identify this regime as the lazy training regime, in which the parameters of  NNs hardly vary.
However, alternative scaling regimes engender highly nonlinear characteristics in NNs trained via gradient descent,  prompting the establishment of    mean-field analysis techniques~\cite{mei2018mean,rotskoff2018parameters,chizat2018global,sirignano2020mean}    as a means to explore the behavior of infinitely wide two-layer networks under such initialization scales.     Additionally, small initialization has been empirically demonstrated to induce a phenomenon known as condensation~\cite{maennel2018gradient,luo2021phase},   wherein the weight vectors of NNs concentrate on isolated orientations throughout the training process. 
This phenomenon is   significant as NNs with condensed weight vectors effectively resemble ``smaller" NNs with reduced parameterization, thus diminishing the complexity of the output functions they represent. Since generalization error can be bounded in terms of network complexity \cite{bartlett2002rademacher}, NNs featuring condensed parameters tend to exhibit superior generalization abilities. 
In light of these observations, the identification of distinct initialization regimes for NNs represents a crucial step towards unraveling the underlying mechanisms governing the training dynamics of neural networks.

Philosophically, our approach advocated here   bears a lot of similarity to that of  molecular dynamics~\cite{frenkel2023understanding}.  In molecular dynamics, the system is represented by a collection of discrete particles, each embodying certain properties or characteristics of the system. By simulating the behavior of these particles over time, the molecular dynamics  endeavors to capture the collective behavior and dynamical intricacies inherent in the system. 
Similarly,   our unified approach has to deal with the myriad parameters  in   NNs,  where several challenges stem directly from the system,  including the interactions between weight parameters across different layers,  and the intricate dependencies between  weight parameters and the output functions. Our goal is  to   capture the lazy training phenomena occurring at various scales simultaneously. Specifically,  as detailed in Section \ref{sub-theta}, our  approach focus on the statistical properties of the `particles', such as the relative distance  between weight parameters across individual layers, and the   initial scale of the output function $\kappa$,  to  scrutinize the macroscopic behavior of parameters.

Moreover,   we draw parallels from other fields to enhance our understanding of the training dynamics. 
For instance, within the realm of continuum mechanics,   the Cauchy-Born rule~\cite{ericksen2008cauchy} is derived  assuming that materials can be described as continuous media.  Hence, the Cauchy-Born rule serves as an example of the macroscopic limit, relating the macroscopic deformation of a material to the underlying microscopic arrangement of atoms in a crystal lattice. 
Analogously, from the perspective of continuum mechanics, our investigation sheds light on  the macroscopic limit of the output function of NNs  by considering the width $m$ approaching infinity. 
In kinetic theory,     under the Boltzmann-Grad scaling  $N\eps^{2}\equiv 1$,  whereas the number
of particles $N$ goes to infinity and the characteristic length  of interaction  $\eps$ simultaneously goes to zero,  the Boltzmann equation  can  be rigorously derived as the mesoscopic limit of
systems of three-dimensional hard spheres~\cite{gallagher2013newton}.
In a parallel manner,    Mei et al.~\cite{mei2018mean}  derived 
the mean-field limit of two-layer NNs to analyze the behavior of NNs  trained under stochastic gradient descent in the limit of infinitely many neurons,  with its initial scale satisfying $\lim_{m\to\infty}\frac{\log \kappa}{\log m}=0$. Essentially,  the mean-field description  approximates the evolution of the   weight parameters   by an evolution in the space of probability distributions, and this evolution can be defined through a partial differential equation. 
We conjecture that regardless of the initial scale $\kappa$, the evolution of the weight parameters can always be approximated by an evolution in the space of probability distributions as $m$ tends to infinity. However,  due to technical constraints, achieving tractable mathematical descriptions of such evolution is  feasible when $\lim_{m\to\infty}\frac{\log \kappa}{\log m}=0$, and discussions on these techniques are beyond the scope of this paper.
Furthermore, we emphasize that the mean-field scaling serves as a critical regime in the phase diagram of NNs. It is noteworthy that the $\vtheta$-lazy regime is sub-critical, in that it resides in the entire half-plane of the regime separation line $\lim_{m\to\infty}\frac{\log \kappa}{\log m}=0$.


Technically speaking, by dissecting the initial scale $\kappa$ into its constituent elements determined by the initialization scheme applied to the weight parameters of each individual layer, we uncover  the mechanisms underpinning the   lazy training phenomenon. This analytical process enables us to disentangle the contributions of each initialization factor to the overall training dynamics, providing clarity on how each element influences the training behavior of neural networks.   Specifically, the  condition  $\lim_{m\to\infty}\frac{\log \kappa}{\log m}>0,~\kappa >1,$~(where $m$ represents the width) serves  as a key ingredient in our analysis,  which enables us to interpret the dynamics of the weight parameters belonging to each individual layer as a kernel regression predictor. It is worth noting that  our techniques draw upon  prior research~\cite{ma2020comparative}, and  we acknowledge that our approach is also  inspired by  the NTK.     While the NTK   conventionally assumes that $\lim_{m\to\infty}\frac{\log \kappa}{\log m}=\frac{1}{2}$, and it involves scaling the   weight parameters  by a factor $\frac{1}{\sqrt{m}}$,  our approach extends its applicability by  discarding the 
 factor $\frac{1}{\sqrt{m}}$, and relaxing the  condition to $\lim_{m\to\infty}\frac{\log \kappa}{\log m}>0$. Moreover, we propose that this formulation can be readily extended to explore training dynamics across various NN architectures. We postulate that the initial scale $\kappa$ also plays a pivotal role in governing the persistence of weight parameters in $L$-layer Convolutional Neural Networks~(CNNs) during training.
In summary, through this refined  analysis, we offer a more comprehensive understanding of the intricate interplay between initialization strategies and training dynamics in deep learning models.


This paper is the fourth  paper in our series of works on the phase diagram of NNs. Our first paper~\cite{luo2021phase}  established the phase diagram   for the two-layer ReLU
neural network at the infinite-width limit, 
thus providing a comprehensive characterization of its dynamical regimes and  its dependence on the hyperparameters related to initialization.
Within this phase diagram, we identified three distinct regimes: the linear regime, critical regime, and condensed regime. 
Building upon this groundwork, our subsequent study~\cite{chen2023phase} elucidated the phase diagram of initial condensation for two-layer neural networks equipped  with a wide class of smooth activation functions. Herein, we  reveal the mechanism of initial condensation for two-layer NNs, and  we identify the directions towards which the weight parameters condense. 
Furthermore, we initiated our exploration into the  realm of  multi-layer NNs by empirically presenting the phase diagram for three-layer ReLU NNs with infinite width~\cite{zhou2022empirical}. 
Finally, 
this paper is motivated by a series of recent articles~\cite{Jacot2018Neural,gu2020generalized,gu2020gradient,Du2019Gradient,Huang2020NTH,Yuqing2022ResNet} where it is shown that overparameterized multi-layer NNs under the NTK scaling   converge linearly to zero training loss with their parameters hardly varying. 
We contend that this behavior is not peculiar to the NTK scaling, and such behavior is predominantly influenced by the scale of the output function of  NNs at initialization, rather than some specific choices of initialization schemes.
To substantiate this assertion, we introduce our approach in Section \ref{Section...Main-Results}, which illustrates  that virtually any NN model can be trained in   the  lazy regime, provided that   the initial scale of its output $\kappa$ is sufficiently large~($\lim_{m\to\infty}\frac{\log \kappa}{\log m}>0$, $m$ is the width). This finding underscores the feasibility of  fast training for NNs, although at the cost of recovering a linear method.    In a subsequent paper in the series, we will extend this 
approach to the regime of  small initialization, where the phenomenon of condensation can be observed. We aim to exploit this methodology to   identify  the underlying mechanism by which different choices of initialization schemes give rise to distinct dynamical behaviors in NNs.  

The organization of the paper is listed  as follows. In Section~\ref{Section...Related-Works}, we  discuss some related works. In Section~\ref{Section...Preliminaries},  we give  some preliminary introduction to our problems.  
 In Section~\ref{Section...Main-Results}, we state some proof techniques and   give out the outline of proofs for our main results, and conclusions are drawn in Section~\ref{section....Conclusion}. All the details of the proof are deferred to the Appendix.

\section{Related Works}\label{Section...Related-Works}

There has been a rich literature on   the choice of initialization schemes    to
facilitate neural network training~\cite{glorot2010understanding,He2016Deep,mei2018mean,sirignano2020mean},  while most of the work identified   width $m$ as a hyperparameter, where the lazy regime is reached when the width grows towards infinity~\cite{Jacot2018Neural,Du2019Gradient},   Chizat et al.~\cite{chizat2019lazy} advocate for the consideration of the initialization scale as the primary hyperparameter of interest,  rather than the width parameter $m$. Subsequent investigations by    Woodworth et al.~\cite{woodworth2020kernel}   focus on
the role of initialization scale as a pivotal determinant governing the transition  between two  different
regimes, namely the kernel regime and the rich regime, within the context of  matrix factorization problems. Additionally, Williams et al.~\cite{williams2019gradient}  studied the implicit bias of gradient descent in the approximation of univariate functions
using single-hidden layer ReLU networks. Their findings   highlights the importance of judiciously selecting initialization strategies to effectively guide the training process of NNs.   
Furthermore, Mehta et al.~\cite{mehta2021extreme}  conducted an in-depth investigation into the effects of   initialization scales on the generalization performance of NNs trained using Stochastic Gradient Descent~(SGD). Their study elucidates how augmenting the initialization scale can detrimentally affect the generalization capabilities of NNs, 
 emphasizing the intricate balance required in initializing neural network parameters to achieve better generalization performance.
In summary, the selection of appropriate initialization scales emerges as a critical factor in sculpting the training dynamics and generalization performance of NNs, thereby underscoring its significance in the domain of neural network research and application.

\section{Preliminaries}\label{Section...Preliminaries}
\subsection{Notations}\label{Subsection...Notations} 
In this work, $n$ is  the number of input samples, and $m$ is the width of the neural network. 
The set $[n]=\{1,2, \ldots, n\}$ is introduced, and the standard Big-O and Big-Omega notations are respectively represented by $\fO(\cdot)$ and $\Omega(\cdot)$.    The notation $\fN(\vmu, \Sigma)$ specifies the normal distribution characterized by mean $\vmu$ and covariance $\Sigma$.
The norms are defined as follows: the vector $L^2$ norm by $\Norm{\cdot}_2$, the vector or function $L^{\infty}$ norm by $\Norm{\cdot}_{\infty}$, the matrix operator norm by $\Norm{\cdot}_{2\to 2}$, and the matrix Frobenius norm by $\Norm{\cdot}_{\mathrm{F}}$. For any matrix $\mA$, its smallest eigenvalue is denoted by $\lambda_{\min}(\mA)$. The tensor product of two vectors is represented by $\otimes$, and the Hadamard product of two matrices by $\odot$.
Additionally, the set of analytic functions $f(\cdot):\sR\to\sR$ is denoted by $\fC^{\omega}(\sR)$, and the standard inner product is indicated by $\left<\cdot,\cdot\right>$.  Finally,  we  say that an event holds
with high probability~(see \cite{Huang2020NTH}), if it holds with probability at least $1-\exp\left(- m^{\epsilon}\right)$ for some  $\epsilon>0$.  This terminology and notation  set the foundation for the subsequent discussion and analysis within the paper.
\subsection{Problem Setup}\label{Subsection...Problem-Setup}
We consider a NN with $L$ hidden layers, where for any $l\in[L]$,
\begin{equation}\label{eq...text...Prelim...L-Layer-NN}
\begin{aligned} 
\vx^{[l]}&=  \sigma(\mW^{[l]}\vx^{[l-1]}),  
\end{aligned} 
\end{equation}
where we identify $\vx^{[0]}:=\vx\in\sR^d$,  $\mW^{[1]}\in \sR^{m\times d}$  and $\mW^{[l]}\in \sR^{m\times m}$  are the weight matrices, and  $\sigma(\cdot)$ is the activation function.   The output of the   $L$-layer NN reads
\begin{equation}\label{eq...text...Prelim...NN-Final-Output} 
      f_{\vtheta}(\vx):=f (\vx,\vtheta) =\va^\T \vx^{[L]},
\end{equation}
where $\va\in \sR^m$.
We denote the vector containing all parameters by 
\begin{equation}\label{eq...text...Prelim...NN-Parameter} 
\vtheta:= 
 \mathrm{vec} \left(\va,  \mW^{[1]} ,  \mW^{[2]} ,\dots,\mW^{[L]}\right),
\end{equation}
and  for any $l\in[L]$, we identify  
\begin{equation}\label{vec}
{\vtheta}_{\mW^{[l]}}:=\mathrm{vec} \left( {{\mW}^{[l]}}\right),~~{\vtheta}_{\va}:={{\va}},
\end{equation}
and    $\sigma\left(\vW^{[l]}\vx^{[l-1]}\right)$ as $\sigma_{[l]}(\vx)$, and the diagonal matrix generated by the  $r$-th derivatives of $\sigma(\cdot)$   applied coordinate-wisely to $\mW^{[l]}\vx^{[l-1]}$., i.e., $\mathrm{diag} \left(\sigma^{(r)}(\mW^{[l]}\vx^{[l-1]})\right),$ by $\vsigma^{(r)}_{[l]}(\vx),$ where $r\geq 1,$ 
and the empirical risk reads
\begin{equation}\label{eq...text...Prelim...Empirical-Loss}
    R_{\fS}(\vtheta)=\frac{1}{2n}\sum_{i=1}^n {(f_{\vtheta}(\vx_i)-y_i)}^2.
\end{equation}
As  we denote hereafter that  for all $i\in[n]$, 
\[
e_{i}  :=e_i(\vtheta  ) := \vf_{\vtheta }(\vx_i ) - y_i,  
\]
then  the empirical risk also reads 
$R_{\fS}(\vtheta)=\frac{1}{2n}\sum_{i=1}^n e_i^2.$ 
To write out the training dynamics based on gradient descent~(GD) at the continuous limit, for any $l\in[L-1]$,  we   define  
\begin{equation}\label{eq...text...Prelim...Special-Matrices}
    \mE^{[l]}(\vx):= \vsigma^{(1)}_{[l]}(\vx)\left(\mW^{[l+1]}\right)^\T,
\end{equation}
then the training dynamics read: For any time $t\geq 0$,
\begin{equation}\label{eqgroup...text...Prelim...GD-Dynamics} 
\left\{ 
\begin{aligned} 
\frac{\mathrm{d} \boldsymbol{W}^{[l]}}{\mathrm{d} t}  &=-\frac{\mathrm{d} R_{\fS}(\boldsymbol{\theta})}{\mathrm{d} \boldsymbol{W}^{[l]}} =-\frac{1}{n} \sum_{i=1}^n e_i\prod_{k=l}^{L-1}\mE^{[k]}(\vx_i)
 \vsigma_{[L]}^{(1)}\left(\vx_i\right)\va   \otimes  \vx_i^{[l-1]},~~l\in[L],\\
\frac{\mathrm{d} \va }{\mathrm{d} t}  &=-\frac{\mathrm{d} R_{\fS}(\boldsymbol{\theta})}{\mathrm{d} \va} =-\frac{1}{n} \sum_{i=1}^{n}e_i \vx_{i}^{[L]}.
\end{aligned}
\right.  
\end{equation}
We initialize the parameters $\vtheta$  following: For any   $l\in[L]$,
\begin{equation}\label{eq...text...Prelim...Initialization-Scheme}
    \mW^{[l]}_{i,j}(0)\sim \fN(\vzero, \beta_l^2), \quad\va_k(0)\sim \fN(0, \beta_{L+1}^2),
\end{equation}
where $\{\beta_l\}_{l=1}^{L+1}$ are positive scaling factors, and  the parameters can be normalized into  
\begin{equation}\label{eq...text...Prelim...Normalization-on-Parameters}
     \frac{\overline{\mW}^{[l]}}{\sqrt{m}}=\frac{1}{\sqrt{m}\beta_l} {\mW}^{[l]}, \quad\frac{\Bar{\va}}{\sqrt{m}} =\frac{1}{\sqrt{m}\beta_{L+1}}\va.
\end{equation}
Throughout this paper, we   refer to  all the  bar-parameters   as the \textbf{normalized} parameters, and we denote further that the vector containing all normalized  parameters by 
\begin{equation}\label{eq...text...Vectorized-Normalized-Parameters}
\Bar{\vtheta}:= \mathrm{vec} \left(\frac{\bar{\va}}{\sqrt{m}}, \frac{\overline{\mW}^{[1]}}{\sqrt{m}}, \frac{\overline{\mW}^{[2]}}{\sqrt{m}}, \dots,\frac{\overline{\mW}^{[L]}}{\sqrt{m}}
 \right),
\end{equation}
Finally,    the loss dynamics  of the empirical risk reads
\begin{equation*}
\begin{aligned}
\frac{\D}{\D t} R_\fS(\vtheta)& =\frac{1}{2n}\sum_{i=1}^n\frac{\D e_i^2}{\D t}=\frac{1}{ n}\sum_{i=1}^ne_i\frac{\D e_i }{\D t}\\
    &=-\frac{1}{ n}\sum_{i,j=1}^ne_i\left< \nabla_{\vtheta} f_{\vtheta}(\vx_i),  \nabla_{\vtheta} f_{\vtheta}(\vx_j)  \right>e_j\\
    &\leq -\frac{2 }{n}\lambda_{\min}\left(\mG(\vtheta)\right)R_\fS(\vtheta),
\end{aligned}
\end{equation*}
where $\mG(\vtheta)$ is the Gram matrix, and we observe that   decay rate of  the empirical risk is determined by the least eigenvalue of $\mG(\vtheta)$, whose components read 
\[
\mG(\vtheta):=[\mG_{ij}]_{n\times n}:=[\mG(\vx_i, \vx_j)]_{n\times n}:=[\left< \nabla_{\vtheta} f_{\vtheta}(\vx_i),  \nabla_{\vtheta} f_{\vtheta}(\vx_j)  \right>]_{n\times n}.
\]
We remark that 
$\mG(\vtheta):=\sum_{l=1}^{L+1} \mG^{[l]} (\vtheta),$
whereas for $l=L+1$,
\begin{equation} \label{eq...text...Prelim...Gram-Order-L+1}
\begin{aligned}
 \mG^{[L+1]}   (\vtheta)&:=\left[\mG_{ij}^{[L+1]}\right]_{n\times n}:=\left[\mG^{[L+1]}(\vx_i, \vx_j)\right]_{n\times n}\\
 &:=[\left< \nabla_{\va} f_{\vtheta}(\vx_i),  \nabla_{\va} f_{\vtheta}(\vx_j)  \right>]_{n\times n} =\left[\left< \vx_{i}^{[L]},  \vx_{j}^{[L]}  \right>\right]_{n\times n},
 \end{aligned}
\end{equation}
and for $l=L$, 
\begin{equation} \label{eq...text...Prelim...Gram-Order-L}
\begin{aligned}
\mG^{[L]}   (\vtheta)&:=\left[\mG_{ij}^{[L]}\right]_{n\times n}:=\left[\mG^{[L]}(\vx_i, \vx_j)\right]_{n\times n}\\
&:=[\left< \nabla_{\mW^{[L]}} f_{\vtheta}(\vx_i),  \nabla_{\mW^{[L]}} f_{\vtheta}(\vx_j)  \right>]_{n\times n} \\
&=\left[
\left< 
 \vsigma_{[L]}^{(1)}\left(\vx_i\right)\va ,  
 \vsigma_{[L]}^{(1)}\left(\vx_j\right)\va \right>\left< \vx_{i}^{[L-1]},  \vx_{j}^{[L-1]}  \right>\right]_{n\times n},
 \end{aligned}
\end{equation}
and finally,   for $l\in[L-1]$,
\begin{equation} \label{eq...text...Prelim...Gram-Order-l<L}
\begin{aligned}
 \mG^{[l]}   (\vtheta)&:=\left[\mG_{ij}^{[l]}\right]_{n\times n}:=\left[\mG^{[l]}(\vx_i, \vx_j)\right]_{n\times n}\\
 &:=[\left< \nabla_{\mW^{[l]}} f_{\vtheta}(\vx_i),  \nabla_{\mW^{[l]}} f_{\vtheta}(\vx_j)  \right>]_{n\times n} \\
 &~=\left[
  \left< 
\prod_{k=l}^{L-1}\mE^{[k]}(\vx_i)
 \vsigma_{[L]}^{(1)}\left(\vx_i\right)\va ,  
\prod_{k=l}^{L-1}\mE^{[k]}(\vx_j)
 \vsigma_{[L]}^{(1)}\left(\vx_j\right)\va \right>\left< \vx_{i}^{[l-1]},  \vx_{j}^{[l-1]}  \right>\right]_{n\times n}.
 \end{aligned}
\end{equation}
\subsection{Activation Functions and Input Samples}\label{Subsection...Activation-Functions-Input-Samples}
We shall impose some   technical conditions  on activation, samples and scaling factors.
\begin{assumption}\label{Assumption....Activation-Function}
We assume that the activation function $\sigma\in\fC^{\omega}(\sR)$ and is not a polynomial function, and  its function value at $0$ satisfy  
     ${\sigma(0)}=0.$
Moreover,  there exists a universal constant $C>0$,   such that  its first  and second derivatives  satisfy 
\begin{equation}\label{eq...assump...Activation-Function...Bounded-First-Second-Derivative}
{\sigma^{(1)}(0)}=1,\quad\Norm{\sigma^{(1)}(\cdot)}_{\infty}\leq C,\quad\Norm{\sigma^{(2)}(\cdot)}_{\infty}\leq C, 
\end{equation}
and
\begin{equation}\label{eq...assump...Activation-Function...Left-Limit-Right-Limit}
\lim_{z\to-\infty}{\sigma^{(1)}(z)}=a,\quad\lim_{z\to+\infty}{\sigma^{(1)}(z)}=b, 
\end{equation}
where $a\neq b$.
\end{assumption}
\begin{remark}
We remark that Assumption \ref{Assumption....Activation-Function} can be satisfied by by using the scaled SiLU activation: $$\sigma(x)=\frac{2x}{1+\exp(-x)},$$
where $a=0$ and $b=2$.
 
Some other functions also satisfy this assumption, for instance, the modified scaled softplus activation: $$\sigma(x)=2\left(\log(1+\exp(x))-\log 2\right),$$where $a=0$ and $b=2$.
\end{remark}
\begin{assumption}\label{Assumption...Data}
We assume  that  for all $i\in[n]$,  there exists   constant $c>0$, such that   the training inputs and labels  $\fS =\{(\vx_i,y_i)\}_{i=1}^n$  satisfy  
\[\frac{1}{c}\leq\Norm{\vx_{i}}_2, \quad\Abs{y_{i}}\leq c,\] 
and  all training inputs are non-parallel with each other.
\end{assumption}

Assumption \ref{Assumption...Data} guarantees that   the  normalized  Gram matrices defined in Section \ref{Subsection...Normalized-Outputs-and-Gram-Matrices} are strictly positive definite.
\begin{assumption}\label{Assumption...Limit-Existence}
We assume that for all $l\in[L+1]$, the following limit exists 
\begin{equation} \label{eq...assump...Scaling-Limit-Existence}
{\gamma}_l:=\lim_{m\to\infty} -\frac{\log \beta_l}{\log m}.
\end{equation}
\end{assumption}
\begin{remark}
It is noteworthy that in the context of a fully connected layer with $n_{\textrm{in}}$ input units and $n_{\textrm{out}}$ output units, Xavier initialization~\cite{glorot2010understanding} and He initialization~\cite{he2015delving} are commonly employed for initializing weights. Xavier initialization initializes weights using a Gaussian distribution with zero mean and variance $\frac{2}{n_{\textrm{in}} + n_{\textrm{out}}}$, while He initialization utilizes a Gaussian distribution with zero mean and variance $\frac{2}{n_{\textrm{in}}}$.  In both Xavier and He initialization,   choice on the initialization scale is adjusted based on the width $m$.  Therefore, in the context of our investigation into the behavior of overparameterized NNs, the assumption of the existence of ${\gamma_l}_{l=1}^{L+1}$ as $m$ tends to infinity is a natural extension.
\end{remark}
\section{Technique Overview and Main Results}\label{Section...Main-Results}
In this part, we   describe some technical tools and present the sketch of proofs for our theorem. The statement of our theorem can be found in Section \ref{Subsection...Statement-of-Thm}.
Before we proceed, several updated notations and definitions are required.
\subsection{Normalized  Outputs and    Gram Matrices}\label{Subsection...Normalized-Outputs-and-Gram-Matrices}
We start by a $L$-layer \textbf{normalized NN model}. 
\begin{definition}[Normalized NN]\label{Definition...Normalized-NNs}
    Given a $L$-layer NN, then the \textbf{normalized NN} reads:
\begin{equation}\label{eq...Definition...Technique...Normalized-NN} 
\left\{ 
\begin{aligned}
\Bar{\vx}^{[l]}&=\frac{1}{\sqrt{m}}\frac{\sigma\left(\left((\sqrt{m})^{l-1}\prod_{k=1}^l\beta_k\right)\overline{\mW}^{[l]}\Bar{\vx}^{[l-1]}\right)}{(\sqrt{m})^{l-1}\prod_{k=1}^l\beta_k},~~l\in[L], \\ 
\Bar{f}_{\vtheta}(\vx)&=\left(\frac{\bar{\va}}{\sqrt{m}}\right)^\T \Bar{\vx}^{[L]}.
\end{aligned} 
\right.
\end{equation}
\end{definition}

It is noteworthy that  the condition  on the activation function $\sigma(\cdot)$ imposed in  Assumption \ref{Assumption....Activation-Function} is crucial in that   for any $l\in[L]$,  it guarantees  $\Norm{\bar{\vx}^{[l]}}_2\sim \fO(1)$. 

Given  new  scaling factors $\{\alpha_l\}_{l=1}^{L+1}$ with 
\begin{equation}\label{eq...text...Technique...Alpha-and-Beta}
 \alpha_l:=\sqrt{m}\beta_l,    
\end{equation}
and 
\begin{equation} \label{eq...text...Technique...Kappa}
\kappa:=\prod_{k=1}^{L+1}\alpha_k,  \end{equation} 
thus we have  the scaling relations between   $\left\{\left\{{\vx}^{[l]}\right\}_{l=1}^L,{f}_{\vtheta}(\vx)\right\}$ and $\left\{\left\{\Bar{\vx}^{[l]}\right\}_{l=1}^L,\Bar{f}_{\vtheta}(\vx)\right\}$,  
\begin{equation}  \label{eq...text...Technique...Relation-between-Normalized-Outputs-and-Outputs}
\left\{
\begin{aligned}
{\vx}^{[l]}&=\left(\prod_{k=1}^l\alpha_k\right)\Bar{\vx}^{[l]},~~l\in[L], \\
{f}_{\vtheta}(\vx)&=\left(\prod_{k=1}^{L+1}\alpha_k\right)\Bar{f}_{\vtheta}(\vx)=\kappa\Bar{f}_{\vtheta}(\vx).   
\end{aligned}
\right.
\end{equation}
Consequently,  to write out the   normalized  Gram matrices $\left\{\overline{\mG}^{[l]}(\vtheta)\right\}_{l=1}^{L+1}$,   for any $l \in [L-1]$,  we firstly normalize   $\mE^{[l]}(\vx)$ by
\begin{equation}\label{eq...text...Technique...Normalized-Special-Matrices} 
    \overline{\mE}^{[l]}(\vx) :=\vsigma^{(1)}_{[l]}(\vx)\left(\frac{\overline{\mW}^{[l+1]}}{\sqrt{m}}\right)^\T, 
\end{equation}
then we proceed to define the auxiliary matrices $\left\{\overline{\mH}^{[l]}(\vtheta)\right\}_{l=1}^{L}$.
\begin{definition}\label{Definition...Auxiliary-Matrices-H(L)}
    Given sample $\fS =\{(\vx_i,y_i)\}_{i=1}^n$,  $\left\{\overline{\mH}^{[l]}(\vtheta)\right\}_{l=1}^{L}$ are defined as follows:
\begin{equation} \label{eq...definition...Technique...Normalized-H-Matrix}
\left\{
\begin{aligned}
\overline{\mH}_{ij}^{[L]}   (\vtheta)&:=\left< 
 \vsigma_{[L]}^{(1)}\left(\vx_i\right)\frac{\Bar{\va}}{\sqrt{m}},   
 \vsigma_{[L]}^{(1)}\left(\vx_j\right)\frac{\Bar{\va}}{\sqrt{m}}\right>,\\
\overline{\mH}_{ij}^{[l]}   (\vtheta) &:= \left<
 \prod_{k=l}^{L-1}\overline{\mE}^{[k]}(\vx_i)
 \vsigma_{[L]}^{(1)}\left(\vx_i\right)\frac{\Bar{\va}}{\sqrt{m}},  
\prod_{k=l}^{L-1}\overline{\mE}^{[k]}(\vx_j)
 \vsigma_{[L]}^{(1)}\left(\vx_j\right)\frac{\Bar{\va}}{\sqrt{m}}\right>,~~l\in[L-1].
\end{aligned}
\right.
\end{equation}
\end{definition} 
 
Hence on the basis of \eqref{eq...text...Prelim...Gram-Order-L+1}, \eqref{eq...text...Prelim...Gram-Order-L}, and \eqref{eq...text...Prelim...Gram-Order-l<L},   we obtain that  
\begin{definition}[Normalized Gram Matrices]\label{Definition...Normalized-Gram-Matrices-G(L)}
    Given sample $\fS =\{(\vx_i,y_i)\}_{i=1}^n$ and  $\left\{\overline{\mH}^{[l]}(\vtheta)\right\}_{l=1}^{L}$,  the  normalized Gram matrices $\left\{\overline{\mG}^{[l]}(\vtheta)\right\}_{l=1}^{L+1}$ are defined as follows:  
\begin{equation} \label{eq...definition...Technique...Normalized-Gram-Matrices}
\left\{
\begin{aligned}
\overline{\mG}_{ij}^{[L+1]}   (\vtheta) &:= \left< \bar{\vx}_{i}^{[L]},  \bar{\vx}_{j}^{[L]}  \right>,\\
\overline{\mG}_{ij}^{[l]}   (\vtheta) &:=  \overline{\mH}_{ij}^{[l]}   (\vtheta)\left< \bar{\vx}_{i}^{[l-1]},  \bar{\vx}_{j}^{[l-1]}  \right>,~~l\in[L].
\end{aligned}
\right.
\end{equation}
\end{definition} 
 
Most importantly,   the scaling relations between the     Gram matrices $\left\{ {\mG}^{[l]}(\vtheta)\right\}_{l=1}^{L+1}$ and the   normalized  Gram matrices $\left\{\overline{\mG}^{[l]}(\vtheta)\right\}_{l=1}^{L+1}$ read
\begin{equation}\label{eq...text...Scaling-Relation-between-Normalized-and-Unnormalized-Gram-Matrix}
\overline{\mG}^{[l]}   (\vtheta)= \frac{\alpha_{l}^2}{\kappa^2}{\mG}^{[l]}(\vtheta),~~l\in[L+1].
\end{equation}
\subsection{Normalized Limiting Gram Matrices}\label{Subsection...Normalized-Limiting-Gram-Matrices}
As $m\to \infty$, we define  the normalized limiting Gram matrices $\left\{\mK^{[l]}\right\}_{l=1}^{L+1}$ to characterize the limiting behavior of  $\left\{\overline{\mG}^{[l]}(\vtheta(t)))\right\}_{l=1}^{L+1}$ at    time $t=0$, i.e.,  $\left\{\overline{\mG}^{[l]}(\vtheta^0)\right\}_{l=1}^{L+1}$.  

Firstly, we  remark  that definition of the limiting Gram matrix $\mK^{[L+1]}$ depends on the  auxiliary matrices   $\left\{\widetilde{\mK}^{[l]} \right\}_{l=1}^L$ and $\left\{\widetilde{\mA}^{[l]} \right\}_{l=1}^{L}$.
\begin{definition}\label{Definition...Auxiliary-Matrices-K+A}
Given sample $\fS =\{(\vx_i,y_i)\}_{i=1}^n$, $\left\{\widetilde{\mK}^{[l]} \right\}_{l=1}^L$ and $\left\{\widetilde{\mA}^{[l]} \right\}_{l=1}^{L}$ are  recursively defined as follows:
\begin{equation}\label{eqgroup...definition...Bootstramp-Matrices-for-Normalized-L+1-th-Gram-Matrix}
\left\{
\begin{aligned}
\widetilde{\mK}^{[0]}_{ij}&:=\left<\vx_i^{[0]},\vx_j^{[0]}\right>:=\left<\vx_i,\vx_j\right>,\\
\widetilde{\mA}^{[l]}_{ij}&:=\begin{pmatrix}\widetilde{\mK}_{ii}^{[l-1]}&\widetilde{\mK}_{ij}^{[l-1]}\\
\widetilde{\mK}_{ji}^{[l-1]}&\widetilde{\mK}_{jj}^{[l-1]}\end{pmatrix},~~l\in[L],\\
\widetilde{\mK}^{[l]}_{ij}&:= \Exp_{(u,v)^{\T}\sim \fN\left(\vzero, \widetilde{\mA}^{[l]}_{ij}\right)}  \frac{\sigma\left(\left(\frac{1}{\sqrt{m}}\prod_{k=1}^l\alpha_k\right)u\right)}{\frac{1}{\sqrt{m}}\prod_{k=1}^l\alpha_k}\frac{\sigma\left(\left(\frac{1}{\sqrt{m}}\prod_{k=1}^l\alpha_k\right)v\right)}{\frac{1}{\sqrt{m}}\prod_{k=1}^l\alpha_k},~~l\in[L].
\end{aligned}
\right.
\end{equation}
\end{definition}
\noindent
For  definition of the rest of the limiting Gram matrices $\left\{\mK^{[l]}\right\}_{l=1}^{L}$, we define   the  auxiliary matrices $\left\{ {\widetilde{\mI}}^{[l]} \right\}_{l=1}^L$.
\begin{definition}\label{Definition...Auxiliary-Matrices-I}
Given sample $\fS =\{(\vx_i,y_i)\}_{i=1}^n$  and $\left\{\widetilde{\mA}^{[l]} \right\}_{l=1}^{L}$, $\left\{ {\widetilde{\mI}}^{[l]} \right\}_{l=1}^L$ are defined as follows:
\begin{equation}\label{eqgroup...definition...Bootstramp-Matrices-for-Normalized-l<L+1-th-Gram-Matrix}
\begin{aligned} 
\widetilde{\mI}^{[l]}_{ij}&:= \Exp_{(u,v)^{\T}\sim \fN\left(\vzero, \widetilde{\mA}^{[l]}_{ij}\right)}   {\sigma^{(1)}\left(\left(\frac{1}{\sqrt{m}}\prod_{k=1}^l\alpha_k\right)u\right)}  {\sigma^{(1)}\left(\left(\frac{1}{\sqrt{m}}\prod_{k=1}^l\alpha_k\right)v\right)},~~l\in[L].
\end{aligned}
\end{equation}
\end{definition}

Therefore, we obtain that 
\begin{definition}[Normalized Limiting Gram Matrices]\label{Definition...Limiting-Gram-Matrices}
Given sample $\fS =\{(\vx_i,y_i)\}_{i=1}^n$, $\left\{\widetilde{\mK}^{[l]} \right\}_{l=1}^L$,  $\left\{\widetilde{\mA}^{[l]} \right\}_{l=1}^{L}$, and     $\left\{ {\widetilde{\mI}}^{[l]} \right\}_{l=1}^L$, $\left\{ {\mK}^{[l]} \right\}_{l=1}^{L+1}$ are defined as follows:
\begin{equation}\label{eqgroup...definition...Normalized-Limiting-Gram-Matrix}
\left\{
\begin{aligned} 
\mK_{ij}^{[L+1]}&:= \widetilde{\mK}_{ij}^{[L]},\\
  \mK_{ij}^{[l]}&:=\widetilde{\mK}_{ij}^{[l-1]}\prod_{k=l}^L\widetilde{\mI}_{ij}^{[k]},~~l\in[L].
\end{aligned}
\right.
\end{equation}
\end{definition}
\noindent
As for the   positive-definiteness of    $\left\{\mK^{[l]}\right\}_{l=1}^{L+1}$, we have
\begin{proposition}\label{Proposition...Positive-Definiteness-of-Limiting-Gram-Matrix}
Suppose $\sigma(\cdot)$ satisfies conditions  in Assumption \ref{Assumption....Activation-Function}, and $\fS$     satisfies conditions  in  Assumption 
 \ref{Assumption...Data}, then     for any $i\in [n]$ and $l\in[L]$,  there exist some positive  constants $\mu_1, \mu_2>0$, such that 
\begin{equation}\label{eq...Proposition....Estimates-on-Diagonal-Elements}
\left\{
\begin{aligned}
\mu_1^L\leq {\mu_1^l}&\leq \widetilde{\mK}^{[l]}_{ii} \leq \mu_2^l\leq \mu_2^L,\\
\mu_1   &\leq \widetilde{\mI}^{[l]}_{ii} \leq \mu_2.
 \end{aligned} 
 \right.
 \end{equation}
Moreover, as we denote $\lambda_\fS:=\min_{l\in[L+1]}    \lambda_{\min}\left(\mK^{[l]}\right),$ then $\lambda_\fS>0$.
\end{proposition}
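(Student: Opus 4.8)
The plan is to prove the two-sided bounds on the diagonal entries first, then derive the positive-definiteness $\lambda_{\fS}>0$ from them together with the non-parallel assumption on inputs. For the diagonal bounds I would proceed by induction on $l$. The base case records that $\widetilde{\mK}^{[0]}_{ii}=\Norm{\vx_i}_2^2$, which lies in $[c^{-2},\,\Norm{\vx_i}_2^2]$ by Assumption \ref{Assumption...Data}; more precisely one needs also an upper bound on $\Norm{\vx_i}_2$, which either follows from Assumption \ref{Assumption...Data} as stated or can be absorbed into the constants $\mu_1,\mu_2$. For the inductive step, observe that $\widetilde{\mK}^{[l]}_{ii}=\Exp_{u\sim\fN(0,\widetilde{\mK}^{[l-1]}_{ii})}\bigl(\phi_l(u)\bigr)^2$ where $\phi_l(z):=\sigma(c_l z)/c_l$ with $c_l:=\tfrac{1}{\sqrt m}\prod_{k=1}^l\alpha_k$. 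Using $\sigma(0)=0$, $\sigma^{(1)}(0)=1$, and $\Norm{\sigma^{(1)}}_\infty\le C$ from Assumption \ref{Assumption....Activation-Function}, one has $|\phi_l(z)|\le C|z|$ pointwise, giving the upper bound $\widetilde{\mK}^{[l]}_{ii}\le C^2\widetilde{\mK}^{[l-1]}_{ii}$; for the lower bound one uses that $\phi_l$ is non-polynomial analytic with $\phi_l'(0)=1$, hence $\Exp(\phi_l(u))^2$ is bounded below by a positive multiple of $\Var(u)=\widetilde{\mK}^{[l-1]}_{ii}$ (quantitatively, $\Exp(\phi_l(u))^2\ge (\phi_l'(0))^2\,\Exp u^2 \cdot(\text{const})$ via a Taylor/Gaussian-moment argument, or simply via $\Exp \phi_l(u)^2 \ge \tfrac14 \Exp u^2$ for $u$ in a suitable range when the variance is bounded). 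Setting $\mu_1 = c_0 \min(1,C^2)$-type constant and $\mu_2$ a matching upper constant and iterating gives $\mu_1^l\le\widetilde{\mK}^{[l]}_{ii}\le\mu_2^l$. The bound $\mu_1\le\widetilde{\mI}^{[l]}_{ii}\le\mu_2$ (after possibly shrinking $\mu_1$/enlarging $\mu_2$) is similar but easier: $\widetilde{\mI}^{[l]}_{ii}=\Exp_{u\sim\fN(0,\widetilde{\mK}^{[l-1]}_{ii})}\bigl(\sigma^{(1)}(c_l u)\bigr)^2$, which is $\le C^2$ trivially, and bounded below because $\sigma^{(1)}$ is continuous with $\sigma^{(1)}(0)=1$, so its square exceeds $1/4$ on a neighborhood of $0$ whose Gaussian mass is bounded below once $\widetilde{\mK}^{[l-1]}_{ii}$ sits in the known interval.

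Next, for the positive-definiteness of each $\mK^{[l]}$, I would treat $\mK^{[L+1]}=\widetilde{\mK}^{[L]}$ and the $\mK^{[l]}=\widetilde{\mK}^{[l-1]}\odot\prod_{k=l}^{L}\widetilde{\mI}^{[k]}$ cases. The key structural fact is that all these matrices are Hadamard products and/or Gaussian-kernel Gram matrices, so Schur's product theorem reduces everything to showing the relevant factors are positive definite (and at least one strictly so). For $\widetilde{\mK}^{[l]}$, I would argue inductively that it is strictly positive definite: it is the Gram matrix of the functions $u\mapsto\phi_l(u)$ pushed through the jointly Gaussian vector with covariance built from $\widetilde{\mK}^{[l-1]}$, and since $\phi_l$ is analytic and non-polynomial, the classical argument (as in the NTK literature, e.g. using the Hermite expansion of $\phi_l$ having infinitely many nonzero coefficients together with the inputs $\vx_i$ being pairwise non-parallel) shows the feature maps are linearly independent, hence $\widetilde{\mK}^{[l]}\succ 0$. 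The same Hermite/analyticity argument shows $\widetilde{\mI}^{[k]}\succeq 0$ (it is a genuine Gram matrix of $\sigma^{(1)}$-features, so automatically PSD with positive diagonal from the previous paragraph). Then $\mK^{[l]}=\widetilde{\mK}^{[l-1]}\odot\widetilde{\mI}^{[l]}\odot\cdots\odot\widetilde{\mI}^{[L]}$ is a Hadamard product of one strictly positive definite matrix with several PSD matrices having strictly positive diagonals; a standard refinement of Schur's theorem (the Hadamard product of a PD matrix with a PSD matrix of strictly positive diagonal is PD) gives $\mK^{[l]}\succ 0$. Taking the minimum over the finitely many $l\in[L+1]$ yields $\lambda_{\fS}=\min_l\lambda_{\min}(\mK^{[l]})>0$.

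The main obstacle I anticipate is making the lower bounds genuinely uniform. Two subtleties need care: first, the scalars $c_l=\tfrac1{\sqrt m}\prod_{k=1}^l\alpha_k$ depend on $m$, and although the $\widetilde{\mK},\widetilde{\mI}$ matrices are written with these $m$-dependent constants, the diagonal bounds $\mu_1,\mu_2$ and the eigenvalue bound $\lambda_{\fS}$ must hold with constants independent of $m$ (or one must interpret these matrices already at the $m\to\infty$ limit, in which case $c_l\to\infty$ when $\gamma_l<1$ and one must instead use the limiting behavior $\sigma(c z)/c\to (\text{linear in }z$ via $a,b)$ coming from \eqref{eq...assump...Activation-Function...Left-Limit-Right-Limit}); resolving which regime is intended and checking the lower bound survives the limit is the delicate point. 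Second, the quantitative lower bound on $\Exp\phi_l(u)^2$ in terms of $\widetilde{\mK}^{[l-1]}_{ii}$ must not degrade to zero as $l$ grows — this is why the bound is stated as $\mu_1^l$ rather than a fixed constant, and one must verify the per-layer multiplicative constant $\mu_1$ is strictly positive and, crucially, that the analogous per-layer lower bound on $\widetilde{\mI}^{[l]}_{ii}$ does not collapse. Beyond this, the strict positive-definiteness step relies on the "infinitely many nonzero Hermite coefficients" property of analytic non-polynomial activations and their derivatives; this is standard but should be cited or proved carefully, especially for $\sigma^{(1)}$, whose nondegeneracy as a feature map is what ultimately feeds into the $\mK^{[l]}$ with $l\le L$.
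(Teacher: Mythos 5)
Your skeleton follows the paper's proof quite closely in outline: induction on $l$ to propagate the diagonal bounds one layer at a time, and a Hadamard-product argument to pass from positive definiteness of $\widetilde{\mK}^{[l-1]}$ and the diagonal lower bound on the $\widetilde{\mI}^{[k]}$ to positive definiteness of $\mK^{[l]}$ (the paper uses a quantitative version, Oppenheim's inequality plus the Hong--Pan determinant bound, where you invoke the qualitative refinement of Schur's theorem; that difference alone would be harmless for the bare claim $\lambda_\fS>0$). The genuine gap is the point you yourself flag as ``the delicate point'' and then leave unresolved: the bounds must be uniform in the scale $\eps_l=\frac{1}{\sqrt m}\prod_{k=1}^{l}\alpha_k$, which is an $m$-dependent quantity ranging over all of $(0,\infty)$ as $m$ grows (it blows up whenever $\sum_{k\le l}\gamma_k<\tfrac{l}{2}$, which is exactly the regime of interest). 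Your two lower-bound mechanisms both collapse in that regime: the Taylor-at-zero bound $\Exp[\sigma(\eps u)/\eps]^2\gtrsim(\sigma^{(1)}(0))^2\Exp u^2$ is only valid for small $\eps$, and for $\widetilde{\mI}^{[l]}_{ii}$ the ``Gaussian mass of a neighborhood of $0$ where $(\sigma^{(1)})^2\ge 1/4$'' argument gives a bound proportional to the mass of $\{|u|\le \widetilde M/\eps\}$, which tends to $0$ as $\eps\to\infty$. So as sketched, your $\mu_1$ degrades with $m$, and since $\mu_1$, $\mu_2$ and $\lambda_\fS$ feed into the width requirements $m=\Omega(n^2\lambda_\fS^{-2}\log n)$ etc., an $m$-dependent constant is not acceptable.

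The paper closes exactly this gap with its Lemma \ref{A-Lemma...Second-Moment-Boundedness} and Lemma \ref{A-Lemma...Gram-Matrices-without-Derivative}: one computes the limits of $\Exp[\sigma(\eps u)/\eps]^2$, $\Exp[\sigma^{(1)}(\eps u)]^2$ and of the kernel $\mG_\eps$ both as $\eps\to 0$ (linear kernel, $\sigma^{(1)}(0)=1$) and as $\eps\to\infty$ (where the limits $\lim_{z\to\pm\infty}\sigma^{(1)}(z)=a,b$ with $a\neq b$ yield strictly positive limits such as $\tfrac{a^2+b^2}{2}$ and a ReLU-type kernel), and then uses continuity in $\eps$ plus compactness of the intermediate range $[\eps_1,M_1]$ (there invoking the Hermite/analyticity argument of Du et al.\ that you also cite) to get constants independent of $\eps$. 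This is the entire reason Assumption \ref{Assumption....Activation-Function} imposes the existence of the limits $a\neq b$; your proposal mentions this only parenthetically and never uses it in a bound. A secondary, smaller issue: your per-fixed-$\eps$ Hermite argument for strict positive definiteness of $\widetilde{\mK}^{[l]}$ likewise yields a lower bound that is not uniform in $\eps$ without the limiting analysis, and for layers $l\ge 2$ you would additionally need to control the nondegeneracy of the $2\times 2$ covariances $\widetilde{\mA}^{[l]}_{ij}$ (correlation bounded away from $\pm 1$) before the feature-map independence argument applies.
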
 
\begin{proof}
We shall prove relation \eqref{eq...Proposition....Estimates-on-Diagonal-Elements} by induction. The case where $l=1$ is exactly relation \eqref{A-Lemma...eq...Second-Moment-Bound} in Lemma \ref{A-Lemma...Second-Moment-Boundedness}.  Then,  assume that  we already have 
\[
\mu_1^L\leq {\mu_1^{l-1}} \leq \widetilde{\mK}^{[l-1]}_{ii} \leq \mu_2^{l-1}\leq \mu_2^L,
\]
and we observe that 
\begin{equation*}
    \widetilde{\mK}^{[l]}_{ii} = \Exp_{u\sim \fN\left(\vzero, \widetilde{\mK}^{[l-1]}_{ii}\right)}  
   \left[ \frac{\sigma\left(\left(\frac{1}{\sqrt{m}}\prod_{k=1}^l\alpha_k\right)u\right)}{\frac{1}{\sqrt{m}}\prod_{k=1}^l\alpha_k}\right]^2,
\end{equation*}
consequently,  as we set $\eps:=\frac{1}{{\sqrt{m}}}\prod_{k=1}^l\alpha_k$, then based on  Lemma \ref{A-Lemma...Second-Moment-Boundedness}, we obtain that 
\begin{equation}\label{112}
{\mu_1^l} \leq 
\mu_1  \widetilde{\mK}_{ii}^{[l-1]}\leq \widetilde{\mK}^{[l]}_{ii} \leq \mu_2 \widetilde{\mK}_{ii}^{[l-1]} 
 \leq \mu_2^l,
\end{equation}
therefore, based on Lemma \ref{A-Lemma...Gram-Matrices-without-Derivative},  for any $l\in[L]$, $\lambda_{\min}\left(\widetilde{\mK}^{[l]}\right)>0$, hence $\lambda_{\min}\left({\mK}^{[L+1]}\right)>0$.  As for $\mI^{[l]}$, since
\begin{align*}
\widetilde{\mI}_{ii}^{[l]}=\Exp_{u\sim \fN\left(\vzero, \widetilde{\mK}_{ii}^{[l-1]}\right)}   \left[{\sigma^{(1)}\left(\left(\frac{1}{\sqrt{m}}\prod_{k=1}^l\alpha_k\right)u\right)}\right]^2,
\end{align*}
as we set $\eps:=\frac{1}{{\sqrt{m}}}\prod_{k=1}^l\alpha_k$, and \eqref{112} guarantees  validity of    relation \eqref{1122} imposed in  Lemma \ref{A-Lemma...Second-Moment-Boundedness}, we finish the proof for relation  \eqref{eq...Proposition....Estimates-on-Diagonal-Elements}. As we recall that for any $l\in[L]$,
\[\mK^{[l]}= \widetilde{\mK}^{[l-1]}\odot \widetilde{\mI}^{[l]}\odot\widetilde{\mI}^{[l+1]}\cdots\odot\widetilde{\mI}^{[L]},\] then based on Lemma \ref{A-Lemma...Hadamard-Product-is-Positive-Definite},
\begin{align*}
 \lambda_{\min}\left(\mK^{[l]}\right)&\geq \left(\left(\frac{n-1}{n}\right)^{\frac{n-1}{2}}\right)^{L-l+1} \left(\prod_{i=1}^n {\widetilde{\mI}_{ii}^{[L]}} \right)\left(\prod_{i=1}^n {\widetilde{\mI}_{ii}^{[L-1]}} \right)\cdots\left(\prod_{i=1}^n {\widetilde{\mI}_{ii}^{[l]}} \right)\mathrm{det}\left(\widetilde{\mK}^{[l-1]}\right)\\
&\geq  \left(\left(\frac{n-1}{n}\right)^{\frac{n-1}{2}} \mu_1^n\right)^{L-l+1}\mathrm{det}\left(\widetilde{\mK}^{[l-1]}\right)>0.
 \end{align*}
\end{proof}
\subsection{Least Eigenvalue of Normalized Gram Matrices at Initial Stage}\label{Subsection...Least-Eigenvalue-of-Normalized-Gram-Matrices}
Our next two propositions serves to demonstrate that  the  normalized Gram matrix $\overline{\mG}^{[l]}(\vtheta^0)$ is close to the normalized limiting Gram matrix $\mK^{[l]}$. For notational simplicity, we define  $\left\{\widetilde{\mB}^{[l]}(\vtheta^0) \right\}_{l=1}^{L-1}$,
\begin{equation}\label{A-text...eqgroup...Bootstramp-Matrices-for-Matrix-B}
\begin{aligned} 
\widetilde{\mB}^{[l]}_{ij}(\vtheta^0)&:=\begin{pmatrix}\left<\Bar{\vx}_i^{[l]}, \Bar{\vx}_i^{[l]}\right> & \left<\Bar{\vx}_i^{[l]}, \Bar{\vx}_j^{[l]}\right>\\
\left<\Bar{\vx}_j^{[l]}, \Bar{\vx}_i^{[l]}\right> & \left<\Bar{\vx}_j^{[l]}, \Bar{\vx}_j^{[l]}\right> \end{pmatrix}. 
\end{aligned}
\end{equation}
\begin{proposition}\label{A-prop...Concentration-on-L+1-th-Gram-Matrix}
Suppose $\sigma(\cdot)$ satisfies conditions  in Assumption \ref{Assumption....Activation-Function}, and $\fS$     satisfies conditions  in  Assumption 
 \ref{Assumption...Data}, then     for any $i, j\in [n]$ and $l\in[L]$, the following holds
\begin{equation}\label{A-Prop...eq...C-L+1-G-M...Concentration-on-l-th-Gram-Matrix}
 \Prob\left(\Abs{\left< \bar{\vx}_{i}^{[l]},  \bar{\vx}_{j}^{[l]}  \right>-\widetilde{\mK}^{[l]}_{ij} }\geq \eta\right)\leq 2\exp\left(-C_0 m \eta^2\right),  
\end{equation}
for some   constant $C_0>0$ depending on $L$.
\end{proposition}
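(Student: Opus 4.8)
The plan is to establish \eqref{A-Prop...eq...C-L+1-G-M...Concentration-on-l-th-Gram-Matrix} by induction on the layer index $l$, using at each layer a conditioning argument: given the $(l-1)$-st normalized activations, the inner product $\langle\bar{\vx}_i^{[l]},\bar{\vx}_j^{[l]}\rangle$ is an average of $m$ i.i.d.\ sub-exponential variables whose conditional mean is a Gaussian expectation of exactly the type appearing in Definition~\ref{Definition...Auxiliary-Matrices-K+A}. Writing $\eps_l:=\tfrac{1}{\sqrt m}\prod_{k=1}^l\alpha_k$, the normalized recursion of Definition~\ref{Definition...Normalized-NNs} reads $\bar{x}^{[l]}_{k}=\tfrac{1}{\sqrt m}\,\eps_l^{-1}\sigma\big(\eps_l\,(\overline{\mW}^{[l]}\bar{\vx}^{[l-1]})_k\big)$, where the rows of $\overline{\mW}^{[l]}$ are independent with i.i.d.\ $\fN(0,1)$ entries. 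Assumption~\ref{Assumption....Activation-Function} (namely $\sigma(0)=0$ and $\Norm{\sigma^{(1)}}_{\infty}\le C$) yields the scale-free bound $\Abs{\eps^{-1}\sigma(\eps z)}\le C\Abs{z}$ for every $\eps>0$; hence each summand $\eps_l^{-1}\sigma(\eps_l u)\,\eps_l^{-1}\sigma(\eps_l v)$ is dominated by $C^2\Abs{u}\Abs{v}$, a sub-exponential random variable with parameters independent of $m$.

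\textbf{Base case $l=1$.} Here $\bar{\vx}^{[0]}=\vx$ is deterministic, so $\langle\bar{\vx}_i^{[1]},\bar{\vx}_j^{[1]}\rangle=\tfrac1m\sum_{k=1}^m\eps_1^{-1}\sigma(\eps_1 u_{i,k})\,\eps_1^{-1}\sigma(\eps_1 u_{j,k})$ with $(u_{i,k},u_{j,k})^\T\sim\fN(\vzero,\widetilde{\mA}^{[1]}_{ij})$ i.i.d.\ across $k$ and common mean $\widetilde{\mK}^{[1]}_{ij}$. Assumption~\ref{Assumption...Data} bounds $\Norm{\vx_i}_2$, so the sub-exponential norm of each term is $\fO(1)$, and Bernstein's inequality for i.i.d.\ sub-exponential sums gives $\Prob\big(\Abs{\langle\bar{\vx}_i^{[1]},\bar{\vx}_j^{[1]}\rangle-\widetilde{\mK}^{[1]}_{ij}}\ge\eta\big)\le 2\exp(-c_1 m\min\{\eta^2,\eta\})$, which is of the claimed form once $\eta$ is restricted to a bounded range; this is essentially Lemma~\ref{A-Lemma...Second-Moment-Boundedness} applied with $\eps=\eps_1$.

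\textbf{Inductive step.} Assume \eqref{A-Prop...eq...C-L+1-G-M...Concentration-on-l-th-Gram-Matrix} at level $l-1$ for all index pairs, and let $\mathcal{E}$ be the event that $\Abs{\langle\bar{\vx}_i^{[l-1]},\bar{\vx}_j^{[l-1]}\rangle-\widetilde{\mK}^{[l-1]}_{ij}}<\eta$ for all $i,j\in[n]$ simultaneously; a union bound gives $\Prob(\mathcal{E}^{c})\le 2n^2\exp(-C' m\eta^2)$. Conditioning on the first $l-1$ weight matrices (which determine $\{\bar{\vx}_i^{[l-1]}\}_{i=1}^n$ and the event $\mathcal{E}$) and using the independence of $\overline{\mW}^{[l]}$, on $\mathcal{E}$ we have $\langle\bar{\vx}_i^{[l]},\bar{\vx}_j^{[l]}\rangle=\tfrac1m\sum_{k=1}^m\eps_l^{-1}\sigma(\eps_l u_{i,k})\,\eps_l^{-1}\sigma(\eps_l u_{j,k})$ with $(u_{i,k},u_{j,k})^\T\sim\fN(\vzero,\widetilde{\mB}^{[l-1]}_{ij}(\vtheta^0))$ i.i.d.\ across $k$; Proposition~\ref{Proposition...Positive-Definiteness-of-Limiting-Gram-Matrix} together with $\mathcal{E}$ keeps the entries of $\widetilde{\mB}^{[l-1]}_{ij}$ in a bounded interval, so Bernstein again concentrates $\langle\bar{\vx}_i^{[l]},\bar{\vx}_j^{[l]}\rangle$ around its conditional mean $\tau_{ij}:=\Exp_{(u,v)^\T\sim\fN(\vzero,\widetilde{\mB}^{[l-1]}_{ij})}\eps_l^{-1}\sigma(\eps_l u)\,\eps_l^{-1}\sigma(\eps_l v)$ at rate $2\exp(-c m\eta^2)$. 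Finally, since $\widetilde{\mA}^{[l]}_{ij}$ is assembled from $\widetilde{\mK}^{[l-1]}$ while $\widetilde{\mB}^{[l-1]}_{ij}$ is assembled from the empirical inner products, these two covariance matrices differ entrywise by at most $\eta$ on $\mathcal{E}$; a Lipschitz-in-covariance estimate for the functional $\Sigma\mapsto\Exp_{\fN(\vzero,\Sigma)}\eps_l^{-1}\sigma(\eps_l u)\,\eps_l^{-1}\sigma(\eps_l v)$ then gives $\Abs{\tau_{ij}-\widetilde{\mK}^{[l]}_{ij}}\le L_\sigma\eta$. Combining the three pieces with the triangle inequality, rescaling $\eta$, and absorbing the $n^2$ factor closes the induction; iterating through $L$ layers is what makes the final constant $C_0$ depend on $L$.

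\textbf{Main obstacle.} The crux is the uniform Lipschitz-in-covariance estimate for $\Sigma\mapsto\Exp_{(u,v)^\T\sim\fN(\vzero,\Sigma)}[\eps^{-1}\sigma(\eps u)\,\eps^{-1}\sigma(\eps v)]$: the bound must hold with a constant independent of $m$, i.e.\ independent of the scale $\eps=\eps_l$, which may itself diverge or vanish as $m\to\infty$ depending on $\{\alpha_k\}$. The estimate $\Abs{\eps^{-1}\sigma(\eps z)}\le C\Abs{z}$ controls the growth, but one still needs a clean argument — e.g.\ a Gaussian interpolation / Price-type identity, differentiating the expectation with respect to the entries of $\Sigma$ and bounding the result via $\Norm{\sigma^{(1)}}_{\infty},\Norm{\sigma^{(2)}}_{\infty}$ from Assumption~\ref{Assumption....Activation-Function} — to show that an $\eta$-perturbation of $\Sigma$ on the relevant compact set of $2\times 2$ positive semidefinite matrices (whose size is pinned down by the diagonal bounds $\mu_1,\mu_2$ of Proposition~\ref{Proposition...Positive-Definiteness-of-Limiting-Gram-Matrix}) moves the expectation by $\fO(\eta)$. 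The remaining care is bookkeeping: tracking how the $L_\sigma$-type and Bernstein constants compound across the $L$ layers into $C_0$, and restricting $\eta$ to a bounded range so that the genuinely sub-exponential Bernstein tail takes the stated sub-Gaussian-in-$m$ form $2\exp(-C_0 m\eta^2)$.
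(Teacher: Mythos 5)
Your proposal follows essentially the same route as the paper: induction over layers, Bernstein's inequality applied conditionally to the i.i.d.\ sub-exponential summands at each layer, and a Lipschitz-in-covariance estimate to pass from the empirical covariance $\widetilde{\mB}^{[l-1]}_{ij}(\vtheta^0)$ to $\widetilde{\mA}^{[l]}_{ij}$ via the induction hypothesis. The uniform-in-$\eps$ Lipschitz bound you single out as the main obstacle is exactly what the paper's Lemma~\ref{A-Lemma...Matrix-Norm-and-Entry-Norm} supplies (proved by differentiating in $(a,b,\rho)$ and a Hermite-expansion argument, with the diagonal bounds coming from Lemma~\ref{A-Lemma...Second-Moment-Boundedness}), so your plan is correct and matches the paper's proof in all essentials.
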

\begin{proof}
We shall prove this by induction. For $l=1$,  the inner product  $\left< \bar{\vx}_{i}^{[1]},  \bar{\vx}_{j}^{[1]}  \right>$ reads
\begin{align*}
 \left< \bar{\vx}_{i}^{[1]},  \bar{\vx}_{j}^{[1]}  \right>&=\frac{1}{m} \sum_{k'=1}^m   \frac{\sigma\left(\frac{\alpha_1}{\sqrt{m}}\vw_{k'}^\T{\vx}_i^{[0]}\right)}{\frac{\alpha_1}{\sqrt{m}}}\frac{\left(\frac{\alpha_1}{\sqrt{m}}\vw_{k'}^\T{\vx}_j^{[0]}\right)}{\frac{\alpha_1}{\sqrt{m}}},
\end{align*}
 since for any $k'\in[m]$,  $\vw_{k'}\sim\fN(\vzero,\mI_d)$, then
\begin{align*}
  \Norm{\frac{\sigma\left(\frac{\alpha_1}{\sqrt{m}}\vw_{k'}^\T{\vx}_i^{[0]}\right)}{\frac{\alpha_1}{\sqrt{m}}}\frac{\left(\frac{\alpha_1}{\sqrt{m}}\vw_{k'}^\T{\vx}_j^{[0]}\right)}{\frac{\alpha_1}{\sqrt{m}}}}_{\psi}&\leq c^2 \Norm{\norm{\vw_{k'}}_2^2}_{\psi}\leq c^2   C_{\psi, d},
\end{align*}
is a sub-exponential random variable, and as we notice that 
\begin{align*}
\Exp_{\vw\sim\fN(\vzero,\mI_d)}\frac{\sigma\left(\beta_1\vw^\T{\vx}_i^{[0]}\right)}{\beta_1}\frac{\left(\beta_1\vw^\T{\vx}_j^{[0]}\right)}{\beta_1}=\Exp_{(u,v)^{\T}\sim \fN\left(\vzero, \widetilde{\mA}^{[l]}_{ij}\right)} \frac{\sigma\left(\frac{\alpha_1}{\sqrt{m}}u\right)}{\frac{\alpha_1}{\sqrt{m}}}
\frac{\sigma\left(\frac{\alpha_1}{\sqrt{m}}v\right)}{\frac{\alpha_1}{\sqrt{m}}},
\end{align*}
hence by application of  Theorem \ref{A-Thm...Bernstein-Inequality}, for some absolute constant $C_0>0$,
\begin{equation}
\Prob\left(\Abs{\left< \bar{\vx}_{i}^{[1]},  \bar{\vx}_{j}^{[1]}  \right>-\widetilde{\mK}^{[1]}_{ij} }\geq \eta\right)\leq 2\exp\left(-\frac{C_0}{c^2C_{\psi, d}^2} m {\eta^2}\right).
\end{equation}

We assume that    relation \eqref{A-Prop...eq...C-L+1-G-M...Concentration-on-l-th-Gram-Matrix} holds for   $l'< l$,  and we proceed to demonstrate that it also  holds true  for $l'=l$.  As the inner product  $\left< \bar{\vx}_{i}^{[l]},  \bar{\vx}_{j}^{[l]}  \right>$ reads
\begin{align*}
 \left< \bar{\vx}_{i}^{[l]},  \bar{\vx}_{j}^{[l]}  \right>&=\frac{1}{m} \sum_{k'=1}^m   \frac{\sigma\left(\left(\frac{1}{\sqrt{m}}\prod_{k=1}^l\alpha_k\right)\vw_{k'}^\T{\vx}_i^{[l-1]}\right)}{\frac{1}{\sqrt{m}}\prod_{k=1}^l\alpha_k}\frac{\left(\left(\frac{1}{\sqrt{m}}\prod_{k=1}^l\alpha_k\right)\vw_{k'}^\T{\vx}_j^{[l-1]}\right)}{\frac{1}{\sqrt{m}}\prod_{k=1}^l\alpha_k},
\end{align*}
as we set $\eps_l:=\frac{1}{\sqrt{m}}\prod_{k=1}^l\alpha_k,$  and since for any $k'\in[m]$,  $\vw_{k'}\sim\fN(\vzero,\mI_m)$, then
\begin{align*}
  \Norm{\frac{\sigma\left(\eps_l\vw_{k'}^\T{\vx}_i^{[l-1]}\right)}{\eps_l}\frac{\sigma\left(\eps_l\vw_{k'}^\T{\vx}_j^{[l-1]}\right)}{\eps_l}}_{\psi}&\leq c^2 \Norm{\norm{\vw_{k'}}_2^2}_{\psi}\leq c^2   C_{\psi, m},
\end{align*}
is also a sub-exponential random variable, and   as we notice that 
\begin{align*}
 \Exp_{\vw\sim\fN(\vzero,\mI_m)} \frac{\sigma\left(\eps_l\vw^\T\Bar{\vx}_i^{[l-1]}\right)}{\eps_l} \frac{\sigma\left(\eps_l\vw^\T\Bar{\vx}_j^{[l-1]}\right)}{\eps_l} 
= \Exp_{(u,v)^{\T}\sim \fN\left(\vzero, \widetilde{\mB}^{[l-1]}_{ij}(\vtheta^0) \right)}  \frac{\sigma\left(  \eps_lu\right)}{\eps_l}
\frac{\sigma\left(  \eps_lv\right)}{\eps_l},
\end{align*}
hence by application of  Theorem \ref{A-Thm...Bernstein-Inequality}, for some absolute constant $C_0>0$,
\begin{equation*}
\Prob\left(\Abs{\left< \bar{\vx}_{i}^{[l]},  \bar{\vx}_{j}^{[l]}  \right>-\Exp_{(u,v)^{\T}\sim \fN\left(\vzero, \widetilde{\mB}^{[l-1]}_{ij}(\vtheta^0) \right)}  \frac{\sigma\left(  \eps_lu\right)}{\eps_l}
\frac{\sigma\left(  \eps_lv\right)}{\eps_l}}\geq \eta\right)\leq 2\exp\left(-\frac{C_0}{c^2C_{\psi, m}^2} m {\eta^2}\right),
\end{equation*}
then by application of Lemma \ref{A-Lemma...Matrix-Norm-and-Entry-Norm}, we obtain that 
\begin{align*}
&\Abs{F\left(\widetilde{\mB}^{[l-1]}_{ij}(\vtheta^0)\right) -F\left(\widetilde{\mA}^{[l]}_{ij}\right)}\leq  C\Norm{\widetilde{\mB}^{[l-1]}_{ij}(\vtheta^0)-\widetilde{\mA}^{[l]}_{ij}}_{\infty}, 
\end{align*}
thus we have 
\begin{align*}
&\Prob\left(\Abs{F\left(\widetilde{\mB}^{[l-1]}_{ij}(\vtheta^0)\right) -F\left(\widetilde{\mA}^{[l]}_{ij}\right)}\geq \eta\right)\\
\leq &\Prob\left(\Abs{\left<\bar{\vx}_{i}^{[l-1]},  \bar{\vx}_{i}^{[l-1]}\right>-\widetilde{\mK}^{[l-1]}_{ii}}\geq  \frac{\eta}{C}\right)+ \Prob\left(\Abs{\left<\bar{\vx}_{i}^{[l-1]},  \bar{\vx}_{j}^{[l-1]}\right>-\widetilde{\mK}^{[l-1]}_{ij}}\geq \frac{\eta}{C}\right)\\
 &+\Prob\left(\Abs{\left<\bar{\vx}_{j}^{[l-1]},  \bar{\vx}_{i}^{[l-1]}\right>-\widetilde{\mK}^{[l-1]}_{ji}}\geq \frac{\eta}{C}\right)+\Prob\left(\Abs{\left<\bar{\vx}_{j}^{[l-1]},  \bar{\vx}_{j}^{[l-1]}\right>-\widetilde{\mK}^{[l-1]}_{jj}}\geq \frac{\eta}{C}\right),
\end{align*}
and from our induction hypothesis,  as the following holds
\[\Prob\left(\Abs{\left<\bar{\vx}_{i}^{[l-1]},  \bar{\vx}_{j}^{[l-1]}\right>-\widetilde{\mK}^{[l-1]}_{ij}}\geq \frac{\eta}{C}\right)\leq 2\exp\left(-\frac{C_0}{C^2} m \eta^2\right),\]
then we obtain that 
\begin{equation}
\begin{aligned}
 &\Prob\left(\Abs{\left<\bar{\vx}_{i}^{[l]},  \bar{\vx}_{j}^{[l]}\right>-\widetilde{\mK}^{[l]}_{ij}}\geq \eta\right)\\ 
 \leq & \Prob\left(\Abs{\left< \bar{\vx}_{i}^{[l]},  \bar{\vx}_{j}^{[l]}  \right>-\Exp_{(u,v)^{\T}\sim \fN\left(\vzero, \widetilde{\mB}^{[l-1]}_{ij}(\vtheta^0) \right)}  \frac{\sigma\left(  \eps_lu\right)}{\eps_l}
\frac{\sigma\left(  \eps_lv\right)}{\eps_l}}\geq \frac{\eta}{2}\right)\\
&+\Prob\left(\Abs{F\left(\widetilde{\mB}^{[l-1]}_{ij}(\vtheta^0)\right) -F\left(\widetilde{\mA}^{[l]}_{ij}\right)}\geq \frac{\eta}{2}\right)\\
\leq &2\exp\left(-C_0 m \frac{\eta^2}{36}\right)+8\exp\left(-\frac{C_0}{C^2} m \frac{\eta^2}{4}\right)=2\exp\left(-C_0 m {\eta^2}\right).
\end{aligned}
\end{equation}
\end{proof}
\begin{proposition}\label{A-prop...Concentration-on-l<L-th-Gram-Matrix}
Suppose $\sigma(\cdot)$ satisfies conditions  in Assumption \ref{Assumption....Activation-Function}, and $\fS$     satisfies conditions  in  Assumption 
 \ref{Assumption...Data}, then     for any $i, j\in [n]$ and $l\in[L]$, the following holds
\begin{equation}\label{A-Prop...eq...C-l<L-G-M...Concentration-on-l<L-th-Gram-Matrix}
 \Prob\left(\Abs{\overline{\mH}_{ij}^{[l]}   (\vtheta^0)-\prod_{k=l}^L\widetilde{\mI}_{ij}^{[k]}}\geq \eta\right)\leq 2\exp\left(-C_0 m \eta^2\right),  
\end{equation}
for some   constant $C_0>0$ depending on $L$.
\end{proposition}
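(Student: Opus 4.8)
The plan is a backward induction on $l$, from $l=L$ down to $l=1$, following the blueprint of the proof of Proposition \ref{A-prop...Concentration-on-L+1-th-Gram-Matrix}; observe that $\widetilde{\mI}_{ij}^{[l]}$ is exactly the extra factor acquired in passing from level $l+1$ (where the target is $\prod_{k=l+1}^{L}\widetilde{\mI}_{ij}^{[k]}$) to level $l$ (target $\prod_{k=l}^{L}\widetilde{\mI}_{ij}^{[k]}$). For the base case $l=L$, substituting $\mW^{[L]}=\beta_L\overline{\mW}^{[L]}$, $\vx_i^{[L-1]}=\big(\prod_{k=1}^{L-1}\alpha_k\big)\bar{\vx}_i^{[L-1]}$ and writing $\eps_L:=\tfrac{1}{\sqrt{m}}\prod_{k=1}^{L}\alpha_k$ gives
\begin{equation*}
\overline{\mH}_{ij}^{[L]}(\vtheta^0)=\frac{1}{m}\sum_{p=1}^{m}\bar{a}_p^{2}\,\sigma^{(1)}\!\big(\eps_L\vw_p^{\T}\bar{\vx}_i^{[L-1]}\big)\,\sigma^{(1)}\!\big(\eps_L\vw_p^{\T}\bar{\vx}_j^{[L-1]}\big),
\end{equation*}
where the rows $\vw_p$ of $\overline{\mW}^{[L]}$ and the entries $\bar{a}_p$ of $\bar{\va}$ are i.i.d.\ $\fN(\vzero,\mI_m)$, resp.\ $\fN(0,1)$, and independent of $\overline{\mW}^{[1]},\dots,\overline{\mW}^{[L-1]}$. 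Conditioning on $\overline{\mW}^{[1]},\dots,\overline{\mW}^{[L-1]}$, the summands become i.i.d.\ and bounded by $C^{2}\bar{a}_p^{2}$ in absolute value, hence sub-exponential with a parameter not depending on the conditioning, so Theorem \ref{A-Thm...Bernstein-Inequality} yields concentration of $\overline{\mH}_{ij}^{[L]}(\vtheta^0)$ about its conditional mean $\Exp_{(u,v)^{\T}\sim\fN(\vzero,\widetilde{\mB}_{ij}^{[L-1]}(\vtheta^0))}\sigma^{(1)}(\eps_L u)\sigma^{(1)}(\eps_L v)$, uniformly in the conditioning; Lemma \ref{A-Lemma...Matrix-Norm-and-Entry-Norm}, now applied to the test function $\sigma^{(1)}$, shows this conditional mean is $C$-Lipschitz in the covariance with respect to $\Norm{\cdot}_{\infty}$, so Proposition \ref{A-prop...Concentration-on-L+1-th-Gram-Matrix} lets us replace $\widetilde{\mB}_{ij}^{[L-1]}(\vtheta^0)$ by $\widetilde{\mA}_{ij}^{[L]}$ and recover $\widetilde{\mI}_{ij}^{[L]}$, after which a union bound over $\fO(1)$ events gives \eqref{A-Prop...eq...C-l<L-G-M...Concentration-on-l<L-th-Gram-Matrix} for $l=L$.

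For the inductive step, assume \eqref{A-Prop...eq...C-l<L-G-M...Concentration-on-l<L-th-Gram-Matrix} at level $l+1$, put $\vb_i:=\prod_{k=l+1}^{L-1}\overline{\mE}^{[k]}(\vx_i)\vsigma_{[L]}^{(1)}(\vx_i)\tfrac{\bar{\va}}{\sqrt{m}}$ so that $\langle\vb_i,\vb_j\rangle=\overline{\mH}_{ij}^{[l+1]}(\vtheta^0)$ — which, with the induction hypothesis and the diagonal bounds of Proposition \ref{Proposition...Positive-Definiteness-of-Limiting-Gram-Matrix}, gives $\Norm{\vb_i}_2^{2}=\overline{\mH}_{ii}^{[l+1]}(\vtheta^0)=\fO(1)$ with high probability — and put $\vu_i:=\tfrac{1}{\sqrt{m}}(\overline{\mW}^{[l+1]})^{\T}\vb_i$; then with $\eps_l:=\tfrac{1}{\sqrt{m}}\prod_{k=1}^{l}\alpha_k$ and $\vw_p$ the rows of $\overline{\mW}^{[l]}$,
\begin{equation*}
\overline{\mH}_{ij}^{[l]}(\vtheta^0)=\vu_i^{\T}\vsigma_{[l]}^{(1)}(\vx_i)\vsigma_{[l]}^{(1)}(\vx_j)\vu_j=\frac{1}{m}\sum_{p=1}^{m}\sigma^{(1)}\!\big(\eps_l\vw_p^{\T}\bar{\vx}_i^{[l-1]}\big)\sigma^{(1)}\!\big(\eps_l\vw_p^{\T}\bar{\vx}_j^{[l-1]}\big)\,m(\vu_i)_p(\vu_j)_p.
\end{equation*}
The idea is to prove $\overline{\mH}_{ij}^{[l]}(\vtheta^0)\approx\widetilde{\mI}_{ij}^{[l]}\,\overline{\mH}_{ij}^{[l+1]}(\vtheta^0)$, which together with the induction hypothesis and $\Abs{\widetilde{\mI}_{ij}^{[l]}}\leq\mu_2$ (Proposition \ref{Proposition...Positive-Definiteness-of-Limiting-Gram-Matrix}) gives the claim at level $l$, since $\prod_{k=l}^{L}\widetilde{\mI}_{ij}^{[k]}=\widetilde{\mI}_{ij}^{[l]}\prod_{k=l+1}^{L}\widetilde{\mI}_{ij}^{[k]}$. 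To that end split $\overline{\mH}_{ij}^{[l]}(\vtheta^0)-\widetilde{\mI}_{ij}^{[l]}\overline{\mH}_{ij}^{[l+1]}(\vtheta^0)$ as (i) $\big(\frac{1}{m}\sum_p\sigma^{(1)}(\eps_l\vw_p^{\T}\bar{\vx}_i^{[l-1]})\sigma^{(1)}(\eps_l\vw_p^{\T}\bar{\vx}_j^{[l-1]})-\widetilde{\mI}_{ij}^{[l]}\big)\overline{\mH}_{ij}^{[l+1]}(\vtheta^0)$, treated exactly as in the base case (Bernstein over $p$, then Lemma \ref{A-Lemma...Matrix-Norm-and-Entry-Norm} and Proposition \ref{A-prop...Concentration-on-L+1-th-Gram-Matrix} to pass from $\widetilde{\mB}_{ij}^{[l-1]}(\vtheta^0)$ to $\widetilde{\mA}_{ij}^{[l]}$, absorbing the bounded factor $\overline{\mH}_{ij}^{[l+1]}(\vtheta^0)=\fO(1)$), plus (ii) $\frac{1}{m}\sum_p\sigma^{(1)}(\cdot)\sigma^{(1)}(\cdot)\big(m(\vu_i)_p(\vu_j)_p-\overline{\mH}_{ij}^{[l+1]}(\vtheta^0)\big)$.

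Term (ii) is the main obstacle. The difficulty is that $\overline{\mW}^{[l+1]}$ appears in $\vu_i$ both explicitly, as $(\overline{\mW}^{[l+1]})^{\T}$, and implicitly inside $\vb_i$ through $\vsigma_{[l+1]}^{(1)}(\vx_i)=\mathrm{diag}\big(\sigma^{(1)}(\mW^{[l+1]}\vx_i^{[l]})\big)$, and that $\overline{\mW}^{[l]}$ appears both in the bounded weights $\sigma^{(1)}(\eps_l\vw_p^{\T}\bar{\vx}_i^{[l-1]})$ and, via the forward pass $\bar{\vx}_i^{[l]}$, in $\vb_i$; absent these dependences $m(\vu_i)_p(\vu_j)_p$ would be an average over the $m$ columns of $\overline{\mW}^{[l+1]}$ concentrating at $\langle\vb_i,\vb_j\rangle=\overline{\mH}_{ij}^{[l+1]}(\vtheta^0)$, essentially independently across $p$ and of the weights, and Bernstein would close the bound. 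To handle the coupling the plan is to condition on $\overline{\mW}^{[1]},\dots,\overline{\mW}^{[l]}$ and on $\overline{\mW}^{[l+2]},\dots,\overline{\mW}^{[L]},\bar{\va}$ and, using Gaussian conditioning on the pair of pre-activations $(\mW^{[l+1]}\vx_i^{[l]},\mW^{[l+1]}\vx_j^{[l]})$, decompose $\overline{\mW}^{[l+1]}$ into a fixed rank-$\fO(1)$ part and a fresh Gaussian supported on the orthogonal complement: the rank-$\fO(1)$ part perturbs $\vb_i$ and $\vsigma_{[l+1]}^{(1)}$ only by $\fO(m^{-1/2})$, while the fresh part makes $\vu_i^{\T}\vsigma_{[l]}^{(1)}(\vx_i)\vsigma_{[l]}^{(1)}(\vx_j)\vu_j$ a quadratic form in an independent Gaussian, to which Bernstein-type concentration applies once $\Norm{\vb_i}_2=\fO(1)$ and $\Norm{\overline{\mW}^{[l+1]}}_{2\to 2}=\fO(\sqrt{m})$ (both with high probability) are used. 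Collecting the $\fO(1)$ error events produced along the way and taking $m$ large completes the induction and establishes \eqref{A-Prop...eq...C-l<L-G-M...Concentration-on-l<L-th-Gram-Matrix}.
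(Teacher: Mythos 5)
Your proposal follows the same route as the paper's proof: backward induction in $l$, a base case $l=L$ handled exactly as in the paper (Bernstein over the $m$ sub-exponential summands $\bar{a}_p^2\sigma^{(1)}(\cdot)\sigma^{(1)}(\cdot)$, then Lemma \ref{A-Lemma...Matrix-Norm-and-Entry-Norm} applied to $G$ plus Proposition \ref{A-prop...Concentration-on-L+1-th-Gram-Matrix} to trade $\widetilde{\mB}^{[L-1]}_{ij}(\vtheta^0)$ for $\widetilde{\mA}^{[L]}_{ij}$), and the same pivotal factorization $\overline{\mH}^{[l]}_{ij}(\vtheta^0)\approx\widetilde{\mI}^{[l]}_{ij}\,\overline{\mH}^{[l+1]}_{ij}(\vtheta^0)$ in the inductive step. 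The only genuine divergence is the bilinear remainder, your term (ii). The paper expands $\overline{\mH}^{[l]}_{ij}$ entrywise in the weights $w_{p,q}$ of $\overline{\mW}^{[l+1]}$, notes the off-diagonal ($q\neq r$) coefficients have zero mean and that the diagonal ($q=r$) coefficients concentrate at $\widetilde{\mI}^{[l]}_{ij}$, and then asserts convergence of the whole sum to $\widetilde{\mI}^{[l]}_{ij}\overline{\mH}^{[l+1]}_{ij}$; it does not quantify the aggregated $q\neq r$ fluctuation, nor address that the entries $s_q(i)$ also depend on $\overline{\mW}^{[l+1]}$ through $\vsigma^{(1)}_{[l+1]}$. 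Your plan—conditioning on the other layers, splitting $\overline{\mW}^{[l+1]}$ into the rank-$\fO(1)$ part determined by the pre-activations plus an independent Gaussian on the orthogonal complement, and applying a quadratic-form (Hanson--Wright/Bernstein) bound—confronts exactly that coupling, so it is a more careful rigorization of the step the paper treats informally, at the cost of extra conditioning machinery and of verifying the $\fO(m^{-1/2})$ effect of the low-rank correction; both yield the stated tail bound.
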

\begin{proof}
We shall prove this by induction. For $l=L$,  as we set $\eps_L :=\frac{1}{\sqrt{m}}\prod_{k=1}^L\alpha_k$,  
\begin{align*}
 \overline{\mH}_{ij}^{[L]}   (\vtheta^0)&=\left<  
 \vsigma_{[L]}^{(1)}\left(\vx_i\right)\frac{\Bar{\va}}{\sqrt{m}},    
 \vsigma_{[L]}^{(1)}\left(\vx_j\right)\frac{\Bar{\va}}{\sqrt{m}}\right>\\
&=\frac{1}{m}\sum_{k'=1}^m a_{k'}^2    {\sigma^{(1)}\left(\eps_L\vw_{k'}^\T\Bar{\vx}_i^{[L-1]}\right)} {\sigma^{(1)}\left(\eps_L\vw_{k'}^\T\Bar{\vx}_j^{[L-1]}\right)}, 
\end{align*}
where  $a_{k'}\sim\fN(\vzero,1) $, and $\vw_{k'}\sim\fN(\vzero,\mI_m)$, and since 
\begin{align*}
  \Norm{a_{k'}^2    {\sigma^{(1)}\left(\eps_L\vw_{k'}^\T\Bar{\vx}_i^{[L-1]}\right)} {\sigma^{(1)}\left(\eps_L\vw_{k'}^\T\Bar{\vx}_j^{[L-1]}\right)}}_{\psi}&\leq c^2 \Norm{a_{k'}^2}_{\psi}\leq c^2   C_{\psi, 1},
\end{align*}
is  a sub-exponential random variable,  and   as we notice that 
\begin{align*}
 \Exp_{(a, \vw)\sim\fN(\vzero,\mI_{m+1})} a^2    {\sigma^{(1)}\left(\eps_L\vw^\T\Bar{\vx}_i^{[L-1]}\right)} {\sigma^{(1)}\left(\eps_L\vw^\T\Bar{\vx}_j^{[L-1]}\right)} 
=&\Exp_{(u,v)^{\T}\sim \fN\left(\vzero, \widetilde{\mB}^{[L-1]}_{ij} \right)}   {\sigma^{(1)}\left(  \eps_Lu\right)} 
 {\sigma^{(1)}\left( \eps_Lv\right)},
\end{align*}
hence by application of  Theorem \ref{A-Thm...Bernstein-Inequality}, for some absolute constant $C_0>0$,
\begin{equation*}
\Prob\left(\Abs{\overline{\mH}_{ij}^{[L]}   (\vtheta^0)-\Exp_{(u,v)^{\T}\sim \fN\left(\vzero, \widetilde{\mB}^{[L-1]}_{ij} \right)}   {\sigma^{(1)}\left(  \eps_Lu\right)} 
 {\sigma^{(1)}\left( \eps_Lv\right)}}\geq \eta\right)\leq 2\exp\left(-\frac{C_0}{c^2C_{\psi, 1}^2} m {\eta^2}\right),
\end{equation*}
then by application of Lemma \ref{A-Lemma...Matrix-Norm-and-Entry-Norm}, we obtain that 
\begin{align*}
&\Abs{G\left(\widetilde{\mB}^{[L-1]}_{ij}(\vtheta^0)\right) -G\left(\widetilde{\mA}^{[L]}_{ij}\right)}\leq  C\Norm{\widetilde{\mB}^{[L-1]}_{ij}(\vtheta^0)-\widetilde{\mA}^{[L]}_{ij}}_{\infty}, 
\end{align*}
thus by similar reasoning in  Proposition \ref{A-prop...Concentration-on-L+1-th-Gram-Matrix}, we obtain that 
\begin{equation}\label{tt}
\begin{aligned}
 &\Prob\left(\Abs{\overline{\mH}_{ij}^{[L]}   (\vtheta^0)-\widetilde{\mI}^{[l]}_{ij}}\geq \eta\right)\\ 
 \leq & \Prob\left(\Abs{\overline{\mH}_{ij}^{[L]}   (\vtheta^0)-\Exp_{(u,v)^{\T}\sim \fN\left(\vzero, \widetilde{\mB}^{[L-1]}_{ij} \right)}   {\sigma^{(1)}\left(  \eps_Lu\right)} 
 {\sigma^{(1)}\left( \eps_Lv\right)}}\geq \frac{\eta}{2}\right)\\
&+\Prob\left(\Abs{G\left(\widetilde{\mB}^{[L-1]}_{ij}(\vtheta^0)\right) -G\left(\widetilde{\mA}^{[L]}_{ij}\right)}\geq \frac{\eta}{2}\right)  
\leq 2\exp\left(-C_0 m {\eta^2}\right).
\end{aligned}
\end{equation}

We assume that    relation \eqref{A-Prop...eq...C-l<L-G-M...Concentration-on-l<L-th-Gram-Matrix} holds for   $l'> l$,  and we proceed to demonstrate that it also  holds true  for $l'=l$.  As we  recall that 
\[
\overline{\mH}_{ij}^{[l+1]}   (\vtheta^0)  = \left<
 \prod_{k=l+1}^{L-1}\overline{\mE}^{[k]}(\vx_i)
 \vsigma_{[L]}^{(1)}\left(\vx_i\right)\frac{\Bar{\va}}{\sqrt{m}},  
\prod_{k=l+1}^{L-1}\overline{\mE}^{[k]}(\vx_j)
 \vsigma_{[L]}^{(1)}\left(\vx_j\right)\frac{\Bar{\va}}{\sqrt{m}}\right>,
\]
and the entries in $ \prod_{k=l+1}^{L-1}\overline{\mE}^{[k]}(\vx_i)
 \vsigma_{[L]}^{(1)}\left(\vx_i\right)\frac{\Bar{\va}}{\sqrt{m}}$ and $ \prod_{k=l+1}^{L-1}\overline{\mE}^{[k]}(\vx_j)
 \vsigma_{[L]}^{(1)}\left(\vx_j\right)\frac{\Bar{\va}}{\sqrt{m}}$ read 
\[  \prod_{k=l+1}^{L-1}\overline{\mE}^{[k]}(\vx_i)
 \vsigma_{[L]}^{(1)}\left(\vx_i\right)\frac{\Bar{\va}}{\sqrt{m}}=\left[\frac{s_r(i)}{\sqrt{m}}\right]_{m\times 1},~~\prod_{k=l+1}^{L-1}\overline{\mE}^{[k]}(\vx_j)
 \vsigma_{[L]}^{(1)}\left(\vx_j\right)\frac{\Bar{\va}}{\sqrt{m}}=\left[\frac{s_r(j)}{\sqrt{m}}\right]_{m\times 1},\]
and the entries in $\vsigma^{(1)}_{[l]}(\vx_i)$ and $\vsigma^{(1)}_{[l]}(\vx_j)$ read
\[
\vsigma^{(1)}_{[l]}(\vx_i)=\mathrm{diag}\left([\mu_p(i)]_{m\times 1}\right),~~\vsigma^{(1)}_{[l]}(\vx_j)=\mathrm{diag}\left([\mu_p(j)]_{m\times 1}\right),
\]
the entries in  $\left(\frac{\overline{\mW}^{[l+1]}}{\sqrt{m}}\right)^\T$ by 
\[
\left(\frac{\overline{\mW}^{[l+1]}}{\sqrt{m}}\right)^\T=\left[\frac{w_{p,q}}{\sqrt{m}}\right]_{m\times m},
\]
therefore, 
\begin{align*}
\overline{\mH}_{ij}^{[l]}   (\vtheta^0)  &= \left<
 \prod_{k=l}^{L-1}\overline{\mE}^{[k]}(\vx_i)
 \vsigma_{[L]}^{(1)}\left(\vx_i\right)\frac{\Bar{\va}}{\sqrt{m}},  
\prod_{k=l}^{L-1}\overline{\mE}^{[k]}(\vx_j)
 \vsigma_{[L]}^{(1)}\left(\vx_j\right)\frac{\Bar{\va}}{\sqrt{m}}\right>\\
&=\left<\mathrm{diag}\left([\mu_p(i)]\right)\left[\frac{w_{p,q}}{\sqrt{m}}\right]\left[\frac{s_q(i)}{\sqrt{m}}\right], \mathrm{diag}\left([\mu_p(j)]\right)\left[\frac{w_{p,r}}{\sqrt{m}}\right]\left[\frac{s_r(j)}{\sqrt{m}}\right]\right>\\
&=\sum_{p,q,r=1}^m\mu_p(i) \frac{w_{p,q}}{\sqrt{m}}\frac{s_{q}(i)}{\sqrt{m}}  \mu_p(j) \frac{w_{p,r}}{\sqrt{m}}\frac{s_{r}(j)}{\sqrt{m}},
\end{align*}
it shall be noticed that for the  fixed pair $(q,r)$, the coefficients of $\frac{s_{q}(i)}{\sqrt{m}}\frac{s_{r}(j)}{\sqrt{m}}$ read
\[
\sum_{p=1}^m \mu_p(i)\mu_p(j)\frac{w_{p,q}}{\sqrt{m}} \frac{w_{p,r}}{\sqrt{m}}=\frac{1}{m} \sum_{p=1}^m \mu_p(i)\mu_p(j)w_{p,q}w_{p,r},
\]
where for any fixed $p\in[m]$,
\[
\Norm{\mu_p(i)\mu_p(j)w_{p,q}w_{p,r}}_{\psi}\leq c^2 \Norm{w_{p,q}}_\psi\Norm{w_{p,r}}_\psi\leq c^2C_{\psi, 1},
\]
is a sub-exponential random variable, and   $w_{p,q}, w_{p,r}$ are independent with  $\mu_p(i), \mu_p(j)$. 

Moreover, if $q\neq r$, then $w_{p,q}, w_{p,r}$ are independent with each other, with expectation  
\[
\Exp    \mu_p(i)\mu_p(j)w_{p,q}w_{p,r} = \Exp \left[\mu_p(i)\mu_p(j)\right]\Exp [w_{p,q}]\Exp [w_{p,r}]=0,
\]
if $q=r$, then  its expectation reads
\[
\Exp   \mu_p(i)\mu_p(j)w_{p,q}^2 =  \sum_{p=1}^m \Exp \left[\mu_p(i)\mu_p(j)\right]\Exp [w_{p,q}^2]= \Exp \left[\mu_p(i)\mu_p(j)\right],
\]
and as we set  $\eps_l =\frac{1}{\sqrt{m}}\prod_{k=1}^{l}\alpha_k$, then 
\begin{align*}
 \Exp \left[\mu_p(i)\mu_p(j)\right] &=\Exp_{ \vw\sim\fN(\vzero,\mI_{m})}    {\sigma^{(1)}\left(\eps_{l}\vw^\T\Bar{\vx}_i^{[l-1]}\right)} {\sigma^{(1)}\left(\eps_l\vw^\T\Bar{\vx}_j^{[l-1]}\right)} \\
 &=\Exp_{(u,v)^{\T}\sim \fN\left(\vzero, \widetilde{\mB}^{[l-1]}_{ij} (\vtheta^0)\right)}   {\sigma^{(1)}\left(  \eps_{l}u\right)} 
 {\sigma^{(1)}\left( \eps_{l}v\right)},
\end{align*}
therefore, we   focus on the case where $q=r,$ and  the coefficients of $\frac{s_{q}(i)}{\sqrt{m}}\frac{s_{q}(i)}{\sqrt{m}}$ satisfy
\[
 \Prob\left(\Abs{\frac{1}{m} \sum_{p=1}^m \mu_p(i)\mu_p(j)w_{p,q}^2- G\left(\widetilde{\mB}^{[l-1]}_{ij}(\vtheta^0)\right)}\geq \eta\right)\leq 2\exp\left(-C_0 m \eta^2\right), 
\]
and by similar reasoning in \eqref{tt}, we obtain that $G\left(\widetilde{\mB}^{[l-1]}_{ij}(\vtheta^0)\right)$ is close to  $\widetilde{\mI}^{[l]}_{ij}$, then with high probability, the quantity  $\sum_{p,q,r=1}^m\mu_p(i) \frac{w_{p,q}}{\sqrt{m}}\frac{s_{q}(i)}{\sqrt{m}}  \mu_p(j) \frac{w_{p,r}}{\sqrt{m}}\frac{s_{r}(j)}{\sqrt{m}}$ converges to 
\[
\widetilde{\mI}^{[l]}_{ij} \left(\sum_{q=1}^m\frac{s_{q}(i)s_{q}(j)}{{m}}\right)=\widetilde{\mI}^{[l]}_{ij} \overline{\mH}_{ij}^{[l+1]}   (\vtheta^0),
\]
and based on the induction hypothesis, we finish our proof.
\end{proof}
\noindent
Based on  Proposition \ref{A-prop...Concentration-on-L+1-th-Gram-Matrix} and Proposition \ref{A-prop...Concentration-on-l<L-th-Gram-Matrix}, we state a corollary without proof.
\begin{corollary}\label{A-Cor...gram-matrix}
    Suppose $\sigma(\cdot)$ satisfies conditions  in Assumption \ref{Assumption....Activation-Function}, and $\fS$     satisfies conditions  in  Assumption 
 \ref{Assumption...Data}, then     for any $i, j\in [n]$, and $l\in[L+1]$, the following holds
\begin{equation} 
 \Prob\left(\Abs{\overline{\mG}_{ij}^{[l]}   (\vtheta^0) -{\mK}^{[l]}_{ij}}\geq \eta\right)\leq 2\exp\left(-C_0 m \eta^2\right),  
\end{equation}
for some   constant $C_0>0$ depending on $L$.
\end{corollary}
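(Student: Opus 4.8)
The plan is to obtain the corollary by combining the two foregoing concentration propositions with the boundedness estimates \eqref{eq...Proposition....Estimates-on-Diagonal-Elements} of Proposition \ref{Proposition...Positive-Definiteness-of-Limiting-Gram-Matrix}, treating the cases $l=L+1$ and $l\in[L]$ separately. For $l=L+1$, Definition \ref{Definition...Normalized-Gram-Matrices-G(L)} gives $\overline{\mG}^{[L+1]}_{ij}(\vtheta^0)=\left<\bar{\vx}_i^{[L]},\bar{\vx}_j^{[L]}\right>$ while Definition \ref{Definition...Limiting-Gram-Matrices} gives $\mK^{[L+1]}_{ij}=\widetilde{\mK}^{[L]}_{ij}$; hence the estimate is exactly relation \eqref{A-Prop...eq...C-L+1-G-M...Concentration-on-l-th-Gram-Matrix} of Proposition \ref{A-prop...Concentration-on-L+1-th-Gram-Matrix} specialized to the index $L$, and nothing further is needed.

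For $l\in[L]$ we have $\overline{\mG}^{[l]}_{ij}(\vtheta^0)=\overline{\mH}^{[l]}_{ij}(\vtheta^0)\left<\bar{\vx}_i^{[l-1]},\bar{\vx}_j^{[l-1]}\right>$ and $\mK^{[l]}_{ij}=\widetilde{\mK}^{[l-1]}_{ij}\prod_{k=l}^L\widetilde{\mI}^{[k]}_{ij}$. Write $A:=\overline{\mH}^{[l]}_{ij}(\vtheta^0)$, $B:=\left<\bar{\vx}_i^{[l-1]},\bar{\vx}_j^{[l-1]}\right>$, $a:=\prod_{k=l}^L\widetilde{\mI}^{[k]}_{ij}$ and $b:=\widetilde{\mK}^{[l-1]}_{ij}$; the identity $AB-ab=A(B-b)+b(A-a)$ reduces matters to controlling $B-b$ and $A-a$, which are exactly the quantities handled by Proposition \ref{A-prop...Concentration-on-L+1-th-Gram-Matrix} (with index $l-1$; for $l=1$ this difference vanishes identically since $\bar{\vx}^{[0]}=\vx$ and $\widetilde{\mK}^{[0]}_{ij}=\left<\vx_i,\vx_j\right>$) and Proposition \ref{A-prop...Concentration-on-l<L-th-Gram-Matrix}, once the multiplying factors $A$ and $b$ are bounded. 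By Cauchy–Schwarz applied inside the Gaussian expectations in Definitions \ref{Definition...Auxiliary-Matrices-K+A} and \ref{Definition...Auxiliary-Matrices-I}, combined with \eqref{eq...Proposition....Estimates-on-Diagonal-Elements} and Assumption \ref{Assumption...Data}, both $\Abs{b}\le\sqrt{\widetilde{\mK}^{[l-1]}_{ii}\widetilde{\mK}^{[l-1]}_{jj}}$ and $\Abs{a}\le\prod_{k=l}^L\sqrt{\widetilde{\mI}^{[k]}_{ii}\widetilde{\mI}^{[k]}_{jj}}$ are bounded by a constant $M_0$ depending only on $L$ and the data constant $c$. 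The factor $A$ is random, so we intersect with the event $E_0:=\left\{\Abs{A-a}\le1\right\}$, on which $\Abs{A}\le1+M_0=:M$; Proposition \ref{A-prop...Concentration-on-l<L-th-Gram-Matrix} gives $\Prob\left(E_0^{\mathrm c}\right)\le2\exp(-C_0m)$.

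On $E_0$ we obtain the deterministic bound
\begin{equation*}
\Abs{\overline{\mG}^{[l]}_{ij}(\vtheta^0)-\mK^{[l]}_{ij}}\le M\,\Abs{B-b}+M_0\,\Abs{A-a},
\end{equation*}
so for any $\eta>0$,
\begin{equation*}
\Prob\left(\Abs{\overline{\mG}^{[l]}_{ij}(\vtheta^0)-\mK^{[l]}_{ij}}\ge\eta\right)\le\Prob\left(E_0^{\mathrm c}\right)+\Prob\left(\Abs{B-b}\ge\tfrac{\eta}{2M}\right)+\Prob\left(\Abs{A-a}\ge\tfrac{\eta}{2M_0}\right).
\end{equation*}
Inserting the bounds of Propositions \ref{A-prop...Concentration-on-L+1-th-Gram-Matrix} and \ref{A-prop...Concentration-on-l<L-th-Gram-Matrix} turns the right-hand side into a sum of terms of the form $2\exp(-c\,m\eta^2)$ and $2\exp(-C_0m)$; since it suffices to prove the estimate for $\eta$ below an absolute constant (the bound being trivial otherwise), one has $\exp(-C_0m)\le\exp(-C_0m\eta^2)$, and consolidating the exponentials yields the claimed form $2\exp(-C_0m\eta^2)$ with a relabelled constant $C_0>0$ depending on $L$ (through $\mu_1,\mu_2$ and $c$). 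The only step that is not pure bookkeeping is the uniform control of the random factor $A=\overline{\mH}^{[l]}_{ij}(\vtheta^0)$, which the event $E_0$ — together with the exponential bound on $\Prob(E_0^{\mathrm c})$ from Proposition \ref{A-prop...Concentration-on-l<L-th-Gram-Matrix} — supplies; the rest is the standard ``a product of concentrated bounded quantities concentrates'' argument with a union bound over rescaled thresholds.
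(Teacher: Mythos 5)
Your proposal is correct and follows exactly the route the paper intends: the corollary is stated without proof as a direct consequence of Propositions \ref{A-prop...Concentration-on-L+1-th-Gram-Matrix} and \ref{A-prop...Concentration-on-l<L-th-Gram-Matrix}, and your product decomposition $AB-ab=A(B-b)+b(A-a)$, the boundedness of $a,b$ via \eqref{eq...Proposition....Estimates-on-Diagonal-Elements}, the auxiliary event controlling the random factor $\overline{\mH}^{[l]}_{ij}(\vtheta^0)$, and the union bound with rescaled thresholds supply precisely the omitted bookkeeping (including the correct observation that the $l=1$ and $l=L+1$ cases are immediate). The only slightly glib point is the dismissal of large $\eta$ as ``trivial,'' but this matches the paper's own convention in the two propositions and is immaterial for the downstream application, where only $\eta=\fO(\lambda_\fS/n)$ is used.
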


As for the least eigenvalue of normalized Gram matrices at $t=0$, i.e., $\left\{\overline{\mG}^{[l]}(\vtheta^0)\right\}_{l=1}^{L+1}$, we obtain that 
\begin{proposition} \label{A-prop...Least-Eigenvalue-on-the-NTK}
Suppose $\sigma(\cdot)$ satisfies conditions  in Assumption \ref{Assumption....Activation-Function}, and $\fS$     satisfies conditions  in  Assumption 
 \ref{Assumption...Data}, if 
\[m=\Omega\left(\frac{n^2}{\lambda_\fS^2}\log n\right),\]
where $\lambda_\fS=\min_{l\in[L+1]}    \lambda_{\min}\left(\mK^{[l]}\right)$, then with high probability, for any $l\in[L+1]$
\begin{equation}\label{A-Prop...eq...L-E-NTK...Least-Eigenvalue-lth-Gram}
\lambda_{\min}\left(\overline{\mG}^{[l]}(\vtheta^0)\right)\geq \frac{3}{4}\lambda_\fS.
\end{equation}
\end{proposition}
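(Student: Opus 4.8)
The plan is to derive the spectral bound \eqref{A-Prop...eq...L-E-NTK...Least-Eigenvalue-lth-Gram} from the entrywise concentration of Corollary~\ref{A-Cor...gram-matrix} by a union bound followed by a Weyl-type perturbation argument. Fix $l\in[L+1]$ and write $\Delta^{[l]}:=\overline{\mG}^{[l]}(\vtheta^0)-\mK^{[l]}$. Both $\overline{\mG}^{[l]}(\vtheta^0)$ (a Gram matrix, hence symmetric) and $\mK^{[l]}$ are symmetric, so $\Delta^{[l]}$ is symmetric and Weyl's inequality yields
\begin{equation*}
\lambda_{\min}\left(\overline{\mG}^{[l]}(\vtheta^0)\right)\geq \lambda_{\min}\left(\mK^{[l]}\right)-\Norm{\Delta^{[l]}}_{2\to 2}\geq \lambda_\fS-\Norm{\Delta^{[l]}}_{2\to 2},
\end{equation*}
the last step being the definition $\lambda_\fS=\min_{l\in[L+1]}\lambda_{\min}(\mK^{[l]})$ together with the positive-definiteness established in Proposition~\ref{Proposition...Positive-Definiteness-of-Limiting-Gram-Matrix}. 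Thus it suffices to show that, simultaneously for all $l\in[L+1]$, the deviation obeys $\Norm{\Delta^{[l]}}_{2\to 2}\leq\frac14\lambda_\fS$ with high probability.

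For the operator norm I would use the crude estimate $\Norm{\Delta^{[l]}}_{2\to 2}\leq\Norm{\Delta^{[l]}}_{\mathrm{F}}\leq n\max_{i,j\in[n]}\Abs{\Delta^{[l]}_{ij}}$ (one could equally invoke Gershgorin's theorem). Hence it is enough to control each entry to accuracy $\eta:=\frac{\lambda_\fS}{4n}$. Applying Corollary~\ref{A-Cor...gram-matrix} with this $\eta$ and taking a union bound over the at most $(L+1)n^2$ triples $(l,i,j)$,
\begin{equation*}
\Prob\left(\exists\,l\in[L+1],\ i,j\in[n]:\ \Abs{\Delta^{[l]}_{ij}}\geq\frac{\lambda_\fS}{4n}\right)\leq 2(L+1)n^2\exp\left(-\frac{C_0\,m\,\lambda_\fS^2}{16\,n^2}\right).
\end{equation*}
Under the hypothesis $m=\Omega\!\left(\frac{n^2}{\lambda_\fS^2}\log n\right)$ with a sufficiently large absolute constant (depending only on $C_0$ and $L$), the exponent dominates the logarithm of the polynomial prefactor, so the right-hand side is bounded by $\exp(-m^{\epsilon})$ for some $\epsilon>0$; that is, the bad event occurs only with the probability permitted by the ``high probability'' convention of Section~\ref{Subsection...Notations}. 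On the complementary event, $\max_{i,j}\Abs{\Delta^{[l]}_{ij}}\leq\frac{\lambda_\fS}{4n}$ for every $l$, whence $\Norm{\Delta^{[l]}}_{2\to 2}\leq n\cdot\frac{\lambda_\fS}{4n}=\frac14\lambda_\fS$, and combining with the first display gives $\lambda_{\min}(\overline{\mG}^{[l]}(\vtheta^0))\geq\lambda_\fS-\frac14\lambda_\fS=\frac34\lambda_\fS$ for all $l\in[L+1]$, which is exactly \eqref{A-Prop...eq...L-E-NTK...Least-Eigenvalue-lth-Gram}.

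The one place where care is genuinely needed is the passage from entrywise to spectral control: bounding $\Norm{\Delta^{[l]}}_{2\to 2}$ by $n\max_{i,j}\Abs{\Delta^{[l]}_{ij}}$ loses a factor of $n$, and it is precisely this loss that forces the width requirement to scale like $n^2/\lambda_\fS^2$ rather than $n/\lambda_\fS^2$. If one wished to sharpen the width dependence, the natural alternative would be to apply a matrix Bernstein inequality directly to $\overline{\mG}^{[l]}(\vtheta^0)$, viewed as an average over the $m$ hidden units of symmetric random matrices, instead of first passing through individual entries; but since the proposition only claims the order $n^2/\lambda_\fS^2$, the Frobenius-norm route keeps the argument short. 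A secondary bookkeeping point is to fix the hidden constant in the width hypothesis large enough that the failure probability indeed takes the form $\exp(-m^{\epsilon})$, which is harmless since enlarging $m$ only decreases the bound.
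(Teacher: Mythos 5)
Your proposal is correct and follows essentially the same route as the paper: entrywise concentration from Corollary~\ref{A-Cor...gram-matrix} at accuracy $\lambda_\fS/(4n)$, a union bound over the entries (and layers), the bound $\Norm{\overline{\mG}^{[l]}(\vtheta^0)-\mK^{[l]}}_{2\to 2}\leq\Norm{\cdot}_{\mathrm{F}}\leq n\Norm{\cdot}_{\infty}\leq\frac{\lambda_\fS}{4}$, and a Weyl-type perturbation step giving $\lambda_\fS-\frac{\lambda_\fS}{4}=\frac{3}{4}\lambda_\fS$. The only cosmetic difference is that the paper fixes $\eta=\lambda_\fS/4$ and asks each entry to be within $\eta/n$, which is the same bookkeeping you performed.
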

\begin{proof}
 For any $\eta > 0$ and $l\in[L+1]$,  and for all  $i, j\in [n]$,  we define the events
\begin{equation*}
\begin{aligned}
\Omega_{ij}^{[l]}&:=\left\{\vtheta^0 ~\middle|~   \Abs{\overline{\mG}^{[l]}_{ij}\left(\vtheta^0\right) - \mK^{[l]}_{ij}}  \leq \frac{\eta}{n} \right\},     
\end{aligned}
\end{equation*}
By application of Corollary \ref{A-Cor...gram-matrix},  we obtain that for any $\eta>0$ and $l\in[L+1]$, 
\begin{equation*}
\begin{aligned}
\Prob(\Omega^{[l]}_{ij})   & \geq 1-2\exp\left(-\frac{C_0m\eta^2}{n^2}\right), 
\end{aligned}
\end{equation*}
hence with probability at least $1-4n^2\exp\left(-\frac{C_0m\eta^2}{n^2}\right)$ over the choice of $\vtheta^0$, we have
\begin{equation*}
\begin{aligned}
\Norm{\overline{\mG}^{[l]}\left(\vtheta^0\right) - \mK^{[l]}}_\mathrm{F} &\leq n \Norm{\overline{\mG}^{[l]}\left(\vtheta^0\right) - \mK^{[l]}}_{\infty}\leq \eta.
\end{aligned}
\end{equation*}
By taking $\eta=\frac{\lambda_\fS}{4}$, we conclude that 
\begin{equation*}
\begin{aligned}
\lambda_{\min}\left(\overline{\mG}^{[l]}\left(\vtheta^0\right)\right)  
& \geq\lambda_\fS-\Norm{\overline{\mG}^{[l]}\left(\vtheta^0\right) - \mK^{[l]}}_\mathrm{F}  \geq\lambda_\fS-\frac{\lambda_\fS}{4}  =\frac{3}{4}\lambda_\fS.
\end{aligned}
\end{equation*}
\end{proof}
\subsection{A Unified Approach for Multi-layer NNs}
As the dynamics of the normalized parameters read 
\begin{equation}\label{eqgroup...text...Normalized-Dynamics}
\left\{
\begin{aligned}
\frac{\mathrm{d} \frac{\overline{\mW}^{[l]}}{\sqrt{m}}}{\mathrm{d} t} &  =-\frac{\kappa}{\alpha_l^2}\frac{1}{n}   \sum_{i=1}^n e_i\left(\prod_{k=l}^{L-1}\overline{\mE}^{[k]}(\vx_i)\right)  \vsigma_{[L]}^{(1)}\left(\vx_i\right)\frac{\bar{\va}}{\sqrt{m}}  \otimes  \bar{\vx}_i^{[l-1]},~~l\in[L-1],\\
\frac{\mathrm{d} \frac{\overline{\mW}^{[L]}}{\sqrt{m}}}{\mathrm{d} t} &  =-\frac{\kappa}{\alpha_L^2}\frac{1}{n}   \sum_{i=1}^n e_i \vsigma_{[L]}^{(1)}\left(\vx_i\right)\frac{\bar{\va}}{\sqrt{m}}  \otimes  \bar{\vx}_i^{[L-1]} ,\\
 \frac{\mathrm{d} \frac{\bar{\va}}{\sqrt{m}} }{\mathrm{d} t} &  =-\frac{\kappa}{\alpha_{L+1}^2}\frac{1}{n} \sum_{i=1}^{n}e_i  \bar{\vx}_i^{[L]}.
\end{aligned}     
\right.
\end{equation}
Based on  dynamics \eqref{eqgroup...text...Normalized-Dynamics}, by taking  norm  on both sides,  we obtain that  for any $l\in[L-1]$,       
\begin{align*}
\frac{\D \Norm{\frac{\overline{\mW}^{[l]}}{\sqrt{m}}}_{2\to 2}}{\D t} 
&\leq\frac{\sqrt{2}\kappa}{\alpha_l^2} \left(\prod_{k=1}^{l-1}\Norm{\frac{\overline{\mW}^{[k]}}{\sqrt{m}}}_{2\to 2}\right)\left(\prod_{k=l+1}^{L}\Norm{ \frac{\overline{\mW}^{[k]}}{\sqrt{m}} }_{2\to 2}\right)\Norm{\frac{\bar{\va}}{\sqrt{m}}}_2 \sqrt{R_\fS(\vtheta)},\\
\frac{\D \Norm{\frac{\overline{\mW}^{[L]}}{\sqrt{m}}}_{2\to 2}}{\D t} 
&\leq\frac{\sqrt{2}\kappa}{\alpha_L^2} \left(\prod_{k=1}^{L-1}\Norm{\frac{\overline{\mW}^{[k]}}{\sqrt{m}}}_{2\to 2}\right) \Norm{\frac{\bar{\va}}{\sqrt{m}}}_2 \sqrt{R_\fS(\vtheta)},\\
\frac{\mathrm{d} \Norm{\frac{\bar{\va}}{\sqrt{m}}}_2 }{\mathrm{d} t} &\leq  \frac{\sqrt{2}\kappa}{\alpha_{L+1}^2}\left(\prod_{k=1}^{L}\Norm{\frac{\overline{\mW}^{[k]}}{\sqrt{m}}}_{2\to 2}\right) \sqrt{R_\fS(\vtheta)}.
\end{align*}
As we denote  that for any $l\in[L]$,
\begin{equation} \label{eq...text...Definition-of-pL+pl}
\begin{aligned}
p_l(t)&:=\sup_{s\in[0,t]}\Norm{\frac{\overline{\mW}^{[l]}(s)}{\sqrt{m}} }_{2\to 2},~~p_{L+1}(t) :=\sup_{s\in[0,t]}\Norm{\frac{\bar{\va}(s)}{\sqrt{m}}}_{2},
\end{aligned}
\end{equation}
the above inequality reads 
\begin{equation}
\left\{
\begin{aligned}
\frac{\D p_l(t)}{\D t}&\leq \frac{\sqrt{2}\kappa}{\alpha_l^2}\left(\prod_{k=1}^{l-1}p_k(t)\right)\left(\prod_{k=l+1}^{L+1}p_k(t)\right)\sqrt{R_\fS(\vtheta)},~~l\in[L], \\
\frac{\D p_{L+1}(t)}{\D t}&\leq \frac{\sqrt{2}\kappa}{\alpha_{L+1}^2}\left(\prod_{k=1}^{L}p_k(t)\right) \sqrt{R_\fS(\vtheta)}.
\end{aligned}    
\right.
\end{equation}
Define the \textbf{stopping time} 
\begin{equation}\label{eq...text...Stopping-Time}
    t^\ast = \inf\{t \mid \vtheta(t)\notin \mathcal{N}\left(\vtheta^0\right)\},
\end{equation}
where the event is defined as 
\begin{equation*}
\mathcal{N}\left(\vtheta^0\right) := \left\{\vtheta \mid \Norm{\mG(\vtheta) - \mG\left(\vtheta^0\right)}_\mathrm{F}\leq  \left(\sum_{l=1}^{L+1} \frac{\kappa^2}{\alpha_l^2}\right)\frac{\lambda_\fS}{4}\right\},
\end{equation*}
and we observe immediately that 
the event $\mathcal{N}\left(\vtheta^0\right)\neq \varnothing$, since $\vtheta^0\in\mathcal{N}\left(\vtheta^0\right)$.
\begin{proposition}\label{B-prop...Loss-Initial-Decay}
Suppose $\sigma(\cdot)$ satisfies conditions  in Assumption \ref{Assumption....Activation-Function}, and $\fS$     satisfies conditions  in  Assumption 
 \ref{Assumption...Data}, if  $\sum_{k=1}^{L+1}\gamma_k<\frac{L+1}{2}$, i.e., $\lim_{m\to\infty} \frac{\log \kappa}{\log m}>0$, and
\[m=\Omega\left(\frac{n^2}{\lambda_\fS^2}\log n\right),\]
where $\lambda_\fS=\min_{l\in[L+1]}    \lambda_{\min}\left(\mK^{[l]}\right)$, then with high probability, for any time $t\in[0, t^\ast)$,
\begin{equation}\label{B-Prop...eq...Loss-Initial-Decay}
R_\fS(\vtheta(t)) \leq \exp\left(- \frac{1}{n}\left[\left(\sum_{l=1}^{L+1} \frac{\kappa^2}{\alpha_l^2}\right) {\lambda_\fS}\right]t\right)R_\fS(\vtheta(0)).
\end{equation}
\end{proposition}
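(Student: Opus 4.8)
The plan is to run a Gr\"onwall argument on the empirical risk, with the decay rate furnished by a lower bound on $\lambda_{\min}\bigl(\mG(\vtheta(t))\bigr)$ that holds uniformly on the interval $[0,t^\ast)$ cut out by the stopping time. Starting from the loss-dynamics inequality recorded in Section~\ref{Subsection...Problem-Setup}, namely $\frac{\D}{\D t}R_\fS(\vtheta(t))\le -\frac{2}{n}\lambda_{\min}\bigl(\mG(\vtheta(t))\bigr)R_\fS(\vtheta(t))$, the whole proposition reduces, on a high-probability event, to establishing
\[
\lambda_{\min}\bigl(\mG(\vtheta(t))\bigr)\ \ge\ \tfrac12\,\lambda_\fS\sum_{l=1}^{L+1}\frac{\kappa^2}{\alpha_l^2}\qquad\text{for every }t\in[0,t^\ast),
\]
and then integrating the resulting linear differential inequality.

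For the eigenvalue bound I would decompose $\mG$ via the scaling identity~\eqref{eq...text...Scaling-Relation-between-Normalized-and-Unnormalized-Gram-Matrix}, $\mG(\vtheta)=\sum_{l=1}^{L+1}\frac{\kappa^2}{\alpha_l^2}\,\overline{\mG}^{[l]}(\vtheta)$. Under the hypothesis $m=\Omega\!\left(\frac{n^2}{\lambda_\fS^2}\log n\right)$, Proposition~\ref{A-prop...Least-Eigenvalue-on-the-NTK} yields, with high probability, $\overline{\mG}^{[l]}(\vtheta^0)\succeq \tfrac34\lambda_\fS\,\mI_n$ for every $l\in[L+1]$ --- this is the only place the probabilistic statement and the lower bound on $m$ are used. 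Because the weights $\kappa^2/\alpha_l^2$ are positive, summing gives $\mG(\vtheta^0)\succeq \tfrac34\lambda_\fS\Bigl(\sum_{l}\kappa^2/\alpha_l^2\Bigr)\mI_n$, hence $\lambda_{\min}(\mG(\vtheta^0))\ge \tfrac34\lambda_\fS\sum_l\kappa^2/\alpha_l^2$. To move to positive times, observe that for $t\in[0,t^\ast)$ the defining inequality of $\mathcal{N}(\vtheta^0)$ keeps $\Norm{\mG(\vtheta(t))-\mG(\vtheta^0)}_\mathrm{F}\le \Bigl(\sum_l\kappa^2/\alpha_l^2\Bigr)\tfrac{\lambda_\fS}{4}$; since $\Norm{\cdot}_{2\to 2}\le\Norm{\cdot}_\mathrm{F}$ and $\mG$ is symmetric, Weyl's perturbation bound gives $\lambda_{\min}(\mG(\vtheta(t)))\ge \lambda_{\min}(\mG(\vtheta^0))-\Norm{\mG(\vtheta(t))-\mG(\vtheta^0)}_{2\to 2}\ge \bigl(\tfrac34-\tfrac14\bigr)\lambda_\fS\sum_l\kappa^2/\alpha_l^2$, which is exactly the claimed bound. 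Feeding this into the loss dynamics gives $\frac{\D}{\D t}R_\fS(\vtheta(t))\le -\frac{1}{n}\Bigl(\sum_{l=1}^{L+1}\kappa^2/\alpha_l^2\Bigr)\lambda_\fS\,R_\fS(\vtheta(t))$ on $[0,t^\ast)$, and comparison with the scalar linear ODE (Gr\"onwall) immediately produces~\eqref{B-Prop...eq...Loss-Initial-Decay}.

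Regarding difficulty: within this statement there is essentially no obstacle beyond bookkeeping once Proposition~\ref{A-prop...Least-Eigenvalue-on-the-NTK} is available --- one must only carry the weights $\kappa^2/\alpha_l^2$ through every estimate and check that the margin between $\tfrac34$ and $\tfrac14$ is strictly positive. The conceptual point worth stressing is that the decay is asserted \emph{only} up to $t^\ast$: it hinges on $\lambda_{\min}(\mG)$ remaining bounded away from $0$, which is guaranteed precisely by the radius chosen in the definition of $\mathcal{N}(\vtheta^0)$. The substantive work --- proving that $t^\ast$ is large, ideally $t^\ast=+\infty$ --- is not part of this proposition; it will require controlling the displacement of the normalized parameters through the $p_l(t)$ differential inequalities introduced just above, and it is there that the scaling hypothesis $\sum_k\gamma_k<\tfrac{L+1}{2}$, i.e.\ $\lim_{m\to\infty}\frac{\log\kappa}{\log m}>0$, does the real work. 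In the present proposition that hypothesis is simply inherited from the standing assumptions.
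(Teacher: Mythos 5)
Your proposal is correct and follows essentially the same route as the paper's own proof: the decomposition $\mG=\sum_l\frac{\kappa^2}{\alpha_l^2}\overline{\mG}^{[l]}$, the high-probability bound $\lambda_{\min}(\overline{\mG}^{[l]}(\vtheta^0))\ge\frac34\lambda_\fS$ from Proposition \ref{A-prop...Least-Eigenvalue-on-the-NTK}, the $\frac34-\frac14=\frac12$ margin supplied by the radius defining $\mathcal{N}(\vtheta^0)$ on $[0,t^\ast)$, and a Gr\"onwall integration. Your remarks about Weyl's perturbation bound and about the scaling hypothesis only entering through the later extension to $t^\ast=+\infty$ merely make explicit what the paper leaves implicit.
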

\begin{proof}
As we notice that
\[
\mG(\vtheta)=\sum_{l=1}^{L+1} \mG^{[l]} (\vtheta)=\sum_{l=1}^{L+1}\frac{\kappa^2}{\alpha_l^2} \overline{\mG}^{[l]} (\vtheta),
\]
therefore, based on Proposition \ref{A-prop...Least-Eigenvalue-on-the-NTK}, with high probability
\begin{equation}
\lambda_{\min}\left(\mG(\vtheta^0)\right)\geq \sum_{l=1}^{L+1} \lambda_{\min}\left(\mG^{[l]} (\vtheta^0)\right)\geq \left(\sum_{l=1}^{L+1}\frac{\kappa^2}{\alpha_l^2}\right) \frac{3}{4}\lambda_\fS,
\end{equation}
and for any $\vtheta\in\mathcal{N}\left(\vtheta^0\right)$, 
\begin{align*}
\lambda_{\min}\left(\mG(\vtheta)\right)& \geq \lambda_{\min}\left(\mG\left(\vtheta^0\right)\right) - \norm{\mG(\vtheta) - \mG\left(\vtheta^0\right)}_\mathrm{F}\\
& \geq \left(\sum_{l=1}^{L+1} \frac{\kappa^2}{\alpha_l^2}\right)\frac{3}{4}\lambda_\fS-\left(\sum_{l=1}^{L+1} \frac{\kappa^2}{\alpha_l^2}\right)\frac{\lambda_\fS}{4}= \left(\sum_{l=1}^{L+1} \frac{\kappa^2}{\alpha_l^2}\right)\frac{\lambda_\fS}{2}.
\end{align*}
Finally, we obtain that
\begin{equation*}
\begin{aligned}
\frac{\D}{\D t}R_\fS(\vtheta)& =  \frac{\D}{\D t}\left(\frac{1}{2n}\sum_{i=1}^n e_i^2\right) =\frac{1}{n}\sum_{i=1}^n e_i\frac{\D e_i}{\D t} =-\frac{1}{n^2}\sum_{i=1}^n  e_i\left(\sum_{l=1}^{L+1}\frac{\kappa^2}{\alpha_l^2}\overline{\mG}^{[l]} (\vtheta)\right)e_j\\
&\leq -\frac{2}{n}\lambda_{\min}\left(\sum_{l=1}^{L+1}\frac{\kappa^2}{\alpha_l^2}\overline{\mG}^{[l]} \right)\frac{1}{2n}\sum_{i=1}^n  e_i^2 \leq -\frac{1}{n}\left[\left(\sum_{l=1}^{L+1} \frac{\kappa^2}{\alpha_l^2}\right) {\lambda_\fS}\right] R_\fS(\vtheta),
\end{aligned}\end{equation*}
and immediate integration yields the result.
\end{proof}
\noindent
Since relation \eqref{B-Prop...eq...Loss-Initial-Decay} holds for any time $t\in[0, t^\ast)$,  we obtain that 
\begin{equation*}
\begin{aligned}
\sqrt{R_\fS(\vtheta)}&\leq \exp\left(-\frac{1}{2n} \left[\sum_{l=1}^{L+1}\frac{\kappa^2}{\alpha_l^2}\lambda_\fS\right] t \right) \sqrt{R_\fS(\vtheta(0))}.
\end{aligned}
\end{equation*}
Therefore, we have for any $l\in[L]$ and time $t\in[0, t^\ast)$,
\begin{equation*}
\begin{aligned}
 p_l(t) &\leq  p_l(0)+\frac{\sqrt{2}\kappa}{\alpha_l^2}\left(\prod_{k=1}^{l-1}p_k(t)\right)\left(\prod_{k=l+1}^{L+1}p_k(t)\right)\int_{0}^t\sqrt{R_\fS(\vtheta(s))}\D s \\
&\leq p_l(0)+\frac{\sqrt{2}\kappa\sqrt{R_\fS(\vtheta(0))}}{\alpha_l^2}\left(\prod_{k=1}^{l-1}p_k(t)\right)\left(\prod_{k=l+1}^{L+1}p_k(t)\right)\int_{0}^{+\infty}\exp\left(-\frac{1}{2n} \left[\sum_{l=1}^{L+1}\frac{\kappa^2}{\alpha_l^2}\lambda_\fS\right] s \right)\D s\\
&\leq p_l(0)+\frac{\sqrt{2}\kappa}{\alpha_l^2} \frac{2n\sqrt{R_\fS(\vtheta(0))}}{\left(\sum_{l=1}^{L+1} \frac{\kappa^2}{\alpha_l^2}\right) {\lambda_\fS}} \left(\prod_{k=1}^{l-1}p_k(t)\right)\left(\prod_{k=l+1}^{L+1}p_k(t)\right), 
\end{aligned}    
\end{equation*}
and by similar reasoning 
\begin{equation*}
\begin{aligned}
 p_{L+1}(t) &\leq  p_{L+1}(0)+  \frac{\sqrt{2}\kappa}{\alpha_{L+1}^2} \frac{2n\sqrt{R_\fS(\vtheta(0))}}{\left(\sum_{l=1}^{L+1} \frac{\kappa^2}{\alpha_l^2}\right) {\lambda_\fS}} \left(\prod_{k=1}^{L}p_k(t)\right).
\end{aligned}    
\end{equation*}
Since $\lim_{m\to\infty}\frac{\log \kappa}{\log m}>0$,   then  $\kappa>1$,  and for any $l\in[L+1]$,
\begin{equation*}
    \frac{\sqrt{2}\kappa}{\alpha_{l}^2} \frac{2n\sqrt{R_\fS(\vtheta(0))}}{\left(\sum_{l=1}^{L+1} \frac{\kappa^2}{\alpha_l^2}\right) {\lambda_\fS}}\leq \frac{2\sqrt{2}n\sqrt{R_\fS(\vtheta(0))}}{\kappa \lambda_\fS},
\end{equation*}
therefore, we obtain that  
\begin{equation}
\left\{
\begin{aligned}
    p_l(t)&\leq p_l(0)+\frac{2\sqrt{2}n\sqrt{R_\fS(\vtheta(0))}}{\kappa \lambda_\fS} \left(\prod_{k=1}^{l-1}p_k(t)\right)\left(\prod_{k=l+1}^{L+1}p_k(t)\right),~~l\in[L],\\ 
        p_{L+1}(t)&\leq p_{L+1}(0)+\frac{2\sqrt{2}n\sqrt{R_\fS(\vtheta(0))}}{\kappa \lambda_\fS} \left(\prod_{k=1}^{L}p_k(t)\right),
\end{aligned}    
\right.
\end{equation}
and if we choose $m$ large enough, then $\frac{1}{\kappa}\to 0$, hence $p_l(t)$ hardly varies for time $t\in[0, t^\ast)$. Consequently, we state  Proposition \ref{B-prop...Parameter-no-Movement-2-Norm}, whose rigorous proof can be found in Appendix \ref{B-subsection...Proof-of-Proposition-Parameter-no-Movements}.
\begin{proposition}\label{B-prop...Parameter-no-Movement-2-Norm}
Suppose $\sigma(\cdot)$ satisfies conditions  in Assumption \ref{Assumption....Activation-Function}, and $\fS$     satisfies conditions  in  Assumption 
 \ref{Assumption...Data}, if $\sum_{k=1}^{L+1}\gamma_k<\frac{L+1}{2}$, i.e., $\lim_{m\to\infty} \frac{\log \kappa}{\log m}>0$, and
\[m=\max\left\{\Omega\left(\frac{n^2}{\lambda_\fS^2}\log n\right),  \Omega\left(\left( \frac{n}{  \lambda_\fS}\right)^{\frac{1}{\frac{L+1}{2}-\sum_{k=1}^{L+1}\gamma_k}}\right)\right\},\]
where $\lambda_\fS=\min_{l\in[L+1]}    \lambda_{\min}\left(\mK^{[l]}\right)$, then with high probability,     for any $l\in[L]$ and   $t\in[0, t^\ast)$,
\begin{equation}\label{B-Prop...eq...Parameter-no-Movement-2-Norm}
\begin{aligned}
\Norm{\frac{\bar{\va}(t)}{\sqrt{m}}-\frac{\bar{\va}(0)}{\sqrt{m}}}_2  &\leq\frac{2\sqrt{2}n\sqrt{R_\fS(\vtheta^0)}}{\kappa \lambda_\fS}4^L,\\
\Norm{\frac{\overline{\mW}^{[l]}(t)}{\sqrt{m}}-\frac{\overline{\mW}^{[l]}(0)}{\sqrt{m}}}_{2\to 2} 
&\leq \frac{2\sqrt{2}n\sqrt{R_\fS(\vtheta^0)}}{\kappa \lambda_\fS}4^L.
\end{aligned}
\end{equation}
\end{proposition}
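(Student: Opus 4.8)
The plan is to turn the self-referential inequalities for $\{p_l(t)\}_{l=1}^{L+1}$ obtained just above the statement into \emph{uniform} bounds via a continuity (bootstrap) argument on $[0,t^\ast)$, and then to substitute those bounds back into the integrated normalized dynamics \eqref{eqgroup...text...Normalized-Dynamics}. First I would pin down the parameters at initialization. Since $\overline{\mW}^{[l]}(0)/\sqrt m$ has i.i.d.\ $\fN(0,1/m)$ entries, the standard non-asymptotic estimate for the operator norm of a Gaussian matrix gives, with high probability, $p_l(0)=\Norm{\overline{\mW}^{[l]}(0)/\sqrt m}_{2\to 2}\le 3$ for every $l\in[L]$ (the first layer being $m\times d$ with $d$ fixed, its normalized operator norm is even $1+o(1)$) and $p_{L+1}(0)=\Norm{\bar{\va}(0)/\sqrt m}_2\le 2$. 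In the same step I would record a high-probability bound on the initial risk: conditioning on $\{\overline{\mW}^{[l]}(0)\}$, the value $\bar{f}_{\vtheta^0}(\vx_i)$ is a centered Gaussian with variance $\Norm{\bar{\vx}_i^{[L]}}_2^2/m=\fO(1/m)$ (using $\Norm{\bar{\vx}_i^{[L]}}_2=\fO(1)$, which follows from Proposition~\ref{A-prop...Concentration-on-L+1-th-Gram-Matrix} together with Proposition~\ref{Proposition...Positive-Definiteness-of-Limiting-Gram-Matrix}), so $\Abs{f_{\vtheta^0}(\vx_i)}=\kappa\Abs{\bar{f}_{\vtheta^0}(\vx_i)}\le C\kappa/\sqrt m$ with high probability, whence $R_\fS(\vtheta^0)\le C^2\kappa^2/m+c^2$ and in particular $\sqrt{R_\fS(\vtheta^0)}/\kappa\to 0$ as $m\to\infty$.

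Next I would run the bootstrap. Because the GD flow is smooth, $t\mapsto\Norm{\overline{\mW}^{[l]}(t)/\sqrt m}_{2\to 2}$ and $t\mapsto\Norm{\bar{\va}(t)/\sqrt m}_2$ are continuous on $[0,t^\ast)$, hence each $p_l$ from \eqref{eq...text...Definition-of-pL+pl} is continuous and non-decreasing. Set
\[
T:=\sup\Big\{t\in[0,t^\ast)~\Big|~p_k(s)\le 4\ \text{for all }k\in[L+1],\ s\in[0,t]\Big\}.
\]
On $[0,T]$ each of the products appearing in the pre-statement inequalities involves exactly $L$ of the $p_k$'s, hence is at most $4^{L}$, so those inequalities yield $p_l(t)\le p_l(0)+\tfrac{2\sqrt2\,n\sqrt{R_\fS(\vtheta^0)}}{\kappa\lambda_\fS}\,4^{L}$ for every $l$. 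Now $\kappa=\prod_{k=1}^{L+1}\sqrt m\,\beta_k=m^{\frac{L+1}{2}-\sum_{k=1}^{L+1}\gamma_k+o(1)}$ with exponent $\frac{L+1}{2}-\sum_k\gamma_k>0$ by hypothesis; combining this with the lower bound $m=\Omega\big((n/\lambda_\fS)^{1/(\frac{L+1}{2}-\sum_k\gamma_k)}\big)$ and the estimate $\sqrt{R_\fS(\vtheta^0)}\le C\kappa/\sqrt m+c$ from the first step, the perturbation term $\tfrac{2\sqrt2\,n\sqrt{R_\fS(\vtheta^0)}}{\kappa\lambda_\fS}4^{L}$ is at most $\tfrac12$ once $m$ is large enough. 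Thus $p_l(t)\le p_l(0)+\tfrac12\le\tfrac72<4$ on $[0,T]$; by continuity this is incompatible with $T<t^\ast$, so $T=t^\ast$ and $p_l(t)\le\tfrac72$ for all $l\in[L+1]$ and all $t\in[0,t^\ast)$.

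Finally I would integrate the dynamics. From \eqref{eqgroup...text...Normalized-Dynamics}, Cauchy--Schwarz, $\Norm{\bar{\vx}_i^{[l]}}_2=\fO(1)$, and $R_\fS(\vtheta)=\tfrac1{2n}\sum_i e_i^2$,
\[
\Norm{\frac{\bar{\va}(t)}{\sqrt m}-\frac{\bar{\va}(0)}{\sqrt m}}_2\le \frac{\sqrt2\,\kappa}{\alpha_{L+1}^2}\Big(\prod_{k=1}^{L}p_k(t)\Big)\int_0^t\sqrt{R_\fS(\vtheta(s))}\,\D s,
\]
and analogously $\Norm{\overline{\mW}^{[l]}(t)/\sqrt m-\overline{\mW}^{[l]}(0)/\sqrt m}_{2\to2}\le \tfrac{\sqrt2\,\kappa}{\alpha_l^2}\big(\prod_{k\ne l}p_k(t)\big)\int_0^t\sqrt{R_\fS(\vtheta(s))}\,\D s$. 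By Proposition~\ref{B-prop...Loss-Initial-Decay} one has $\sqrt{R_\fS(\vtheta(s))}\le\exp\!\big(-\tfrac1{2n}\big(\sum_l\kappa^2/\alpha_l^2\big)\lambda_\fS\,s\big)\sqrt{R_\fS(\vtheta^0)}$ on $[0,t^\ast)$, so $\int_0^t\sqrt{R_\fS(\vtheta(s))}\,\D s\le \tfrac{2n}{(\sum_l\kappa^2/\alpha_l^2)\lambda_\fS}\sqrt{R_\fS(\vtheta^0)}$; since $\sum_l\kappa^2/\alpha_l^2\ge\kappa^2/\alpha_l^2$ gives $\tfrac{\kappa}{\alpha_l^2}\cdot\tfrac{2n}{\sum_k\kappa^2/\alpha_k^2}\le\tfrac{2n}{\kappa}$, and $\prod p_k(t)\le(7/2)^L\le 4^{L}$, we recover exactly the asserted bound $\tfrac{2\sqrt2\,n\sqrt{R_\fS(\vtheta^0)}}{\kappa\lambda_\fS}4^{L}$ for both $\bar{\va}$ and every $\overline{\mW}^{[l]}$.

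I expect the main obstacle to be the bootstrap closure: one must check, on the high-probability event where the random-matrix bound on $p_l(0)$ and the bound $R_\fS(\vtheta^0)\le C^2\kappa^2/m+c^2$ both hold, that the asymptotics $\kappa\asymp m^{\frac{L+1}{2}-\sum_k\gamma_k}$ combined with the explicit threshold on $m$ genuinely drive the perturbation term $\tfrac{2\sqrt2\,n\sqrt{R_\fS(\vtheta^0)}}{\kappa\lambda_\fS}4^{L}$ below the chosen slack uniformly in $t$, so that $p_l$ cannot escape the box $[0,4]$ before $t^\ast$; the two integrations and the use of Propositions~\ref{A-prop...Least-Eigenvalue-on-the-NTK} and~\ref{B-prop...Loss-Initial-Decay} are then routine bookkeeping (the first part $m=\Omega(n^2\lambda_\fS^{-2}\log n)$ of the width hypothesis is what those two propositions consume).
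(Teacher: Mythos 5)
Your proposal is correct and follows essentially the same route as the paper's proof: integrate the normalized dynamics \eqref{eqgroup...text...Normalized-Dynamics} against the exponential loss decay of Proposition \ref{B-prop...Loss-Initial-Decay}, use the inequality $\frac{\kappa}{\alpha_l^2}\cdot\frac{2n}{\sum_k \kappa^2/\alpha_k^2}\leq \frac{2n}{\kappa}$, control the initial norms $p_l(0)$ by the Gaussian estimates behind Proposition \ref{A-prop..Upper-Bound-and-Lower-Bound-Initial-Parameter}, and close the self-referential bounds on $p_l(t)$ with the same continuity/bootstrap argument keeping $p_l\leq 4$ on $[0,t^\ast)$. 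The only differences are cosmetic constants (threshold $4$ vs.\ the paper's $3$, slack $\tfrac12$ vs.\ $3^L/4^L$) and your added explicit bound on $R_\fS(\vtheta^0)$, which the paper instead absorbs into the width threshold.
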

\noindent
Finally, we remark that  since for any $l\in[L]$, the dynamics of $\frac{\overline{\mW}^{[l]}}{\sqrt{m}}$ always takes the form
\[
\frac{\mathrm{d} \frac{\overline{\mW}^{[l]}}{\sqrt{m}}}{\mathrm{d} t}    =\vu\otimes\vz,
\]
for some vector $\vu$ and $\vz$. Therefore,  the following estimates hold true regardless of the choice of operator norm or Frobenius norm,  
\begin{align*}
\frac{\D \Norm{\frac{\overline{\mW}^{[l]}}{\sqrt{m}}}_{2\to 2}}{\D t} 
&\leq   \norm{\vu}_2   \norm{\vz}_2,\\
\frac{\D \Norm{\frac{\overline{\mW}^{[l]}}{\sqrt{m}}}_{\mathrm{F}}}{\D t} 
&\leq   \norm{\vu}_2   \norm{\vz}_2,
\end{align*}
hence the variation of  $\Norm{\frac{\overline{\mW}^{[l]}(\cdot)}{\sqrt{m}}}_{\mathrm{F}}$ shares the same upper bound as the variation of  $\Norm{\frac{\overline{\mW}^{[l]}(\cdot)}{\sqrt{m}}}_{2\to 2}$. Thus, a corollary  is immediately obtained as follows.
\begin{corollary}\label{B-Cor...Parameter-no-Movement-Frobenius-Norm}
Suppose $\sigma(\cdot)$ satisfies conditions  in Assumption \ref{Assumption....Activation-Function}, and $\fS$     satisfies conditions  in  Assumption 
 \ref{Assumption...Data}, if $\sum_{k=1}^{L+1}\gamma_k<\frac{L+1}{2}$, i.e., $\lim_{m\to\infty} \frac{\log \kappa}{\log m}>0$, and
\[m=\max\left\{\Omega\left(\frac{n^2}{\lambda_\fS^2}\log n\right),  \Omega\left(\left( \frac{n}{  \lambda_\fS}\right)^{\frac{1}{\frac{L+1}{2}-\sum_{k=1}^{L+1}\gamma_k}}\right)\right\},\]
where $\lambda_\fS=\min_{l\in[L+1]}    \lambda_{\min}\left(\mK^{[l]}\right)$, then with high probability,     for any $l\in[L]$ and  $t\in[0, t^\ast)$,
\begin{equation}\label{B-Cor...eq...Parameter-no-Movement-Frobenius-Norm}
\begin{aligned} 
\Norm{\frac{\overline{\mW}^{[l]}(t)}{\sqrt{m}}-\frac{\overline{\mW}^{[l]}(0)}{\sqrt{m}}}_{\mathrm{F}} 
&\leq \frac{2\sqrt{2}n\sqrt{R_\fS(\vtheta^0)}}{\kappa \lambda_\fS}4^L.
\end{aligned}
\end{equation}
\end{corollary}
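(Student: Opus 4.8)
The plan is to deduce the corollary directly from Proposition~\ref{B-prop...Parameter-no-Movement-2-Norm} by isolating the single place where the Frobenius and operator norms could differ and observing that there they in fact coincide. First I would recall from the normalized dynamics \eqref{eqgroup...text...Normalized-Dynamics} that for every $l\in[L]$ the derivative $\frac{\D}{\D t}\frac{\overline{\mW}^{[l]}}{\sqrt m}$ is a finite sum over the sample index $i\in[n]$ of rank-one matrices $\vu_i\otimes\vz_i$, with $\vz_i=\bar{\vx}_i^{[l-1]}$ and $\vu_i$ the corresponding backward-propagated vector. The elementary fact underlying everything is that for a rank-one matrix $\Norm{\vu\otimes\vz}_{\mathrm{F}}=\norm{\vu}_2\norm{\vz}_2=\Norm{\vu\otimes\vz}_{2\to 2}$; hence, applying the triangle inequality termwise, the bound on $\Norm{\frac{\D}{\D t}\frac{\overline{\mW}^{[l]}}{\sqrt m}}_{\mathrm{F}}$ is literally identical to the bound on $\Norm{\frac{\D}{\D t}\frac{\overline{\mW}^{[l]}}{\sqrt m}}_{2\to 2}$ used in the proof of Proposition~\ref{B-prop...Parameter-no-Movement-2-Norm} --- this is exactly the content of the two displayed derivative inequalities preceding the corollary.

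Next I would pass to the integral form of the increment,
\[
\frac{\overline{\mW}^{[l]}(t)}{\sqrt m}-\frac{\overline{\mW}^{[l]}(0)}{\sqrt m}=\int_0^t\frac{\D}{\D s}\frac{\overline{\mW}^{[l]}(s)}{\sqrt m}\,\D s,
\]
and take Frobenius norms, using the triangle inequality for the integral followed by the rank-one identity above, to obtain
\[
\Norm{\frac{\overline{\mW}^{[l]}(t)}{\sqrt m}-\frac{\overline{\mW}^{[l]}(0)}{\sqrt m}}_{\mathrm{F}}\leq\int_0^t\Norm{\frac{\D}{\D s}\frac{\overline{\mW}^{[l]}(s)}{\sqrt m}}_{\mathrm{F}}\D s=\int_0^t\Norm{\frac{\D}{\D s}\frac{\overline{\mW}^{[l]}(s)}{\sqrt m}}_{2\to 2}\D s.
\]
The right-hand side is precisely the quantity already estimated in the proof of Proposition~\ref{B-prop...Parameter-no-Movement-2-Norm}: it is controlled, via the exponential loss decay of Proposition~\ref{B-prop...Loss-Initial-Decay}, a Cauchy--Schwarz step in the sample index, the bootstrap bounds on the operator norms $p_k(\cdot)$, and the fact that $\kappa>1$ together with the lower bound on $m$, by $\frac{2\sqrt{2}\,n\sqrt{R_\fS(\vtheta^0)}}{\kappa\lambda_\fS}4^{L}$ for all $t\in[0,t^\ast)$. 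Since every coefficient appearing there is an operator norm $p_k(t)=\sup_{s\leq t}\Norm{\overline{\mW}^{[k]}(s)/\sqrt m}_{2\to 2}$ already under control on the same high-probability event, no new estimate is required, and \eqref{B-Cor...eq...Parameter-no-Movement-Frobenius-Norm} follows.

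The nearest thing to an obstacle is a temptation to avoid: one should \emph{not} re-prove the corollary by defining $q_l(t):=\sup_{s\leq t}\Norm{\overline{\mW}^{[l]}(s)/\sqrt m}_{\mathrm{F}}$ and re-deriving the coupled system of differential inequalities, since $\Norm{\cdot}_{\mathrm{F}}\geq\Norm{\cdot}_{2\to 2}$ in general and the initial data together with the intermediate product factors would then have to be re-controlled, making the self-consistent bootstrap nontrivial to close. The rank-one reduction sidesteps this entirely by pinning down the one norm-sensitive ingredient (the size of a rank-one update, where the two norms agree) and thereby lets one invoke the operator-norm estimate as a black box; this is why the statement can legitimately be recorded as a corollary without a separate argument.
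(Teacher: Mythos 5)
Your proposal is correct and follows essentially the same route as the paper: the paper's own proof also notes that the update $\frac{\D}{\D t}\frac{\overline{\mW}^{[l]}}{\sqrt m}$ has the rank-one form $\vu\otimes\vz$, so the Frobenius-norm derivative obeys the identical bound as the operator-norm derivative, and then integrates it using the loss decay of Proposition \ref{B-prop...Loss-Initial-Decay} together with the bootstrap bounds $p_k(t)\leq 4$ and $\kappa>1$ from Proposition \ref{B-prop...Parameter-no-Movement-2-Norm}. No discrepancy to report.
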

\begin{proof}
By taking  Frobenius norm into dynamics \eqref{eqgroup...text...Normalized-Dynamics}, then for any $l\in[L-1]$,  
\begin{align*}
\frac{\D \Norm{\frac{\overline{\mW}^{[l]}}{\sqrt{m}}}_{\mathrm{F}}}{\D t} &\leq\frac{\sqrt{2}\kappa}{\alpha_l^2} \left(\prod_{k=1}^{l-1}\Norm{\frac{\overline{\mW}^{[k]}}{\sqrt{m}}}_{2\to 2}\right)\left(\prod_{k=l+1}^{L}\Norm{ \frac{\overline{\mW}^{[k]}}{\sqrt{m}} }_{2\to 2}\right)\Norm{\frac{\bar{\va}}{\sqrt{m}}}_2 \sqrt{R_\fS(\vtheta)},\\
\frac{\D \Norm{\frac{\overline{\mW}^{[L]}}{\sqrt{m}}}_{\mathrm{F}}}{\D t} &\leq\frac{\sqrt{2}\kappa}{\alpha_L^2} \left(\prod_{k=1}^{L-1}\Norm{\frac{\overline{\mW}^{[k]}}{\sqrt{m}}}_{2\to 2}\right) \Norm{\frac{\bar{\va}}{\sqrt{m}}}_2 \sqrt{R_\fS(\vtheta)}, 
\end{align*}
and we obtain immediately that 
for any $l\in[L]$ and  time $t\in[0,t^*)$,
\begin{align*}
 \Norm{\frac{\overline{\mW}^{[l]}(t)}{\sqrt{m}}-\frac{\overline{\mW}^{[l]}(0)}{\sqrt{m}}}_{\mathrm{F}} &\leq \frac{2\sqrt{2}n\sqrt{R_\fS(\vtheta^0)}}{\kappa \lambda_\fS} \left(\prod_{k=1}^{l-1}p_k(t)\right)\left(\prod_{k=l+1}^{L+1}p_k(t)\right)\\
&\leq \frac{2\sqrt{2}n\sqrt{R_\fS(\vtheta^0)}}{\kappa \lambda_\fS}4^L.
\end{align*}    
\end{proof}
\subsection{Theta-lazy Regime}\label{sub-theta}

It is known that the  output function of  $L$-layer NNs  is linear with respect to $\vtheta_a$, hence for any $l\in[L]$, if the set of parameters $\vtheta_{\mW^{[l]}}$ remain stuck to its initialization throughout the whole training process, then the training dynamics of $L$-layer NNs   can be linearized around the initialization. The $\vtheta$-lazy regime area precisely corresponds to the region where the output function of $L$-layer NNs   can be well approximated by its linearized model, i.e.,  
\begin{equation}\label{eq...text...Taylor}
\begin{aligned}
 f_{\vtheta}(\vx)&\approx f\left(\vx, \vtheta(0)\right)+\left<\nabla_{\va} f\left(\vx, \vtheta(0)\right),   \vtheta_a(t)- \vtheta_a(0) \right> \\
&~~~~~~~~~~~~~~~~~~+\sum_{l=1}^L\left<\nabla_{{\mW^{[l]}}} f\left(\vx, \vtheta(0)\right),  \vtheta_{\mW^{[l]}}(t)- \vtheta_{\mW^{[l]}}(0) \right>.
\end{aligned}
\end{equation}
In general, this linear approximation  holds valid only when  $\vtheta_{\mW^{[l]}}(t)$ remains within a small neighbourhood of $ \vtheta_{\mW^{[l]}}(0)$ for every $l\in[L]$.
Since the size of this neighbourhood  scales with $\Norm{ \vtheta_{\mW^{[l]}}(0)}_{2}$,  and  as shown in relation \eqref{vec},  $\vtheta_{\mW^{[l]}}$ is the  vectorized form of  $\mW^{[l]}$,
hence it shall be noted that for any $l\in[L]$,
\[\Norm{{\vtheta}_{\mW^{[l]}}}_2=\Norm{{{\mW}^{[l]}}}_{\mathrm{F}},\]
and   the following quantity is employed to characterize how far $ \vtheta_{\mW^{[l]}}(t)$ deviates away from $ \vtheta_{\mW^{[l]}}(0)$,  
\begin{equation}\label{eq...text...MainResults...Regime-Characterization}
   \mathrm{RD}\left( {\mW^{[l]}}\right)(t):=\frac{\Norm{ \vtheta_{\mW^{[l]}}(t)- \vtheta_{\mW^{[l]}}(0)}_{2}}{\Norm{ \vtheta_{\mW^{[l]}}(0)}_{2}}=\frac{\Norm{\mW^{[l]}(t)-\mW^{[l]}(0)}_{\mathrm{F}}}{\Norm{ \mW^{[l]}(0)}_{\mathrm{F}}}.
\end{equation}
Hence    for any $l\in[L]$, we  firstly  identify the parameters 
\begin{equation}
\Bar{\vtheta}_{\mW^{[l]}}:=\mathrm{vec} \left(\frac{\overline{\mW}^{[l]}}{\sqrt{m}}\right),~~\Bar{\vtheta}_{\va}:= \frac{\bar{\va}}{\sqrt{m}},
\end{equation} 
and it also shall be noted that for any $l\in[L]$,
\[\Norm{\Bar{\vtheta}_{\mW^{[l]}}}_2=\Norm{\frac{\overline{\mW}^{[l]}}{\sqrt{m}}}_{\mathrm{F}}.\]
More importantly,   we observe that 
\begin{equation*}
 \mathrm{RD}\left( {\mW^{[l]}}\right)(t)=\frac{\Norm{\mW^{[l]}(t)-\mW^{[l]}(0)}_{\mathrm{F}}}{\Norm{ \mW^{[l]}(0)}_{\mathrm{F}}}=\frac{\Norm{\frac{\overline{\mW}^{[l]}(t)}{\sqrt{m}}-\frac{\overline{\mW}^{[l]}(0)}{\sqrt{m}}}_{\mathrm{F}}}{\Norm{\frac{\overline{\mW}^{[l]}(0)}{\sqrt{m}}}_{\mathrm{F}}}.    
\end{equation*}
We shall provide some estimates  on the upper   and lower bounds of  initial parameters $\bar{\vtheta}^0$. 
\begin{proposition}[Upper and lower bounds of initial parameters]\label{A-prop..Upper-Bound-and-Lower-Bound-Initial-Parameter}
With high probability   over the choice of $\vtheta^0$,  for any $l\in[L]$,
\begin{equation}\label{A-prop...eq...Upper-Bound-and-Lower-Bound-Initial-Parameter...2-2-Norm}
\begin{aligned}
\sqrt{\frac{1}{2}}& \leq \Norm{\frac{\bar{\va}(0)}{\sqrt{m}}}_2\leq \sqrt{\frac{3}{2}}, \\
{\frac{1}{2}} & \leq \Norm{\frac{\overline{\mW}^{[l]} (0)}{\sqrt{m}}}_{2\to 2}\leq 2. 
\end{aligned}
\end{equation}
Moreover, for any $l\in[2:L]$,
\begin{equation}\label{A-prop...eq...Upper-Bound-and-Lower-Bound-Initial-Parameter...Frobenius-Norm}
\begin{aligned}
\sqrt{\frac{md}{2}} &\leq \Norm{{\overline{\mW}^{[1]} (0)}}_{\mathrm{F}}\leq \sqrt{\frac{3md}{2}},\\
\sqrt{\frac{1}{2}}m &\leq \Norm{{\overline{\mW}^{[l]} (0)}}_{\mathrm{F}}\leq \sqrt{\frac{3}{2}}m.
\end{aligned}
\end{equation}
\end{proposition}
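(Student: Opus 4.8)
The plan is to observe first that the normalization \eqref{eq...text...Prelim...Normalization-on-Parameters} makes this proposition a pure statement about standard Gaussian matrices, with the scaling factors $\{\beta_l\}$ playing no role. Indeed, by \eqref{eq...text...Prelim...Initialization-Scheme} we have $\overline{\mW}^{[l]}_{i,j}(0)=\mW^{[l]}_{i,j}(0)/\beta_l\sim\fN(0,1)$ and $\bar{\va}_k(0)=\va_k(0)/\beta_{L+1}\sim\fN(0,1)$, all mutually independent. Next I would dispose of the $\ell^2$ norm of $\bar{\va}(0)/\sqrt m$ and of all the Frobenius norms, since each is a rescaled sum of i.i.d.\ $\chi^2_1$ variables: $\Norm{\bar{\va}(0)/\sqrt m}_2^2=\tfrac1m\sum_{k=1}^m\bar a_k(0)^2$ is a sum of $m$ terms with mean $1$; $\Norm{\overline{\mW}^{[1]}(0)}_{\mathrm F}^2=\sum_{i=1}^m\sum_{j=1}^d\overline{\mW}^{[1]}_{i,j}(0)^2$ is a sum of $md$ terms with mean $md$; and $\Norm{\overline{\mW}^{[l]}(0)}_{\mathrm F}^2=\sum_{i,j=1}^m\overline{\mW}^{[l]}_{i,j}(0)^2$ for $l\in[2:L]$ is a sum of $m^2$ terms with mean $m^2$. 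Since $\chi^2_1$ is sub-exponential, Bernstein's inequality (Theorem~\ref{A-Thm...Bernstein-Inequality}) gives, for such a sum $S_N$ of $N$ terms with mean $\mu_N$, a bound $\Prob(\Abs{S_N-\mu_N}\geq\tfrac12\mu_N)\leq 2\exp(-cN)$ for an absolute constant $c>0$. With $N=m$, $md$, $m^2$ respectively this is at most $e^{-m^{\epsilon}}$ for a suitable $\epsilon\in(0,1)$, so with high probability $S_N\in[\tfrac12\mu_N,\tfrac32\mu_N]$; taking square roots yields exactly \eqref{A-prop...eq...Upper-Bound-and-Lower-Bound-Initial-Parameter...Frobenius-Norm} and the $\bar{\va}$ line of \eqref{A-prop...eq...Upper-Bound-and-Lower-Bound-Initial-Parameter...2-2-Norm}.

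For the operator-norm bounds $\tfrac12\leq\Norm{\overline{\mW}^{[l]}(0)/\sqrt m}_{2\to2}\leq 2$, the lower bound is cheap: the operator norm dominates the norm of the first column, $\Norm{\overline{\mW}^{[l]}(0)/\sqrt m}_{2\to2}\geq\Norm{\overline{\mW}^{[l]}(0)e_1/\sqrt m}_2$, and the latter equals $\big(\tfrac1m\sum_i\overline{\mW}^{[l]}_{i,1}(0)^2\big)^{1/2}$, a $\tfrac1m$-rescaled $\chi^2$ with mean $1$, which exceeds $\sqrt{1/2}>\tfrac12$ with high probability by the same Bernstein estimate. For the upper bound I would invoke the standard non-asymptotic control of the largest singular value of a Gaussian matrix: Gordon's comparison inequality gives $\Exp\Norm{G}_{2\to2}\leq\sqrt{m_1}+\sqrt{m_2}$ for an $m_1\times m_2$ matrix $G$ with i.i.d.\ $\fN(0,1)$ entries, and since $G\mapsto\Norm{G}_{2\to2}$ is $1$-Lipschitz with respect to $\Norm{\cdot}_{\mathrm F}$, Gaussian concentration of measure gives $\Prob(\Norm{G}_{2\to2}\geq\sqrt{m_1}+\sqrt{m_2}+t)\leq\exp(-t^2/2)$. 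Taking $(m_1,m_2)=(m,d)$ for $l=1$ (where the bound is in fact $1+\sqrt{d/m}$ up to lower order) and $(m_1,m_2)=(m,m)$ for $l\in[2:L]$, dividing by $\sqrt m$, and choosing $t$ a small enough multiple of $\sqrt m$ shows $\Norm{\overline{\mW}^{[l]}(0)/\sqrt m}_{2\to2}\leq 2$ for $m$ exceeding an absolute constant, with probability at least $1-e^{-cm}$. An $\epsilon$-net argument over the unit sphere (covering number $(3/\epsilon)^{m_2}$, one $\chi^2$ tail per direction, union bound) is an equivalent alternative, the net factor being absorbed by the per-direction $e^{-cm\eta^2}$ tail.

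Finally, since $L$ is fixed there are only $O(1)$ events above, so a union bound shows all the stated inequalities hold simultaneously with high probability in the sense of Section~\ref{Subsection...Notations}, and with no restriction on $m$ beyond an absolute threshold (consistent with the absence of any $m=\Omega(\cdot)$ hypothesis in the statement). The one step that goes beyond one-dimensional concentration — and hence the main technical point — is the operator-norm upper bound: it needs either Gordon/Slepian-type comparison for the expectation or a net argument with a genuine union bound over the sphere, and the constant $2$ there is the limiting value $\sqrt{m_1/m}+\sqrt{m_2/m}$, matched up to lower-order deviations that are swallowed by choosing $m$ large.
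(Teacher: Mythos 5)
Your proposal is correct and follows essentially the same route as the paper: the paper gets the operator-norm and $\ell^2$ bounds by citing Lemma \ref{A-Lemma...Operator-Norm-Random-Matrix} and Lemma \ref{A-Lemma......2-Norm-Chi-Square} (whose content — Gordon-type comparison plus Gaussian concentration, resp.\ $\chi^2$ concentration — you simply re-derive), and it handles the Frobenius norms exactly as you do, via Bernstein's inequality (Theorem \ref{A-Thm...Bernstein-Inequality}) applied to sums of $\chi^2(1)$ variables with $\eta=\tfrac12$. The only differences are cosmetic: you make explicit the lower bound on $\Norm{\frac{\overline{\mW}^{[l]}(0)}{\sqrt{m}}}_{2\to 2}$ via a first-column argument that the paper leaves implicit, and for the square layers both you and the paper really obtain $2+o(1)$ rather than exactly $2$ from the bound $\sqrt{m_1}+\sqrt{m_2}+t$.
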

\begin{proof}
 Establishment of relation \eqref{A-prop...eq...Upper-Bound-and-Lower-Bound-Initial-Parameter...2-2-Norm} arises directly from Lemma \ref{A-Lemma...Operator-Norm-Random-Matrix} and Lemma \ref{A-Lemma......2-Norm-Chi-Square}.
  As for relation \eqref{A-prop...eq...Upper-Bound-and-Lower-Bound-Initial-Parameter...Frobenius-Norm}, we observe that  for $l=1$, 
since 
\[\left(\overline{\mW}^{[1]}_{1,1}(0)\right)^2,  \cdots \left(\overline{\mW}^{[l]}_{1,d}(0)\right)^2; \left(\overline{\mW}^{[l]}_{2,1}(0)\right)^2,\cdots, \left(\overline{\mW}^{[l]}_{2,d}(0)\right)^2;\cdots \left(\overline{\mW}^{[l]}_{m,d}(0)\right)^2\sim\chi^2(1),\] 
are i.i.d.\ sub-exponential  
random variables   with 
 $\Exp \left(\overline{\mW}^{[1]}_{1,1}(0)\right)^2 =1.$ 
By application of  Theorem \ref{A-Thm...Bernstein-Inequality}, we have
\begin{equation*}
\Prob\left(\Abs{\frac{1}{md}\sum_{i=1}^{m}\sum_{j=1}^{d}\left(\overline{\mW}^{[1]}_{i,j}(0)\right)^2-1}\geq \eta\right)\leq 2\exp\left(-\frac{ C_0 m^2 \eta^2}{C^2_{\psi,1}} \right),
\end{equation*}
as we set $\eta=\frac{1}{2}$, then with high probability   over the choice of $\vtheta^0$, we obtain that 
\[\sqrt{\frac{md}{2}} \leq \Norm{{\overline{\mW}^{[1]} (0)}}_{\mathrm{F}}\leq \sqrt{\frac{3md}{2}},\]
and for any $l\in[2:L]$, by similar reasoning  we obtain that 
\[\sqrt{\frac{1}{2}}m \leq \Norm{{\overline{\mW}^{[l]} (0)}}_{\mathrm{F}}\leq \sqrt{\frac{3}{2}}m.\]
\end{proof}
 
Therefore, we obtain that   
\begin{align*}
 \sup\limits_{t\in[0,t^*)}\mathrm{RD}\left( {\mW^{[1]}}\right)(t)&= \frac{\Norm{\frac{\overline{\mW}^{[1]}(t)}{\sqrt{m}}-\frac{\overline{\mW}^{[1]}(0)}{\sqrt{m}}}_{\mathrm{F}}}{\Norm{\frac{\overline{\mW}^{[1]}(0)}{\sqrt{m}}}_{\mathrm{F}}}\lesssim  \frac{1}{m^{\frac{L+1}{2}-\sum_{k=1}^{L+1}\gamma_k}}\frac{4n}{\lambda_\fS}\sqrt{\frac{R_\fS(\vtheta^0)}{d}}4^L,\end{align*}
and for $l\in[2:L]$, we have 
\begin{align*}
 \sup\limits_{t\in[0,t^*)}\mathrm{RD}\left(  {\mW^{[l]}}\right)(t)&= 
 \frac{\Norm{\frac{\overline{\mW}^{[l]}(t)}{\sqrt{m}}-\frac{\overline{\mW}^{[l]}(0)}{\sqrt{m}}}_{\mathrm{F}}}{\Norm{\frac{\overline{\mW}^{[l]}(0)}{\sqrt{m}}}_{\mathrm{F}}}\lesssim   \frac{1}{m^{\frac{L+2}{2}-\sum_{k=1}^{L+1}\gamma_k}}\frac{4n\sqrt{{R_\fS(\vtheta^0)}}}{\lambda_\fS}
 4^L,\end{align*}
and finally, we have 
\begin{align*} 
\sup\limits_{t\in[0,t^*)}\mathrm{RD}(  {\va})(t)&=\frac{\Norm{\frac{\Bar{\va}(t)}{\sqrt{m}}-\frac{\Bar{\va}(0)}{\sqrt{m}}}_2}{\Norm{\frac{\Bar{\va}(0)}{\sqrt{m}}}_2} \lesssim   \frac{1}{m^{\frac{L+1}{2}-\sum_{k=1}^{L+1}\gamma_k}}\frac{4n\sqrt{{R_\fS(\vtheta^0)}}}{\lambda_\fS}4^L.
\end{align*}
Our above analysis reveals that as $\sum_{k=1}^{L+1}\gamma_k<\frac{L+1}{2}$, i.e., $\lim_{m\to\infty}\frac{\log \kappa}{\log m}>0$, then for large enough $m$,   all entries of $\bar{\vtheta}$, the vector containing all normalized parameters, vary slightly during  the  training period $[0, t^*)$.
It is noteworthy that for $l\in[2:L]$, it is more  difficult to observe the variation of $ \bar{\vtheta}_{\mW^{[l]}}$, whose vector $2$-norm at time $t=0$ is of order $m$, i.e., \[\Norm{\bar{\vtheta}_{\mW^{[l]}}(0)}_2=\Norm{\frac{\overline{\mW}^{[l]}(0)}{\sqrt{m}}}_{\mathrm{F}}\sim \fO(m),\]    
while the vector $2$-norm of $ \bar{\vtheta}_{\mW^{[1]}}$ and $\bar{\vtheta}_{\va}(t)$ are both of order $\sqrt{m}$, i.e., 
\[\Norm{\bar{\vtheta}_{\mW^{[1]}}(0)}_2=\Norm{\frac{\overline{\mW}^{[1]}(0)}{\sqrt{m}}}_{\mathrm{F}}\sim \fO(\sqrt{m}),\quad \Norm{\bar{\vtheta}_{\va}(0)}_2={\Norm{\frac{\Bar{\va}(0)}{\sqrt{m}}}_2}\sim \fO(\sqrt{m}).\]    
Finally, Theorem \ref{Theorem...repeaet} demonstrates that the stopping time $t^*=+\infty$, thus indicating that the following holds for all time $t>0$: For any $l\in[L+1]$,
\begin{equation*}
    \lambda_{\min}\left(\overline{\mG}^{[l]}(\vtheta(t))\right)\geq \frac{1}{2}\lambda_\fS,
\end{equation*}
and 
\begin{equation*}
   R_\fS(\vtheta(t))\leq  \exp\left(-\frac{1}{n} \left[\sum_{l=1}^{L+1}\frac{\kappa^2}{\alpha_l^2}\lambda_\fS\right] t \right) R_\fS(\vtheta(0)), 
\end{equation*}
and most importantly, the vector $\bar{\vtheta}$ containing all normalized parameters  varies slightly throughout the whole training process. Therefore, we remark that  the NN training dynamics fall  into the $\vtheta$-lazy regime, and 
\begin{equation}\left\{
\begin{aligned}
  \sup\limits_{t\in[0,+\infty)} \mathrm{RD}\left(  {\mW^{[l]}}\right)(t)&= 
 \frac{\Norm{\frac{\overline{\mW}^{[l]}(t)}{\sqrt{m}}-\frac{\overline{\mW}^{[l]}(0)}{\sqrt{m}}}_{\mathrm{F}}}{\Norm{\frac{\overline{\mW}^{[l]}(0)}{\sqrt{m}}}_{\mathrm{F}}}\xrightarrow{ m \to \infty } 0,~~l\in[L],\\
\sup\limits_{t\in[0,+\infty)}\mathrm{RD}(  {\va})(t)&=\frac{\Norm{\frac{\Bar{\va}(t)}{\sqrt{m}}-\frac{\Bar{\va}(0)}{\sqrt{m}}}_2}{\Norm{\frac{\Bar{\va}(0)}{\sqrt{m}}}_2} \xrightarrow{ m \to \infty } 0.
\end{aligned}  
\right.
\end{equation}
\subsection{Statement of the Theorem}\label{Subsection...Statement-of-Thm}
Theorem \ref{Theorem...repeaet} are rigorously stated as follows, and a sketch of   proof  for Theorem \ref{Theorem...repeaet} has been provided in Figure \ref{figure}. Moreover, its detailed proof can be found in Appendix \ref{B-subsection...Proof-of-Theorem}.
\begin{figure}[ht]
\centering
    \includegraphics[width=\textwidth]{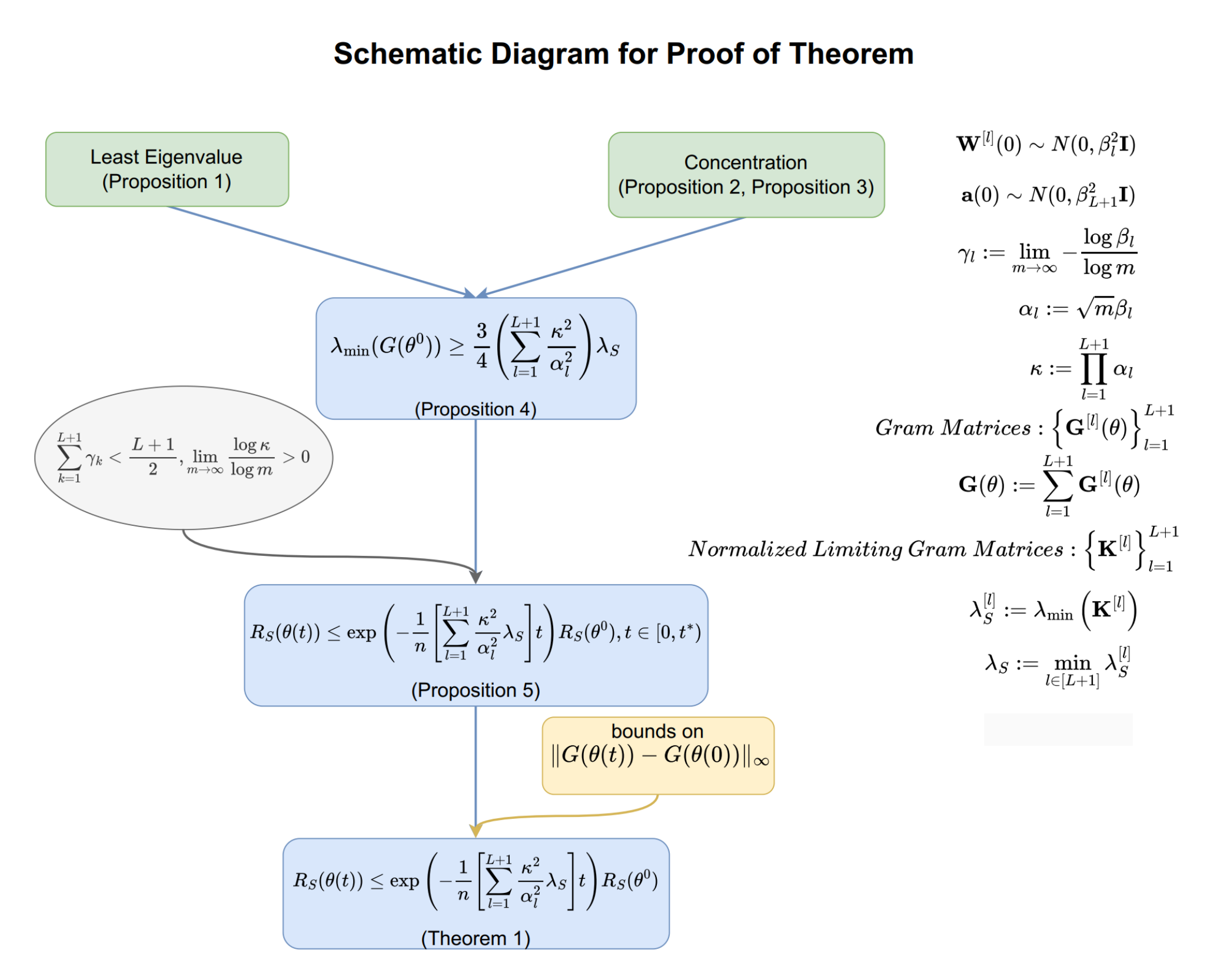}
    \caption{Sketch of proof for Theorem 1.}
    \label{figure}
\end{figure}
\begin{theorem} \label{Theorem...repeaet}
Suppose $\sigma(\cdot)$ satisfies conditions  in Assumption \ref{Assumption....Activation-Function}, and $\fS$     satisfies conditions  in  Assumption 
 \ref{Assumption...Data}, if $\sum_{k=1}^{L+1}\gamma_k<\frac{L+1}{2}$, i.e., $\lim_{m\to\infty} \frac{\log \kappa}{\log m}>0$, and
\[m=\max\left\{\left(\Omega\left(\frac{n^2}{\lambda_\fS^2}\log n\right)\right),  \Omega\left(\left( \frac{n}{  \lambda_\fS}\right)^{\frac{1}{\frac{L+1}{2}-\sum_{k=1}^{L+1}\gamma_k}}\right), \Omega\left(\left( \frac{n}{  \lambda_\fS}\right)^{\frac{2}{\frac{L+1}{2}-\sum_{k=1}^{L+1}\gamma_k}}\right)\right\},\]
where $\lambda_\fS=\min_{l\in[L+1]}    \lambda_{\min}\left(\mK^{[l]}\right)$, 
then with high probability,  for any time $t\geq 0$,
\begin{equation} 
R_\fS(\vtheta(t)) \leq \exp\left(- \frac{1}{n}\left[\left(\sum_{l=1}^{L+1} \frac{\kappa^2}{\alpha_l^2}\right) {\lambda_\fS}\right]t\right)R_\fS(\vtheta(0)).
\end{equation}
Moreover, for any  time $t\geq 0$,
\begin{equation} \label{Thm...eq...Theta-Lazy...Part-One}
\left\{
\begin{aligned} 
\lim_{m\to\infty}\frac{\Norm{\mW^{[l]}(t)-\mW^{[l]}(0)}_{\mathrm{F}}}{\Norm{ \mW^{[l]}(0)}_{\mathrm{F}}}=\frac{\Norm{\frac{\overline{\mW}^{[l]}(t)}{\sqrt{m}}-\frac{\overline{\mW}^{[l]}(0)}{\sqrt{m}}}_{\mathrm{F}}}{\Norm{\frac{\overline{\mW}^{[l]}(0)}{\sqrt{m}}}_{\mathrm{F}}}&=0,~~l\in[L],\\
\lim_{m\to\infty}\frac{\Norm{{{\va}(t)}-{{\va}(0)}}_2}{\Norm{{{\va}(0)}}_2}=\lim_{m\to\infty}\frac{\Norm{\frac{\Bar{\va}(t)}{\sqrt{m}}-\frac{\Bar{\va}(0)}{\sqrt{m}}}_2}{\Norm{\frac{\Bar{\va}(0)}{\sqrt{m}}}_2}&=0.
\end{aligned}
\right.
\end{equation}
\end{theorem}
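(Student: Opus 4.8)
The plan is to upgrade the local-in-time estimates of Propositions \ref{B-prop...Loss-Initial-Decay} and \ref{B-prop...Parameter-no-Movement-2-Norm} (and Corollary \ref{B-Cor...Parameter-no-Movement-Frobenius-Norm}) to global-in-time statements by showing that the stopping time $t^\ast$ defined in \eqref{eq...text...Stopping-Time} is actually $+\infty$. The strategy is a standard continuity (bootstrap) argument: suppose for contradiction that $t^\ast < +\infty$. On $[0,t^\ast)$ all the earlier conclusions hold with high probability, so in particular the parameter-variation bound \eqref{B-Prop...eq...Parameter-no-Movement-2-Norm} and its Frobenius counterpart \eqref{B-Cor...eq...Parameter-no-Movement-Frobenius-Norm} control $\Norm{\frac{\overline{\mW}^{[l]}(t)}{\sqrt m}-\frac{\overline{\mW}^{[l]}(0)}{\sqrt m}}_{2\to 2}$ and $\Norm{\frac{\bar{\va}(t)}{\sqrt m}-\frac{\bar{\va}(0)}{\sqrt m}}_2$ by $\frac{2\sqrt2\, n\sqrt{R_\fS(\vtheta^0)}}{\kappa\lambda_\fS}4^L$, a quantity that tends to $0$ as $m\to\infty$ since $\kappa\to\infty$ under $\lim_{m\to\infty}\frac{\log\kappa}{\log m}>0$.

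First I would translate these normalized-parameter bounds into a bound on $\Norm{\mG(\vtheta(t))-\mG(\vtheta^0)}_\mathrm{F}$. Using the explicit formulas \eqref{eq...text...Prelim...Gram-Order-L+1}--\eqref{eq...text...Prelim...Gram-Order-l<L} together with the scaling relation \eqref{eq...text...Scaling-Relation-between-Normalized-and-Unnormalized-Gram-Matrix}, each $\mG^{[l]}(\vtheta)=\frac{\kappa^2}{\alpha_l^2}\overline{\mG}^{[l]}(\vtheta)$, and $\overline{\mG}^{[l]}$ is a fixed polynomial expression in the normalized quantities $\frac{\overline{\mW}^{[k]}}{\sqrt m}$, $\frac{\bar\va}{\sqrt m}$, $\bar\vx^{[k]}$, and the bounded matrices $\vsigma^{(1)}_{[k]}$. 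On $[0,t^\ast)$ all these building blocks stay in an $\fO(1)$ neighborhood of their initial values (using Proposition \ref{A-prop..Upper-Bound-and-Lower-Bound-Initial-Parameter} for the initial sizes and the variation bounds just recalled; the $\bar\vx^{[k]}$ are handled by propagating the weight variation through the Lipschitz activation). Hence a Lipschitz/telescoping estimate gives $\Norm{\overline{\mG}^{[l]}(\vtheta(t))-\overline{\mG}^{[l]}(\vtheta^0)}_\mathrm{F}\lesssim \frac{n\sqrt{R_\fS(\vtheta^0)}}{\kappa\lambda_\fS}4^L$ times an $\fO(1)$ constant depending on $L,n$, so
\begin{equation*}
\Norm{\mG(\vtheta(t))-\mG(\vtheta^0)}_\mathrm{F}\leq \left(\sum_{l=1}^{L+1}\frac{\kappa^2}{\alpha_l^2}\right)\cdot\fO\!\left(\frac{n\sqrt{R_\fS(\vtheta^0)}}{\kappa\lambda_\fS}4^L\right).
\end{equation*}
The last condition on $m$ in the theorem, $m=\Omega\big((n/\lambda_\fS)^{2/(\frac{L+1}{2}-\sum\gamma_k)}\big)$, is exactly what forces this right-hand side below $\left(\sum_l\frac{\kappa^2}{\alpha_l^2}\right)\frac{\lambda_\fS}{4}$ for $m$ large, because $\kappa\gtrsim m^{\frac{L+1}{2}-\sum\gamma_k}$. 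Thus $\vtheta(t)$ stays strictly inside $\mathcal N(\vtheta^0)$ for all $t<t^\ast$; by continuity of $t\mapsto\Norm{\mG(\vtheta(t))-\mG(\vtheta^0)}_\mathrm{F}$ it also holds at $t=t^\ast$ with strict inequality, so $\vtheta(t^\ast)$ is an interior point, contradicting the definition of $t^\ast$ as the first exit time. Hence $t^\ast=+\infty$.

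With $t^\ast=+\infty$ in hand, Proposition \ref{B-prop...Loss-Initial-Decay} immediately yields the claimed global exponential decay $R_\fS(\vtheta(t))\leq\exp\big(-\frac1n[(\sum_l\frac{\kappa^2}{\alpha_l^2})\lambda_\fS]t\big)R_\fS(\vtheta(0))$ for all $t\geq0$, and the least-eigenvalue bound $\lambda_{\min}(\overline{\mG}^{[l]}(\vtheta(t)))\geq\frac12\lambda_\fS$ follows from the same $\mathcal N(\vtheta^0)$-confinement. For the relative-distance limits \eqref{Thm...eq...Theta-Lazy...Part-One}, I would combine Corollary \ref{B-Cor...Parameter-no-Movement-Frobenius-Norm} (now valid on $[0,+\infty)$) with the initial-norm lower bounds of Proposition \ref{A-prop..Upper-Bound-and-Lower-Bound-Initial-Parameter}: for $l=1$, $\mathrm{RD}(\mW^{[1]})\lesssim \frac{1}{\sqrt{md}}\cdot\frac{n\sqrt{R_\fS(\vtheta^0)}}{\kappa\lambda_\fS}4^L$; for $l\in[2{:}L]$, $\mathrm{RD}(\mW^{[l]})\lesssim\frac1m\cdot\frac{n\sqrt{R_\fS(\vtheta^0)}}{\kappa\lambda_\fS}4^L$; and $\mathrm{RD}(\va)\lesssim\frac{1}{\sqrt m}\cdot\frac{n\sqrt{R_\fS(\vtheta^0)}}{\kappa\lambda_\fS}4^L$, each of which $\to0$ as $m\to\infty$ since $\kappa\to\infty$ and $R_\fS(\vtheta^0)=\fO(1)$ with high probability. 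The main obstacle I anticipate is the bookkeeping in the Lipschitz estimate for $\Norm{\mG(\vtheta(t))-\mG(\vtheta^0)}_\mathrm{F}$: one must carefully track how the $\fO(1)$-size variations of each factor (including the propagated variations of the hidden features $\bar\vx^{[k]}$ and the derivative matrices $\vsigma^{(1)}_{[k]}$, through the bounds on $\sigma^{(1)},\sigma^{(2)}$ in Assumption \ref{Assumption....Activation-Function}) compound across the $L$ layers, and ensure the resulting constant — exponential in $L$ but independent of $m$ — is genuinely absorbed by the chosen scaling of $m$. Everything else is a direct assembly of the propositions already proved.
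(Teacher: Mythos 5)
Your overall skeleton is the same as the paper's: a bootstrap on the stopping time $t^\ast$, showing $\Norm{\mG(\vtheta(t))-\mG(\vtheta^0)}_{\mathrm{F}}$ stays strictly below $\bigl(\sum_{l=1}^{L+1}\kappa^2/\alpha_l^2\bigr)\lambda_\fS/4$ on $[0,t^\ast)$ so that $t^\ast=+\infty$, then feeding $t^\ast=+\infty$ back into Proposition \ref{B-prop...Loss-Initial-Decay}, Corollary \ref{B-Cor...Parameter-no-Movement-Frobenius-Norm} and Proposition \ref{A-prop..Upper-Bound-and-Lower-Bound-Initial-Parameter} to get the global decay and the vanishing relative distances. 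However, there is a genuine gap in the step you describe as ``the derivative matrices $\vsigma^{(1)}_{[k]}$ are handled through the bounds on $\sigma^{(1)},\sigma^{(2)}$ in Assumption \ref{Assumption....Activation-Function}.'' The entries of $\vsigma^{(1)}_{[l]}(\vx_i(t))$ are evaluated at the \emph{unnormalized} pre-activations $\mW^{[l]}\vx^{[l-1]}=\bigl(\prod_{k=1}^{l}\alpha_k\bigr)\tfrac{\overline{\mW}^{[l]}}{\sqrt m}\bar\vx^{[l-1]}$. A Lipschitz bound via $\Norm{\sigma^{(2)}}_\infty$ therefore picks up the factor $\prod_{k=1}^{l}\alpha_k$, which is not $\fO(1)$: under the sole constraint $\sum_{k}\gamma_k<\tfrac{L+1}{2}$ this factor can grow polynomially in $m$ (e.g.\ all $\gamma_k=0$), and in regimes where some $\alpha_{L+1}$ is small the product of this factor with the parameter-variation bound $\fO\bigl(n\sqrt{R_\fS(\vtheta^0)}/(\kappa\lambda_\fS)\bigr)$ does not tend to zero. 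So the ``Lipschitz/telescoping'' estimate you propose for $\overline{\mH}^{[l]}(\vtheta(t))-\overline{\mH}^{[l]}(\vtheta^0)$ fails precisely on the hard part of the admissible regime.

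The paper's proof handles this differently, and this is the main content of Appendix \ref{B-subsection...Proof-of-Theorem}: after an induction giving $\Norm{\bar\vx_i^{[l]}(t)-\bar\vx_i^{[l]}(0)}_2\lesssim n\sqrt{R_\fS(\vtheta^0)}/(\kappa\lambda_\fS)$ (this part of your sketch is fine, since the normalized map $u\mapsto\sigma(\eps u)/\eps$ is Lipschitz uniformly in $\eps$), the variation of each $\overline{\mH}^{[l]}$ is split into three terms, and the term coming from $\vsigma^{(1)}_{[l]}(\vx_i(t))-\vsigma^{(1)}_{[l]}(\vx_i(0))$ is controlled in the large-$\eps$ case by exploiting the limits $\lim_{z\to\pm\infty}\sigma^{(1)}(z)=a,b$ of Assumption \ref{Assumption....Activation-Function}: the entrywise change is bounded by $\abs{b-a}$ times the indicator that the pre-activation changes sign, and the expected fraction of sign flips is bounded by an anti-concentration estimate for $\bigl\langle\tfrac{\bar\vw_{l,p}(0)}{\sqrt m},\bar\vx_i^{[l-1]}(0)\bigr\rangle$ (the events $\sA_{i,l,k'}(R_{k'})$), plus a separate concentration argument for the cross terms involving $\bigl(\overline{\mW}^{[l+1]}(0)/\sqrt m\bigr)^\T$, all combined with a backward induction in $l$. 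Without this sign-flip/anti-concentration mechanism (or some substitute that does not pay the $\prod_{k\le l}\alpha_k$ factor), your bound on $\Norm{\mG(\vtheta(t))-\mG(\vtheta^0)}_{\mathrm{F}}$ is not established, and the bootstrap does not close. A further minor point: $R_\fS(\vtheta^0)$ is not $\fO(1)$ in general (the initial output has size $\sim\kappa/\sqrt m$, so $R_\fS(\vtheta^0)=\fO(1+\kappa^2/m)$); your final limits still hold because only $\sqrt{R_\fS(\vtheta^0)}/\kappa\to0$ is needed, but the claim as you state it should be corrected.
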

\begin{remark}
The high probability  specifically reads
\begin{itemize}
\item Event (I): Concentration between the normalized Gram matrices $\left\{\overline{\mG}^{[l]}(\vtheta^0)\right\}_{l=1}^{L+1}$ and the    normalized limiting Gram matrices  $\left\{\mK^{[l]}\right\}_{l=1}^{L+1}$; 
\begin{equation}\label{Event+I}
\Prob(I)\leq 2n^2L \exp\left(-\frac{2^Lm\lambda_\fS^2}{16n^2}\right).
\end{equation}
\item Event (II): Initial bounds on $\left\{\Norm{\frac{\overline{\mW}^{[l]}}{\sqrt{m}}}_{2\to 2}\right\}_{l=1}^{L}$, $\left\{\Norm{\frac{\overline{\mW}^{[l]}}{\sqrt{m}}}_{\mathrm{F}}\right\}_{l=1}^{L}$ and $\Norm{\frac{\bar{\va}}{\sqrt{m}}}_2$;
\begin{equation}\label{Event+II}
\Prob(II)\leq 2(L-1) \exp\left(-\frac{m^2}{36}\right)+2\exp\left(-\frac{md}{36}\right)+2(L+1)\exp(-m).
\end{equation}
\end{itemize}
Based on \eqref{Event+I} and \eqref{Event+II}, by picking out the event I and II,  lower bound of the  high probability reads
\begin{align*}
1-  \Prob(I)-  \Prob(II)&\geq 1-2n^2L \exp\left(-\frac{2^Lm\lambda_\fS^2}{16n^2}\right)\\
&~~-\left[2(L-1) \exp\left(-\frac{m^2}{36}\right)+2\exp\left(-\frac{md}{36}\right)+2(L+1)\exp(-m)\right].
\end{align*}
We remark that the establishment of relation \eqref{Event+I} shall be traced back to Proposition \ref{A-prop...Concentration-on-L+1-th-Gram-Matrix} and Proposition \ref{A-prop...Concentration-on-l<L-th-Gram-Matrix}, and the  establishment of relation \eqref{Event+II} is the combined effort of Lemma \ref{A-Lemma...Operator-Norm-Random-Matrix}, Lemma \ref{A-Lemma......2-Norm-Chi-Square}, and Proposition \ref{A-prop..Upper-Bound-and-Lower-Bound-Initial-Parameter}.
\end{remark}
\section{Conclusions}\label{section....Conclusion}

In this paper,  we propose a unified  approach to characterize  the  theta-lazy regime  for the $L$-layer  NNs with a wide class of smooth activation functions. 
Our investigation reveals that the initial scale $\kappa$ of the output function is  the pivotal factor for  the propensity of the parameter set $\vtheta$ to  
persist in its initialization state throughout the entire training process.
This phenomenon holds true irrespective of the diverse array of initialization schemes employed, indicating the fundamental role played by the initial scale $\kappa$ in shaping the training dynamics of neural networks.  In conclusion, our investigation into the statistical properties of the numerous parameters  elucidates a macroscopic perspective  on  the study of neural networks.

One may inquire why the theta-lazy regime, as delineated in~\cite{luo2021phase,chen2023phase}, resides in the left-half plane of the line $\kappa=1$ rather than   $\kappa=0$, as observed in this paper.   It is imperative to note that our findings encompass the results in~\cite{luo2021phase,chen2023phase}.  However, in contrast to the normalization approach employed in~\cite{luo2021phase,chen2023phase}, wherein the parameters $\mathrm{vec} \left(\va, \mW\right)$ of the two-layer neural networks are normalized to $\mathrm{vec} \left(\bar{\va}, \overline{\mW}\right)$, we adopt a more general  strategy designed for multi-layer NNs  by  normalizing the parameters to $\mathrm{vec} \left(\frac{\bar{\va}}{\sqrt{m}}, \frac{\overline{\mW}}{\sqrt{m}}\right)$.  Consequently, in the scenario where $L=1$, corresponding to the case of  two-layer NNs, it is reasonable for the threshold of $\kappa$ to shift by a distance of $\frac{1}{2} + \frac{1}{2} = 1$.
We anticipate that this formalism can be readily extended to the analysis of $L$-layer  CNNs, where we hypothesize that the initial scale of the output function also serves as a critical factor for the persistence of weight parameters in their initialization state during training. However, we acknowledge the potential variability in this phenomenon for Residual Neural Networks~(ResNet), which may hinge on the specific choice of initialization schemes, owing to their distinctive skip-connection architecture.

In our next paper,  
we aim to delineate the distinct characteristics exhibited by NNs 
in the  area where $\lim_{m\to\infty} \frac{\log \kappa}{\log m}<0$,  and we  will provide a complete and detailed analysis for the transition   across the boundary. Furthermore,   we  aspire to   reveal the mechanism of initial condensation for multi-layer NNs,
and to identify the directions towards which the weight parameters condense. 
The synthesis of these two papers holds promise in furnishing a nuanced understanding of the implicit regularization effects engendered by weight initialization schemes, thereby serving as a   cornerstone upon which future works can be done to provide thorough characterization of  the dynamical behavior of general NNs  at each of the identified regime.  
 

\section*{Acknowledgments}
We would like to give special thanks to Prof. Jingwei Liang for his helpful discussions. This work is sponsored by the National Key R\&D Program of China  Grant No. 2022YFA1008200 (T. L.), the National Natural Science Foundation of China Grant No. 12101401 (T. L.), Shanghai Municipal Science and Technology Key Project No. 22JC1401500 (T. L.), Shanghai Municipal of Science and Technology Major Project No. 2021SHZDZX0102 (T. L.), and the HPC of School of Mathematical Sciences and the Student Innovation Center, and the Siyuan-1 cluster supported by the Center for High Performance Computing at Shanghai Jiao Tong University.
\bibliographystyle{plain}
\bibliography{Ref}
 \newpage
\appendix
\section{Full Rankness of   Gram Matrices}\label{Append...Section...Full-Rank-Gram-Matrices}
\subsection{Some Technical Lemmas}\label{Append...Subsection...Several-Lemmas}
\begin{lemma}\label{A-Lemma...Gram-Matrices-without-Derivative}
Suppose $\sigma(\cdot)$ satisfies conditions  in Assumption \ref{Assumption....Activation-Function},   consider   input data    
$\fZ:=\{\vz_1,\vz_2,\dots,\vz_n\}$ comprising $n$ non-parallel unit samples,  
given any $\eps>0$,    we  define
\begin{equation} 
\begin{aligned}
&\mG_\eps(\fZ):=\left[\mG_\eps(\fZ)_{ij}\right] :=\left[\Exp_{\vw\sim \fN\left(\vzero,  \mI_d \right)} \frac{\sigma(\eps\vw^{\T}\vz_i)}{\eps}\frac{\sigma(\eps\vw^{\T}\vz_j)}{\eps}\right],
    \end{aligned}
\end{equation}
then  there exists  constant $\lambda_\fZ>0$ independent of $\eps$, such that 
\begin{equation}
\lambda_{\min}\left(\mG_\eps(\fZ)\right)\geq \lambda_\fZ.
\end{equation}
\end{lemma}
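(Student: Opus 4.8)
The plan is to reduce the positive-definiteness of $\mG_\eps(\fZ)$ to the classical fact that the Gram matrix associated with a non-polynomial analytic activation is strictly positive definite on non-parallel samples, and then to extract from this a lower bound that is \emph{uniform} in $\eps$. First I would expand $\tfrac{\sigma(\eps t)}{\eps}$ in its Taylor series about $0$: since $\sigma\in\fC^\omega(\sR)$ with $\sigma(0)=0$ and $\sigma^{(1)}(0)=1$, we have $\tfrac{\sigma(\eps t)}{\eps} = t + \sum_{k\ge 2}\tfrac{\sigma^{(k)}(0)}{k!}\eps^{k-1}t^k =: \sum_{k\ge 1} c_k(\eps) t^k$ with $c_1(\eps)\equiv 1$ and $c_k(\eps)=\tfrac{\sigma^{(k)}(0)}{k!}\eps^{k-1}$. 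Feeding this into the Gaussian expectation and using the standard moment formula for $\Exp_{\vw\sim\fN(\vzero,\mI_d)}(\vw^\T\vz_i)^{p}(\vw^\T\vz_j)^{q}$ (a polynomial in $\langle\vz_i,\vz_i\rangle,\langle\vz_i,\vz_j\rangle,\langle\vz_j,\vz_j\rangle$), one sees that $\mG_\eps(\fZ)$ is, entrywise, an absolutely convergent power series in $\eps$ with matrix-valued coefficients, the $\eps^0$ coefficient being exactly the Gram matrix $[\langle\vz_i,\vz_j\rangle]$ coming from the linear term. Hence $\eps\mapsto \mG_\eps(\fZ)$ extends to an analytic (in particular continuous) matrix-valued function on a neighbourhood of $[0,\infty)$.

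Next I would establish strict positive-definiteness for each \emph{fixed} $\eps>0$. Here I would invoke the usual argument (as in Du et al.\ and the references the paper cites): if $\sum_i v_i \tfrac{\sigma(\eps\vw^\T\vz_i)}{\eps}=0$ for $\vw$ in a set of positive Gaussian measure, then by analyticity in $\vw$ it vanishes identically; expanding each $\tfrac{\sigma(\eps\vw^\T\vz_i)}{\eps}$ in the monomials $(\vw^\T\vz_i)^k$ and using that the functions $\vw\mapsto(\vw^\T\vz_i)^k$ over distinct directions $\vz_i$ and infinitely many exponents $k$ (available because $\sigma$ is not a polynomial, so infinitely many $c_k(\eps)\ne 0$) are linearly independent, forces all $v_i=0$. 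This gives $\lambda_{\min}(\mG_\eps(\fZ))>0$ for every $\eps>0$. Note that at $\eps=0$ the matrix $[\langle\vz_i,\vz_j\rangle]$ need only be positive \emph{semi}-definite (it could be singular once $n>d$), so one cannot simply quote continuity up to and including $\eps=0$; the uniformity argument has to be organized more carefully.

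The remaining, and genuinely delicate, step is the \emph{$\eps$-independent} lower bound. I would split the range of $\eps$ into a compact piece and a tail. On any compact interval $[\eps_0,\eps_1]\subset(0,\infty)$, the map $\eps\mapsto\lambda_{\min}(\mG_\eps(\fZ))$ is continuous and strictly positive, hence bounded below by a positive constant by compactness. For $\eps\to 0^+$: write $\mG_\eps(\fZ)=\mG^{\mathrm{lin}}+\eps\,\mR(\eps)$ where $\mG^{\mathrm{lin}}=[\langle\vz_i,\vz_j\rangle]$ and $\mR(\eps)$ stays bounded; split $\sR^n=\ker\mG^{\mathrm{lin}}\oplus(\ker\mG^{\mathrm{lin}})^\perp$ and show that on the kernel the quadratic form of $\mG_\eps(\fZ)$ is $\ge c\,\eps^{2}\|\cdot\|^2$ for small $\eps$ — this uses the \emph{next} non-vanishing Taylor coefficient (the lowest $k\ge 2$ with $\sigma^{(k)}(0)\ne 0$, which exists since $\sigma$ is not affine) together with the fact that the corresponding higher-degree moment matrix restricted to $\ker\mG^{\mathrm{lin}}$ is itself positive definite (again by the non-parallel/analyticity argument). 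For $\eps\to\infty$: use $\lim_{z\to\pm\infty}\sigma^{(1)}(z)=a,b$ with $a\ne b$ to get $\tfrac{\sigma(\eps t)}{\eps}\to \psi_\infty(t)$ where $\psi_\infty(t)=bt$ for $t>0$, $at$ for $t<0$, a piecewise-linear (``leaky-ReLU''-type) function; dominated convergence gives $\mG_\eps(\fZ)\to[\Exp\,\psi_\infty(\vw^\T\vz_i)\psi_\infty(\vw^\T\vz_j)]$, whose positive definiteness for non-parallel samples is again classical. Combining the three regimes yields a single $\lambda_\fZ>0$ valid for all $\eps>0$. The main obstacle is exactly this uniformity: controlling $\lambda_{\min}$ both near $\eps=0$, where the limiting matrix degenerates, and near $\eps=\infty$, where one must pass to the non-smooth limit using Assumption~\ref{Assumption....Activation-Function}; the compact middle range is routine.
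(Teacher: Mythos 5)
Your overall architecture is the same as the paper's: identify the $\eps\to 0^+$ limit $\mG^{\mathrm{lin}}:=[\left<\vz_i,\vz_j\right>]$ and the $\eps\to\infty$ limit (the leaky-ReLU/arc-cosine--type kernel obtained from $\lim_{z\to\pm\infty}\sigma^{(1)}(z)=a,b$), obtain strict positivity at each fixed $\eps$ by the Du et al.--type analyticity/non-polynomial argument, and glue the three regimes together by continuity of $\lambda_{\min}$ and compactness of the middle interval. Your large-$\eps$ and compact-middle steps coincide with the paper's proof (the paper merely specializes to $a=0$, $b=\sqrt{2}$ to quote the explicit arc-cosine formula).

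The gap is in your small-$\eps$ regime. You correctly observe that non-parallel does not imply linearly independent, so $\mG^{\mathrm{lin}}$ could be singular when $n>d$; but your proposed repair --- a lower bound of order $c\,\eps^{2}$ on the quadratic form restricted to $\ker\mG^{\mathrm{lin}}$ --- cannot yield the conclusion, because the lemma requires a lower bound \emph{independent of} $\eps$, and a bound that degenerates like $\eps^{2}$ gives $\inf_{\eps\in(0,\eps_1]}\lambda_{\min}\left(\mG_\eps(\fZ)\right)=0$. In fact no refinement can close this: by dominated convergence $\mG_\eps(\fZ)\to\mG^{\mathrm{lin}}$ entrywise, hence $\lambda_{\min}\left(\mG_\eps(\fZ)\right)\to\lambda_{\min}\left(\mG^{\mathrm{lin}}\right)$, so if $\mG^{\mathrm{lin}}$ were singular the uniform claim itself would be false. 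The paper's proof does not resolve this issue but sidesteps it: it asserts $\lambda_0:=\lambda_{\min}\left(\mG_0(\fZ)\right)>0$ directly from the data, i.e.\ it tacitly reads the hypothesis as guaranteeing a nonsingular linear Gram matrix, and then handles small $\eps$ by plain continuity, exactly parallel to the $\eps\to\infty$ regime. To complete your argument you should either adopt that same hypothesis (in which case your kernel-splitting step is unnecessary and continuity at $\eps=0$ finishes the job, as in the paper), or note explicitly that without it the $\eps$-independent bound cannot hold; your observation is a legitimate criticism of the statement's hypotheses, but it cannot be absorbed into the proof as you propose.
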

\begin{proof} 
We observe that   as $\eps\to 0$, the limiting entries of $\mG_\eps(\fZ)$, denoted by $\mG_0(\fZ)$ read
\begin{align*}
\mG_0(\fZ)_{ij}&:=\lim_{\eps\to 0} \Exp_{\vw\sim \fN\left(\vzero,  \mI_d \right)} \frac{\sigma(\eps\vw^{\T}\vz_i)\sigma(\eps\vw^{\T}\vz_j)}{\eps^2} \\
&=\Exp_{\vw\sim \fN\left(\vzero,  \mI_d \right)}\left[(\vw^{\T}\vz_i)(\vw^{\T}\vz_j)\right]=\left<\vz_i, \vz_j\right>,    
\end{align*}
and  since $\vz_i$ and $\vz_j$ are non-parallel with each other, then $\lambda_0:=\lambda_{\min}\left(\mG_0(\fZ)\right)>0$, and it depends solely on the data.

In the case where $\eps\to \infty$, as the limit  reads 
\begin{align*}
   &\lim_{\eps\to 0} \frac{\sigma(\eps\vw^{\T}\vz_i)\sigma(\eps\vw^{\T}\vz_j)}{\eps^2}=(\vw^{\T}\vz_i)(\vw^{\T}\vz_j)\left(a \mathbf{1}_{\vw^{\T}\vz_i<0}+b\mathbf{1}_{\vw^{\T}\vz_i>0}\right)\left(a \mathbf{1}_{\vw^{\T}\vz_j<0}+b\mathbf{1}_{\vw^{\T}\vz_j>0}\right),
\end{align*}
then  as $\eps\to \infty$, the limiting entries of $\mG_\eps(\fZ)$, denoted by $\mG_{\infty}(\fZ)$, shares the character of the ReLU NTK. Specifically, if we choose $a=0$ and $b=\sqrt{2}$,  then
\begin{align*}
\mG_{\infty}(\fZ)_{ij}&:=\lim_{\eps\to \infty} \Exp_{\vw\sim \fN\left(\vzero,  \mI_d \right)} \frac{\sigma(\eps\vw^{\T}\vz_i)\sigma(\eps\vw^{\T}\vz_j)}{\eps^2} \\    
   &=\frac{1}{\pi} \left(\left<\vz_i, \vz_j\right>(\pi-\arccos{\left<\vz_i, \vz_j\right>})+\sqrt{1-\left<\vz_i, \vz_j\right>^2}\right),
\end{align*}
and  since $\vz_i$ and $\vz_j$ are non-parallel with each other, then $\lambda_\infty:=\lambda_{\min}\left(\mG_\infty(\fZ)\right)>0$,  and it depends solely on the data and activation function.

Moreover, as  $\lambda_{\min}\left(\mG_\eps(\fZ)\right)$ is a continuous function with respect to the entries of $\mG_\eps(\fZ)$, we conclude that there exists $\eps_1<1$, such that for any $\eps\in(0,\eps_1]$,
$ 
\lambda_{\min}\left(\mG_\eps(\fZ)\right)\geq \frac{1}{2}\lambda_0.
$ 
Similarly, there also exists $M_1>1$, such that for any $\eps\in[M_1, \infty)$,
$\lambda_{\min}\left(\mG_\eps(\fZ)\right)\geq \frac{1}{2}\lambda_\infty.$
Finally,  Lemma $\mathrm{F.1.}$   in Du et al.~\cite{Du2018Gradient} guarantees that: For any $\eps\in[\eps_1, M_1]$,
$\lambda_{\min}\left(\mG_\eps(\fZ)\right)>0,$  and  by choosing
\[
\lambda_\fZ:=\min\left\{\frac{1}{2}\lambda_0, \frac{1}{2}\lambda_\infty, \min_{\eps\in[\eps_1, M_1]}\lambda_{\min}\left(\mG_\eps(\fZ)\right) \right\},
\]
we finish our proof.
\end{proof}
 \begin{lemma}\label{A-Lemma...Hadamard-Product-is-Positive-Definite}
  If $\mA:=[a_{ij}]_{n\times n}$ is semi-positive definite, and $\mB:=[b_{ij}]_{n\times n}$ is  positive definite,  then   
\begin{equation}\label{A-Lemma...H-P-is-P-D...eq...Least-Eigenvalue-Lower-Bound}
    \lambda_{\min}(\mA \odot \mB) \geq \left(\frac{n-1}{n}\right)^{\frac{n-1}{2}} \left(\prod_{i=1}^n {a_{ii}} \right)\mathrm{det}( \mB).
\end{equation}
\end{lemma}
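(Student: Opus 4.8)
The plan is to reduce the bound to a classical determinant inequality and then convert that determinant estimate into a lower bound on the smallest eigenvalue. First I would dispose of the degenerate case: if $a_{ii}=0$ for some $i$, the right-hand side of \eqref{A-Lemma...H-P-is-P-D...eq...Least-Eigenvalue-Lower-Bound} vanishes while the Schur product theorem guarantees that $\mA\odot\mB$ is positive semidefinite, so the inequality is immediate. Hence I may assume $a_{ii}>0$ for every $i$; then $\mA\odot\mB$ is positive definite — apply Schur's theorem to the positive semidefinite $\mA$ and the positive definite $\mB$, and note that the diagonal entries $a_{ii}b_{ii}$ are positive — and I write its eigenvalues as $\lambda_1\geq\cdots\geq\lambda_{n-1}\geq\lambda_n=\lambda_{\min}(\mA\odot\mB)>0$.

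Next I would invoke Oppenheim's inequality: for positive semidefinite $\mA$ and $\mB$,
\[
\mathrm{det}(\mA\odot\mB)\geq\Big(\prod_{i=1}^n a_{ii}\Big)\,\mathrm{det}(\mB).
\]
This is classical (see, e.g., Horn and Johnson, \emph{Matrix Analysis}); alternatively it can be re-derived by writing $\mB=\sum_k\mu_k\vu_k\vu_k^{\T}$ with every $\mu_k>0$, so that $\mA\odot\mB=\sum_k\mu_k\,\mathrm{diag}(\vu_k)\,\mA\,\mathrm{diag}(\vu_k)$, and expanding the determinant. Combining this with the factorization $\mathrm{det}(\mA\odot\mB)=\lambda_{\min}(\mA\odot\mB)\prod_{k=1}^{n-1}\lambda_k$, it remains only to bound $\prod_{k=1}^{n-1}\lambda_k$ from above.

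For that I would apply the arithmetic--geometric mean inequality to the squares $\lambda_1^2,\dots,\lambda_{n-1}^2$,
\[
\prod_{k=1}^{n-1}\lambda_k\leq\left(\frac{1}{n-1}\sum_{k=1}^{n-1}\lambda_k^2\right)^{\frac{n-1}{2}}\leq\left(\frac{\Norm{\mA\odot\mB}_{\mathrm{F}}^2}{n-1}\right)^{\frac{n-1}{2}},
\]
and then control $\Norm{\mA\odot\mB}_{\mathrm{F}}$ through the positive-semidefiniteness estimates $a_{ij}^2\leq a_{ii}a_{jj}$ and $b_{ij}^2\leq b_{ii}b_{jj}$, which bound the Frobenius norm in terms of the diagonal data; arranging the constants along this chain is what produces the factor $\big(\tfrac{n-1}{n}\big)^{(n-1)/2}$. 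Dividing the Oppenheim bound by this estimate then gives
\[
\lambda_{\min}(\mA\odot\mB)=\frac{\mathrm{det}(\mA\odot\mB)}{\prod_{k=1}^{n-1}\lambda_k}\geq\left(\frac{n-1}{n}\right)^{\frac{n-1}{2}}\Big(\prod_{i=1}^n a_{ii}\Big)\,\mathrm{det}(\mB),
\]
which is the assertion of the lemma.

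The hard part will be the sharpness of this last step. A naive trace bound $\prod_{k=1}^{n-1}\lambda_k\leq\big(\mathrm{tr}(\mA\odot\mB)/(n-1)\big)^{n-1}$ only delivers the exponent $n-1$, so to reach the exponent $(n-1)/2$ one must run the AM--GM step on the eigenvalue \emph{squares}, i.e.\ on $\Norm{\mA\odot\mB}_{\mathrm{F}}$, and then estimate the Frobenius norm tightly enough that the constant is not degraded. Oppenheim's inequality and the Schur product theorem can both be used as black boxes.
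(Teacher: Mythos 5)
Your skeleton is the same as the paper's: Oppenheim's inequality to get $\mathrm{det}(\mA\odot\mB)\ge\left(\prod_{i=1}^n a_{ii}\right)\mathrm{det}(\mB)$, then conversion of the determinant bound into a least-eigenvalue bound by dividing out the other eigenvalues; your AM--GM step on the squared eigenvalues is precisely the proof of the Hong--Pan estimate that the paper invokes as a black box. The gap is in your final step. What AM--GM actually gives is $\prod_{k=1}^{n-1}\lambda_k\le\bigl(\Norm{\mA\odot\mB}_{\mathrm{F}}^2/(n-1)\bigr)^{\frac{n-1}{2}}$, hence $\lambda_{\min}(\mA\odot\mB)\ge\left(\prod_{i=1}^n a_{ii}\right)\mathrm{det}(\mB)\bigl((n-1)/\Norm{\mA\odot\mB}_{\mathrm{F}}^2\bigr)^{\frac{n-1}{2}}$; to land on the stated constant $\left(\frac{n-1}{n}\right)^{\frac{n-1}{2}}$ you must prove $\Norm{\mA\odot\mB}_{\mathrm{F}}^2\le n$. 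The estimates you propose, $a_{ij}^2\le a_{ii}a_{jj}$ and $b_{ij}^2\le b_{ii}b_{jj}$, only yield $\Norm{\mA\odot\mB}_{\mathrm{F}}\le\sum_{i=1}^n a_{ii}b_{ii}=\mathrm{tr}(\mA\odot\mB)$, which is in no way bounded by $\sqrt{n}$ from positive semidefiniteness alone; ``arranging the constants along this chain'' cannot produce the missing normalization.

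Indeed no arrangement can: the claimed inequality is not homogeneous (replacing $\mA,\mB$ by $t\mA,t\mB$ scales the left side by $t^2$ and the right side by $t^{2n}$), and already $\mA=\mB=2\mI_2$ gives $\lambda_{\min}(\mA\odot\mB)=4$ while the right-hand side equals $2^{-1/2}\cdot4\cdot4>4$. So the passage from $\bigl((n-1)/\Norm{\mA\odot\mB}_{\mathrm{F}}^2\bigr)^{\frac{n-1}{2}}$ to $\left(\frac{n-1}{n}\right)^{\frac{n-1}{2}}$ requires an additional hypothesis controlling the diagonal of $\mA\odot\mB$ (equivalently $\Norm{\mA\odot\mB}_{\mathrm{F}}^2\le n$); this normalization is implicit in the version of the Hong--Pan bound quoted in the paper's proof, and it is not available from the hypotheses of the lemma as you (or the lemma) state them. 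Your argument is correct and matches the paper up to the bound $\lambda_{\min}(\mA\odot\mB)\ge\left(\prod_{i=1}^n a_{ii}\right)\mathrm{det}(\mB)\bigl((n-1)/\Norm{\mA\odot\mB}_{\mathrm{F}}^2\bigr)^{\frac{n-1}{2}}$, but the last step, as described, would fail.
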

\begin{proof}
By application of  Oppenheim inequality~\cite[Theorem 7.8.16]{horn2012matrix}, the following holds
\[
\mathrm{det}(\mA \odot \mB)\geq \left(\prod_{i=1}^n {a_{ii}} \right)\mathrm{det}( \mB).
\]
Moreover, Hong and Pan~\cite{hong1992lower} provided a lower bound for the least eigenvalue, wherein  for a semi-positive definite matrix $\mC=[c_{ij}]_{n\times n}$, its least eigenvalue yields the following estimate
\[
\lambda_{\min}(\mC) \geq \left(\frac{n-1}{n}\right)^{\frac{n-1}{2}} \mathrm{det}(\mC),
\]
thus we obtain that 
\[
\lambda_{\min}(\mA \odot \mB) \geq \left(\frac{n-1}{n}\right)^{\frac{n-1}{2}} \left(\prod_{i=1}^n {a_{ii}} \right)\mathrm{det}( \mB).
\]
\end{proof}
\begin{lemma}\label{A-Lemma...Matrix-Norm-and-Entry-Norm}
Suppose $\sigma(\cdot)$ satisfies conditions  in Assumption \ref{Assumption....Activation-Function},  and we assume   that there exists some constant $c>0$ and $\gamma>0$, such that 
\[
\mA_1:=\left[\begin{array}{cc}a_1^2 & \rho_1 a_1 b_1 \\ \rho_1 a_1 b_1 & b_1^2\end{array}\right],~~\mA_2:=\left[\begin{array}{cc}a_2^2 & \rho_2 a_2 b_2\\ \rho_2 a_2 b_2 & b_2^2\end{array}\right],
\]
where for any $l\in[2]$,
\begin{equation}\label{A-Lemma...eq...Diagonal}
\frac{1}{c} \leq \min \{a_l, b_l\},~~\max \{a_l, b_l\} \leq c,
\end{equation}
and 
\begin{equation}\label{A-Lemma...M-N-and-E-N...eq...Covariance}
-1+\gamma\leq \rho_l\leq  1-\gamma,
\end{equation}
then for some fixed  $\eps>0$, as we define
\begin{align*}
F(\mA)&:=\mathbb{E}_{(u, v) \sim \fN(\mathbf{0}, \mA)}\left[\frac{\sigma(\eps u)}{\eps}\frac{\sigma(\eps v)}{\eps}\right], \\
G(\mA)&:=\mathbb{E}_{(u, v) \sim \fN(\mathbf{0}, \mA)}\left[ {\sigma^{(1)}(\eps u)} {\sigma^{(1)}(\eps v)} \right],
\end{align*}
 the following holds
\begin{equation}\label{A-lemma...eq...Matrix-Entry-is-Lipschitz}
\max\left\{\Abs{F(\mA_1)-F(\mA_2)},\Abs{G(\mA_1)-G(\mA_2)}\right\} \leq  C\Norm{\mA_1-\mA_2}_{\infty},  
\end{equation}
for some constant $C>0$ that depends sorely on $c$, $\gamma$,  and $\sigma(\cdot)$, and independent of $\eps$.
\end{lemma}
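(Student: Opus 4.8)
The plan is to prove that $F$ and $G$, viewed as functions of the entries of a $2\times2$ covariance matrix, are Lipschitz on the region carved out by \eqref{A-Lemma...eq...Diagonal} and \eqref{A-Lemma...M-N-and-E-N...eq...Covariance}, with a Lipschitz constant independent of $\eps$. The key observation that makes the $\eps$-independence work is the rescaling $\frac{\sigma(\eps u)}{\eps} = \widetilde\sigma(u)$ where $\widetilde\sigma(z) := \sigma(\eps z)/\eps$ satisfies $\widetilde\sigma(0) = 0$, $\widetilde\sigma^{(1)}(z) = \sigma^{(1)}(\eps z)$ with $\Norm{\widetilde\sigma^{(1)}}_\infty \le C$ and $\widetilde\sigma^{(2)}(z) = \eps\,\sigma^{(2)}(\eps z)$. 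So the first derivative of the rescaled activation is uniformly bounded by the same constant $C$ from Assumption \ref{Assumption....Activation-Function}, independent of $\eps$; similarly $\sigma^{(1)}(\eps\cdot)$ has derivative $\eps\,\sigma^{(2)}(\eps\cdot)$ which we will see only appears paired with factors that absorb the $\eps$.

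First I would reduce to a one-parameter family. Given $\mA_1$ and $\mA_2$ in the admissible region, connect them by the straight segment $\mA(s) = (1-s)\mA_1 + s\mA_2$, $s\in[0,1]$; since the admissible region (as a subset of symmetric matrices, described by bounds on diagonal entries and on the correlation) is convex, $\mA(s)$ stays admissible, in particular positive definite with eigenvalues bounded away from $0$ and $\infty$ in terms of $c$ and $\gamma$ only. Then $F(\mA_1) - F(\mA_2) = -\int_0^1 \frac{\D}{\D s} F(\mA(s))\,\D s$, so it suffices to bound $\Norm{\nabla_\mA F(\mA)}$ uniformly over the admissible region, and likewise for $G$. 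To differentiate under the expectation, I would write the expectation as an integral against the Gaussian density $p_\mA(u,v)$ and use that $\partial_{\mA_{kl}} p_\mA$ can be expressed via the density itself (the standard identity $\partial_{\Sigma} p_\Sigma = \frac12 \partial^2_{(u,v)} p_\Sigma$ up to the usual symmetrization), then integrate by parts twice to move the derivatives onto the test function $\frac{\sigma(\eps u)}{\eps}\frac{\sigma(\eps v)}{\eps}$ (resp. $\sigma^{(1)}(\eps u)\sigma^{(1)}(\eps v)$). This converts $\nabla_\mA F$ into an expectation of second-order partial derivatives of the test function; for $F$ the relevant mixed derivative is $\partial_u\partial_v\left[\frac{\sigma(\eps u)}{\eps}\frac{\sigma(\eps v)}{\eps}\right] = \sigma^{(1)}(\eps u)\sigma^{(1)}(\eps v)$, which is bounded by $C^2$ uniformly, and the pure second derivatives $\partial_u^2$ bring down $\eps\,\sigma^{(2)}(\eps u)\cdot\frac{\sigma(\eps v)}{\eps}$; here $\Abs{\frac{\sigma(\eps v)}{\eps}} \le C\Abs{v}$ (mean value theorem, $\sigma(0)=0$, $\Norm{\sigma^{(1)}}_\infty \le C$) and $\Abs{\eps\,\sigma^{(2)}(\eps u)} \le \eps C$, but the $\eps$ is absorbed because the Gaussian integral $\Exp\Abs{uv}$ is $O(c^2)$ and — more carefully — the whole expression $\eps\cdot\frac{\sigma(\eps v)}{\eps}$ stays $O(\Abs{v})$; I should double-check there is no stray $\eps$, and if there is, observe that the Jacobian factor $\partial_\mA p_\mA$ carries an inverse covariance which contributes the compensating $1/\eps$ — this is the one place needing care. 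For $G$ the test function is already built from $\sigma^{(1)}$, and its derivatives produce $\eps\,\sigma^{(2)}$ factors which, combined with the chi-type moments of the Gaussian under a well-conditioned covariance, again yield $\eps$-free bounds; I would use $\Norm{\sigma^{(1)}}_\infty, \Norm{\sigma^{(2)}}_\infty \le C$ throughout.

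An alternative, perhaps cleaner route, is to avoid differentiating in $\mA$ and instead use a coupling: realize $(u_1,v_1)\sim\fN(\vzero,\mA_1)$ and $(u_2,v_2)\sim\fN(\vzero,\mA_2)$ on the same probability space via $\mA_i^{1/2}\vxi$ with a common standard Gaussian $\vxi$, so that $\Exp\Norm{(u_1,v_1)-(u_2,v_2)}_2^2 \le \Norm{\mA_1^{1/2}-\mA_2^{1/2}}_{\mathrm F}^2\,\Exp\Norm{\vxi}_2^2 \le C\Norm{\mA_1-\mA_2}_{\infty}$, where the matrix square-root is Lipschitz with constant controlled by the lower eigenvalue bound that \eqref{A-Lemma...eq...Diagonal}–\eqref{A-Lemma...M-N-and-E-N...eq...Covariance} guarantee. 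Then $\Abs{F(\mA_1)-F(\mA_2)} \le \Exp\Abs{\widetilde\sigma(u_1)\widetilde\sigma(v_1) - \widetilde\sigma(u_2)\widetilde\sigma(v_2)}$, and since $\widetilde\sigma$ is $C$-Lipschitz and $\Abs{\widetilde\sigma(z)} \le C\Abs{z}$, a standard add-and-subtract plus Cauchy–Schwarz bounds this by $C\big(\Exp\Abs{u_1-u_2}^2 + \Exp\Abs{v_1-v_2}^2\big)^{1/2}\cdot\big(\Exp u_i^2 + \Exp v_i^2\big)^{1/2} \le C'\Norm{\mA_1-\mA_2}_\infty^{1/2}$ — but that only gives Hölder-$\tfrac12$, not Lipschitz. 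To upgrade to Lipschitz one either keeps the $\mA$-differentiation approach above, or refines the coupling estimate using that the error is actually $O(\Norm{\mA_1-\mA_2})$ in a mean-square sense once one notes $\Exp\Abs{u_1-u_2}^2$ is itself quadratically small is false — it is linear — so the integral-of-the-gradient approach is the one that genuinely delivers the linear bound, and that is the route I would commit to. The main obstacle, then, is the bookkeeping in the integration-by-parts step: correctly tracking the $\eps$ powers so that the bound on $\nabla_\mA F$ and $\nabla_\mA G$ is manifestly $\eps$-independent, using $\sigma(0)=0$ to trade each surviving $\sigma(\eps\cdot)/\eps$ for a linear factor and each $\eps\,\sigma^{(2)}(\eps\cdot)$ against a Gaussian moment and an inverse-covariance factor from $\partial_\mA p_\mA$. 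Once $\Norm{\nabla_\mA F}, \Norm{\nabla_\mA G} \le C$ uniformly on the admissible region with $C = C(c,\gamma,\sigma)$, the fundamental theorem of calculus along the segment $\mA(s)$ yields \eqref{A-lemma...eq...Matrix-Entry-is-Lipschitz} with $\Norm{\mA_1-\mA_2}_\infty$ on the right (equivalence of matrix norms in fixed dimension $2$ absorbs into $C$), completing the proof.
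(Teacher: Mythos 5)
There is a genuine gap, and it sits exactly at the point you flagged: the uniformity in $\eps$. In your density--differentiation scheme, $\partial_{\mA_{11}}F=\tfrac12\Exp\bigl[\partial_u^2\bigl(\tfrac{\sigma(\eps u)}{\eps}\tfrac{\sigma(\eps v)}{\eps}\bigr)\bigr]$ produces the factor $\eps\,\sigma^{(2)}(\eps u)$, and your proposed rescue --- that ``the inverse covariance in $\partial_\mA p_\mA$ contributes the compensating $1/\eps$'' --- cannot work, because the covariance $\mA$ carries no $\eps$ at all (the constraints \eqref{A-Lemma...eq...Diagonal}--\eqref{A-Lemma...M-N-and-E-N...eq...Covariance} involve only $c$ and $\gamma$); the $\eps$ enters solely through the test function, so nothing in the Gaussian side cancels it. The situation is worse for $G$: there the mixed derivative is $\eps^2\sigma^{(2)}(\eps u)\sigma^{(2)}(\eps v)$ and the pure derivatives give $\eps\,\sigma^{(2)}(\eps\cdot)\,\sigma^{(1)}(\eps\cdot)$, and under Assumption \ref{Assumption....Activation-Function} one only has $\Norm{\sigma^{(2)}}_\infty\leq C$ with no integrability or decay, so ``chi-type moments absorb the $\eps$'' is not justified and is false as a general estimate uniformly in $\eps$. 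This is not bookkeeping: a uniform-in-$\eps$ derivative bound for $G$ in the variance directions is simply not available from the stated hypotheses, which is why the lemma needs a different mechanism for large and small $\eps$.

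The paper's proof circumvents both problems. For $F$ it parametrizes the covariance by $(a,b,\rho)$ and writes $F=\Exp_{(z_1,z_2)\sim\fN(\vzero,\mB)}\bigl[\tfrac{\sigma(\eps a z_1)}{\eps}\tfrac{\sigma(\eps b z_2)}{\eps}\bigr]$ with $\mB$ the correlation matrix; then $\partial_a$ acts inside the activation and yields $\sigma^{(1)}(\eps a z_1)z_1\cdot\tfrac{\sigma(\eps b z_2)}{\eps}$ with no stray $\eps$, while $\partial_\rho$ is controlled through a Hermite expansion $F=\sum_k \alpha_k\beta_k k!\,\rho^k$ on $\abs{\rho}\leq 1-\gamma$ (your Price-type identity $\partial_{\mA_{12}}F=\Exp[\sigma^{(1)}(\eps u)\sigma^{(1)}(\eps v)]$ would also do for this particular direction, and only for $F$). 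For $G$ the paper does \emph{not} attempt a uniform derivative bound in $a,b$: it computes the explicit limits $\lim_{\eps\to0}G$ (constant) and $\lim_{\eps\to\infty}G$ (the arccosine/ReLU-type kernel, using the limit condition \eqref{eq...assump...Activation-Function...Left-Limit-Right-Limit}), gets the Lipschitz estimate near $0$ and near $\infty$ by continuity in $\eps$, and only on the compact range $\eps\in[\eps_1,M_1]$ bounds $\partial_a G,\partial_b G$ directly, where the factor $\eps\leq M_1$ is harmless and depends only on $\sigma$; the $\rho$-derivative is again handled by Hermite series. Finally the gradient bounds in $(a,b,\rho)$ are transported to the matrix entries through the inverse Jacobian of $(a,b,\rho)\mapsto(a^2,b^2,\rho ab)$, whose norm is controlled by $c$. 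To repair your argument you would have to import this regime-splitting in $\eps$ (or an equivalent compactness argument) for $G$, and switch to the standard-deviation parametrization (or otherwise re-derive the paper's cancellation) for the variance derivatives of $F$; your segment-plus-fundamental-theorem outer structure and the convexity of the admissible region are fine and match the paper's strategy in spirit.
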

\begin{proof}
Let 
\[
\mA =\left[\begin{array}{cc}a^2 & \rho ab \\ \rho a b & b^2\end{array}\right]
\]
with 
\[
\min \{a_1, a_2\}\leq a \leq \max \{a_1, a_2\},~~\min \{b_1, b_2\}\leq b \leq \max \{b_1, b_2\}
\]
and
\[
-1+\gamma\leq \rho\leq 1-\gamma,
\]
then 
\begin{align*}
F(\mA)&=\mathbb{E}_{(z_1, z_2) \sim \fN(\mathbf{0}, \mB)}\left[\frac{\sigma(\eps az_1)}{\eps}\frac{\sigma(\eps bz_2)}{\eps}\right],\\
    G(\mA)&=\mathbb{E}_{(z_1, z_2) \sim \fN(\mathbf{0}, \mB)}\left[ {\sigma^{(1)}(\eps az_1)}  {\sigma^{(1)}(\eps bz_2)} \right],
\end{align*}
with 
\[
\mB:= \left[\begin{array}{cc}1 & \rho   \\ \rho     & 1\end{array}\right].
\]
Then, we obtain that 
\begin{align*}
\Abs{\frac{\partial F(\mA)}{\partial a}}&=   \Abs{\mathbb{E}_{(z_1, z_2) \sim \fN(\mathbf{0}, \mB)}\left[ \sigma^{(1)}(\eps az_1)z_1 \frac{\sigma(\eps bz_2)}{\eps}\right]}\\ 
&\leq \left[\Exp _{z_1\sim \fN(0, 1)} \left(\sigma^{(1)}(\eps az_1)\right)^2z_1^2\right]^{1/2} \left[\Exp _{z_2\sim \fN(0, 1)} \left(\frac{\sigma(\eps bz_2)}{\eps}\right)^2\right]^{1/2}\\
&\leq C(\Exp _{z_1\sim \fN(0, 1)}  z_1^2)^{1/2} \left[\Exp _{z_2\sim \fN(0, 1)} \left( \sigma^{(1)}(\eps\theta bz_2)\right)^2b^2z_2^2  \right]^{1/2} \leq C^2b \leq C^2c,
\end{align*}
and by similar reasoning,   $\Abs{\frac{\partial F(\mA)}{\partial b}}\leq C^2c$. As for the estimate of $\Abs{\frac{\partial F(\mA)}{\partial \rho}}$, we observe that   
\begin{equation}\label{temp-1...pf}
\begin{aligned}
 \Abs{F(\mA)}&=\Abs{\mathbb{E}_{(z_1, z_2) \sim \fN(\mathbf{0}, \mB)}\left[\frac{\sigma(\eps az_1)}{\eps}\frac{\sigma(\eps bz_2)}{\eps}\right]}   \\
&\leq \left[\Exp _{z_1\sim \fN(0, 1)} \left(\frac{\sigma(\eps az_1)}{\eps}\right)^2\right]^{1/2}\left[\Exp _{z_2\sim \fN(0, 1)} \left(\frac{\sigma(\eps az_2)}{\eps}\right)^2\right]^{1/2}\leq C^2c^2,
\end{aligned}
\end{equation}
and for any $\eps>0$, $\frac{\sigma(\eps az)}{\eps}$ and $\frac{\sigma(\eps bz)}{\eps}$ belong to the Gaussian function space 
\begin{equation}\label{A-proof...eq...Gaussian-Function-Space}
\fL^2\left(\sR, \exp\left(-\frac{z^2}{2}\right)\right):=\left\{f ~\middle|~\int_{\sR} f(z)\exp\left(-\frac{z^2}{2}\right)\D z<\infty\right\},
\end{equation}
with   the Hermite expansion of $\frac{\sigma(\eps az_1)}{\eps}$ reads
\[
\frac{\sigma(\eps az_1)}{\eps}:=\sum_{k=0}^{\infty}\alpha_kH_k(z_1),
\]
while the  Hermite expansion of $\frac{\sigma(\eps bz_2)}{\eps}$ reads
\[
\frac{\sigma(\eps bz_2)}{\eps}:=\sum_{k=0}^{\infty}\beta_kH_k(z_2),
\]
where $\{H_k(\cdot)\}_{k=0}^{\infty}$ are the probabilist's Hermite polynomials, and $\{\alpha_k\}_{k=0}^{\infty}$ and $\{\beta_k\}_{k=0}^{\infty}$ depend on the choice of $\eps$. Then, we obtain that 
\begin{align*}
F(\mA)&=    \mathbb{E}_{(z_1, z_2) \sim \fN(\mathbf{0}, \mB)}\left[\frac{\sigma(\eps az_1)}{\eps}\frac{\sigma(\eps bz_2)}{\eps}\right]\\
&=\Exp\left[ \left(\sum_{k=0}^{\infty}\alpha_kH_k(z_1)\right)\left(\sum_{l=0}^{\infty}\beta_lH_l(z_2)\right)\right]\\
&=\Exp\left[  \sum_{k,l=0}^{\infty}\alpha_k \beta_l H_k(z_1)  H_l(z_2) \right] = \sum_{k,l=0}^{\infty}\alpha_k \beta_l k!\left(\Exp\left[z_1z_2\right]\right)^k\delta_k^l =\sum_{k=0}^{\infty}\alpha_k \beta_k k!\rho^k.
\end{align*}
As is shown by relation \eqref{temp-1...pf},   the power series $F(\mA)=\sum_{k=0}^{\infty}\alpha_k \beta_k k!\rho^k$ is absolute convergent for any $\rho\in(-1,1)$, hence by differentiation, the power series 
\[
\sum_{k=0}^{\infty}\alpha_{k+1} \beta_{k+1} (k+1)!(k+1)\rho^k
\]
is absolute convergent for any $\rho\in(-1,1)$, and it converges uniformly to  $\frac{\partial F(\mA)}{\partial \rho}$, thus  we obtain that  for any $\rho\in[-1+\gamma,1-\gamma]$, $\Abs{\frac{\partial F(\mA)}{\partial \rho}}\leq C.$

As for $\mG(\mA)$,  we remark that for any $\eps>0$,  
\begin{equation} 
\Abs{G(\mA)}=\Abs{\mathbb{E}_{(z_1, z_2) \sim \fN(\mathbf{0}, \mB)}\left[ {\sigma^{(1)}(\eps az_1)}  {\sigma^{(1)}(\eps bz_2)}\right]}\leq C^2,
\end{equation} 
as we send $\eps\to 0$, 
\[
\lim_{\eps\to 0} G(\mA)=\lim_{\eps\to 0} \mathbb{E}_{(z_1, z_2) \sim \fN(\mathbf{0}, \mB)}\left[ {\sigma^{(1)}(\eps az_1)}  {\sigma^{(1)}(\eps bz_2)}\right]= \mathbb{E}_{(z_1, z_2) \sim \fN(\mathbf{0}, \mB)}\left[ {\sigma^{(1)}(0)}  \right]^2=1,
\]
and as we send $\eps\to \infty$, 
\begin{align*}
\lim_{\eps\to \infty} G(\mA)&=\lim_{\eps\to \infty} \mathbb{E}_{(z_1, z_2) \sim \fN(\mathbf{0}, \mB)}\left[ {\sigma^{(1)}(\eps az_1)}  {\sigma^{(1)}(\eps bz_2)}\right]\\
&=\mathbb{E}_{(z_1, z_2) \sim \fN(\mathbf{0}, \mB)}\left[\left(a \mathbf{1}_{z_1<0}+b\mathbf{1}_{z_1>0}\right)\left(a \mathbf{1}_{z_2<0}+b\mathbf{1}_{z_2>0}\right)\right]\\
&=a^2\mathbb{E}_{(z_1, z_2) \sim \fN(\mathbf{0}, \mB)}\mathbf{1}_{z_1<0, z_2<0}+ab\mathbb{E}_{(z_1, z_2) \sim \fN(\mathbf{0}, \mB)}\mathbf{1}_{z_1<0, z_2>0}\\
&~~+ab\mathbb{E}_{(z_1, z_2) \sim \fN(\mathbf{0}, \mB)}\mathbf{1}_{z_1>0, z_2<0}+b^2\mathbb{E}_{(z_1, z_2) \sim \fN(\mathbf{0}, \mB)}\mathbf{1}_{z_1>0, z_2>0}\\
&=\frac{a^2+b^2}{2\pi}\left(\pi-\arccos{\rho}\right)+\frac{ab}{\pi}\arccos{\rho},
\end{align*}
hence by dominated convergence theorem, as $-1+\gamma\leq \rho\leq 1-\gamma,$  we obtain that 
\begin{align*}
\lim_{\eps\to 0 } \Abs{G(\mA_1)-G(\mA_2)}&=0,\\
\lim_{\eps\to \infty} \Abs{G(\mA_1)-G(\mA_2)}&\leq C\Abs{\rho_1-\rho_2}\leq C\Norm{\mA_1-\mA_2}_{\infty}.
\end{align*}
Since $G(\mA)$ is continuous in $\eps$, then there exists $\eps_1<1$, such that for any $\eps\in(0,\eps_1]$,
\begin{equation}\label{A-proof...eq...temp1}
\Abs{G(\mA_1)-G(\mA_2)}\leq \Norm{\mA_1-\mA_2}_{\infty},\end{equation}
 by similar reasoning,  there also exists $M_1>1$, such that for any $\eps\in[M_1, \infty)$,
\begin{equation}\label{A-proof...eq...temp2}\Abs{G(\mA_1)-G(\mA_2)}\leq 2C\Norm{\mA_1-\mA_2}_{\infty}.\end{equation}
Finally, for any $\eps\in[\eps_1, M_1]$, 
\begin{align*}
\Abs{\frac{\partial G(\mA)}{\partial a}}&=\Abs{\mathbb{E}_{(z_1, z_2) \sim \fN(\mathbf{0}, \mB)}\left[ {\sigma^{(2)}(\eps az_1)} \eps z_1 {\sigma^{(1)}(\eps bz_2)} \right]}\\
&\leq C\Abs{\mathbb{E}_{z_1 \sim \fN(0, 1)}\eps z_1}\leq 2CM_1,
\end{align*}
and 
\begin{align*}
\Abs{\frac{\partial G(\mA)}{\partial b}}&=\Abs{\mathbb{E}_{(z_1, z_2) \sim \fN(\mathbf{0}, \mB)}\left[ {\sigma^{(1)}(\eps az_1)}  {\sigma^{(1)}(\eps bz_2)\eps z_1} \right]}\\
&\leq C\Abs{\mathbb{E}_{z_2 \sim \fN(0, 1)}\eps z_2}\leq 2CM_1.
\end{align*}
Moreover, for any $\eps>0$, ${\sigma^{(1)}(\eps az_1)} $ and ${\sigma^{(1)}(\eps bz_2)} $ belong to the Gaussian function space demonstrated in \eqref{A-proof...eq...Gaussian-Function-Space}, then by similar reasoning,  as we expand $G(\mA)$ in the power series of $\rho$, since the power is absolute convergent for any $\rho\in(-1,1)$, then its  differentiation 
is also absolute convergent for any $\rho\in(-1,1)$, and it converges uniformly to  $\frac{\partial G(\mA)}{\partial \rho}$. Thus,  we obtain that  for any $\rho\in[-1+\gamma,1-\gamma]$, $\Abs{\frac{\partial G(\mA)}{\partial \rho}}\leq C.$   

We remark that as relation \eqref{A-proof...eq...temp1} and \eqref{A-proof...eq...temp2} partially finish the proof of relation \eqref{A-lemma...eq...Matrix-Entry-is-Lipschitz}, therefore, we focus on   $F(\mA)$ for any $\eps>0$, and $G(\mA)$ for any $\eps\in[\eps_1, M_1]$.   There exists $C>0$  independent of  $a$, $b$ and $\rho$, such that   
\begin{align*}
\Abs{\frac{\partial F(\mA)}{\partial a}}&\leq C,~~\Abs{\frac{\partial F(\mA)}{\partial b}}\leq C,~~\Abs{\frac{\partial F(\mA)}{\partial \rho}}\leq C, \\
\Abs{\frac{\partial G(\mA)}{\partial a}}&\leq C,~~\Abs{\frac{\partial G(\mA)}{\partial b}}\leq C,~~\Abs{\frac{\partial G(\mA)}{\partial \rho}}\leq C, 
\end{align*}
  and we proceed to bound $\nabla_{\mA} F(\mA)$ and $\nabla_{\mA} G(\mA)$.  Since   for the set 
\[
\fM:=\left\{ (a,b,\rho) :  \frac{1}{c}\leq a \leq c,~~\frac{1}{c}\leq b \leq c,~~-1+\gamma\leq \rho\leq 1-\gamma \right\},
\]
there exists a   $\fC^{\infty}$-mapping    $\vphi(\cdot):\fM \to \sR^3$,     where \[\vphi(a,b,\rho)=[a^2, b^2, \rho ab]^\T,\]
as the Jacobian of $\vphi$ at any point $\vz:=(a, b, \rho)\in\fM$ reads
\[
J_{\vphi}(\vz)=\mathrm{det} \left[\frac{\partial\vphi(\vz)}{\partial (a,b,\rho)}\right]  =\mathrm{det} \left[\begin{array}{ccc}2a & 0&0 \\ 
0&2b&0\\   \rho b&\rho a &ab \end{array}\right]=4a^2b^2>0,
\]
indicating that  $\vphi(\cdot)$ is   a $1$-$1$ mapping on the domain $\fM$. Moreover, since we have 
\[
\nabla_{\mA} F(\mA)=\left[\frac{\partial\vphi(\vz)}{\partial (a,b,\rho)}\right] ^{-1}\nabla_{\vz} F(\mA),~~\nabla_{\mA} G(\mA)=\left[\frac{\partial\vphi(\vz)}{\partial (a,b,\rho)}\right] ^{-1}\nabla_{\vz} G(\mA),
\]
and as $\max\left\{\Norm{\nabla_{\vz} F(\mA)}_{\infty}, \Norm{\nabla_{\vz} G(\mA)}_{\infty}\right\}\leq  C$,  then the following holds 
\begin{equation*}
\begin{aligned}
 \max\left\{\Norm{\nabla_{\mA} F(\mA)}_2, \Norm{\nabla_{\mA} G(\mA)}_2\right\}&\leq   \Norm{\left[\frac{\partial\vphi(\vz)}{\partial (a,b,\rho)}\right]^{-1}}_{2\to 2} \max\left\{\Norm{\nabla_{\vz} F(\mA)}_{2}, \Norm{\nabla_{\vz} G(\mA)}_{2}\right\}\\
&\leq \max\left\{\frac{1}{2a},~~\frac{1}{2b},~~\frac{1}{ab}\right\} \sqrt{3} C \leq cC,
\end{aligned}
\end{equation*}
and as all norms are equivalent in finite dimensional vector spaces, we finish the proof.
\end{proof}
\noindent
Our next lemma aims to  demonstrate   validity of    relation \eqref{A-Lemma...eq...Diagonal} imposed in Lemma \ref{A-Lemma...Matrix-Norm-and-Entry-Norm}.

\begin{lemma}\label{A-Lemma...Second-Moment-Boundedness}
   Suppose $\sigma(\cdot)$ satisfies conditions  in Assumption \ref{Assumption....Activation-Function},  and given 
   \begin{equation}\label{1122}
      \frac{1}{c}\leq x_0\leq c,  
   \end{equation}
   for some constant $c>0$,
   then for any $\eps>0$, there exist some   constants $\mu_1, \mu_2>0$ satisfying
\begin{equation}\label{A-Lemma...eq...Second-Moment-Bound}
\left\{
\begin{aligned}
 \mu_1  x_0^{2}&\leq  \Exp_{u\sim\fN(0,x_0^2)}\left[\frac{\sigma(\eps u)}{\eps}\right]^{2}\leq \mu_2 x_0^{2},\\
 \mu_1   &\leq  \Exp_{u\sim\fN(0,x_0^2)}\left[ \sigma^{(1)}(\eps u) \right]^{2}\leq \mu_2, 
 \end{aligned}
 \right.
 \end{equation}
where $\mu_1$  and $\mu_2$  are  independent of $\eps$. 
\end{lemma}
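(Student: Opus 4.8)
The plan is to prove the two-sided bound for both quantities by reducing to a single scalar function of $\eps$ and then splitting into the three regimes $\eps\to 0$, $\eps\to\infty$, and $\eps$ in a compact interval, exactly as in the proof of Lemma~\ref{A-Lemma...Gram-Matrices-without-Derivative}. First I would set $\phi(\eps):=\Exp_{u\sim\fN(0,x_0^2)}[\sigma(\eps u)/\eps]^2$ and $\psi(\eps):=\Exp_{u\sim\fN(0,x_0^2)}[\sigma^{(1)}(\eps u)]^2$, noting that both are continuous in $\eps$ on $(0,\infty)$ (by dominated convergence, using $\Norm{\sigma^{(1)}}_\infty\le C$ and $\Abs{\sigma(\eps u)/\eps}\le C\Abs{u}$, which is Gaussian-integrable). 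The point is that the bounds must be uniform in $\eps$, so I would establish the limiting values at the two endpoints and then use continuity plus strict positivity on the compact middle interval.

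For $\psi$, as $\eps\to 0$ we have $\sigma^{(1)}(\eps u)\to\sigma^{(1)}(0)=1$, so $\psi(\eps)\to 1$; as $\eps\to\infty$, $\sigma^{(1)}(\eps u)\to a\mathbf{1}_{u<0}+b\mathbf{1}_{u>0}$, so $\psi(\eps)\to (a^2+b^2)/2$, which is a strictly positive constant. Hence there are $\eps_1<1$ and $M_1>1$ with $\psi(\eps)$ bounded above and below by positive constants on $(0,\eps_1]\cup[M_1,\infty)$; on $[\eps_1,M_1]$, $\psi$ is continuous and everywhere positive (it is an integral of a nonnegative function that is not identically zero, since $\sigma^{(1)}$ is analytic and nonconstant), so it attains a positive minimum and finite maximum there. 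Taking the min and max over the three pieces gives $\mu_1\le\psi(\eps)\le\mu_2$ uniformly. For $\phi$, the scaling $u\mapsto x_0 z$ with $z\sim\fN(0,1)$ gives $\phi(\eps)=x_0^2\,\Exp_{z}[\sigma(\eps x_0 z)/(\eps x_0)]^2$, so it suffices to bound $\Exp_z[\sigma(\tilde\eps z)/\tilde\eps]^2$ uniformly over $\tilde\eps=\eps x_0\in[\eps/c,\,c\eps]$; as $\tilde\eps\to 0$ this tends to $\Exp_z[(\sigma^{(1)}(0)z)^2]=1$ and as $\tilde\eps\to\infty$ to $\Exp_z[z^2(a\mathbf{1}_{z<0}+b\mathbf{1}_{z>0})^2]=(a^2+b^2)/2>0$, and the same compact-interval argument applies. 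The bound $\Norm{\sigma^{(1)}}_\infty\le C$ also gives the upper bound $\phi(\eps)\le C^2 x_0^2$ directly, and $\psi(\eps)\le C^2$, which takes care of $\mu_2$ cleanly; the lower bounds are the substantive part.

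The main obstacle is ensuring that all the constants are genuinely independent of $\eps$ (and of $x_0$ up to the explicit factor $x_0^2$), which forces the three-regime decomposition rather than a single uniform estimate — one cannot bound the integrand below uniformly in $\eps$ pointwise, so the argument must go through the limits and a compactness/continuity argument, and one must check that the limiting integrands are not identically zero (this uses $a\neq b$ from Assumption~\ref{Assumption....Activation-Function} for the $\eps\to\infty$ limit, and $\sigma^{(1)}(0)=1$ for the $\eps\to 0$ limit). A minor technical point is justifying the interchange of limit and expectation at the endpoints, which follows from the uniform bound $\Norm{\sigma^{(1)}}_\infty\le C$ together with dominated convergence. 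The lower bound with the correct $x_0^2$ factor for $\phi$ is just the change of variables noted above, so no difficulty there.
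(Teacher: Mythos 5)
Your proposal is correct and follows essentially the same route as the paper: upper bounds from $\Norm{\sigma^{(1)}}_{\infty}\leq C$ together with $\sigma(0)=0$, and lower bounds via the three-regime split in $\eps$, computing the limits $1$ (resp.\ $x_0^2$) as $\eps\to 0$ using $\sigma^{(1)}(0)=1$ and $\frac{a^2+b^2}{2}$ (resp.\ $\frac{a^2+b^2}{2}x_0^2$) as $\eps\to\infty$ using $a\neq b$, then continuity near the endpoints. The only difference is on the compact middle interval, where the paper constructs an explicit lower bound by restricting to $\{|u|\leq \widetilde{M}/M_1\}$ where $\sigma^{(1)}\geq \tfrac{1}{2}$, whereas you invoke positivity plus continuity and the extreme value theorem; both are valid, and your change of variables $u=x_0 z$ handles the $x_0^2$ scaling just as the paper does.
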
 
\begin{proof}
Directly from Assumption \ref{Assumption....Activation-Function}, we obtain that 
\[
\frac{\sigma(\eps u)}{\eps}\leq Cu,~~\sigma^{(1)}(\eps u)\leq C,
\]
where $C$ is the Lipschitz constant in  Assumption \ref{Assumption....Activation-Function},
  by taking expectations 
\begin{align*}
 \Exp_{u\sim\fN(0,x_0^2)}\left[\frac{\sigma(\eps u)}{\eps}\right]^{2}&\leq C^2 \Exp_{u\sim\fN(0,x_0^2)}u^2=C^2x_0^2,\\
 \Exp_{u\sim\fN(0,x_0^2)}\left[\sigma^{(1)}(\eps u)\right]^{2}&\leq C^2, 
\end{align*}
as we set $\mu_2:=C$, we partially finish the proof for relation \eqref{A-Lemma...eq...Second-Moment-Bound}.  

 As for $\mu_1$,
we  define   auxiliary functions $F(\cdot;x_0):[0, +\infty)\to\sR$, 
\begin{equation}
    F(\eps;x_0):=\left\{\begin{array}{ll}\Exp_{u\sim\fN(0,x_0^2)}\left[\frac{\sigma(\eps u)}{\eps}\right]^2, & \eps\neq 0,  \\
    \Exp_{u\sim\fN(0,x_0^2)}u^2,& \eps=0,\end{array}\right.
\end{equation}
where  $x_0>0$ is a fixed constant. For function $F(\cdot;x_0)$,  as $\eps \to 0^+$, 
\[
\lim_{\eps \to 0^+}\frac{\sigma(\eps u)}{\eps}=\lim_{\eps \to 0^+} \frac{\left[\sigma(0)+\sigma^{(1)}(0) \eps u+o(\eps)\right]}{\eps}=u,
\]
  by taking expectation on both sides, we obtain that 
\begin{equation}
   \lim_{\eps \to 0^+} \Exp_{u\sim\fN(0,x_0^2)}\left[\frac{\sigma(\eps u)}{\eps}\right]^2=\Exp_{u\sim\fN(0,x_0^2)}u^2=x_0^2.
\end{equation}
 $F(\cdot;x_0)$ is continuous on $[0, +\infty)$, and there exists $\eps_1<1$, such that for any $\eps\in(0,\eps_1]$,
\begin{equation}
\begin{aligned}
\Exp_{u\sim\fN(0,x_0^2)}\left[\frac{\sigma(\eps u)}{\eps}\right]^2\geq \frac{1}{2}x_0^2.
\end{aligned}
\end{equation}
Moreover, as $\eps \to \infty$, 
\[
\lim_{\eps \to+ \infty}\frac{\sigma(\eps u)}{\eps}=\left\{\begin{array}{ll}a, & u<0,  \\ 0, &u=0,\\
   b,& u>0,\end{array}\right.
\]
then by taking expectation on both sides, we obtain that 
\begin{equation}
\begin{aligned}
   \lim_{\eps \to +\infty} \Exp_{u\sim\fN(0,x_0^2)}\left[\frac{\sigma(\eps u)}{\eps}\right]^2&=\Exp_{u\sim\fN(0,x_0^2)}\left[a^2u^2\mathbf{1}_{u<0}\right]+\Exp_{u\sim\fN(0,x_0^2)}\left[b^2u^2\mathbf{1}_{u>0}\right]\\
&=\frac{a^2 x_0^2}{2}+\frac{b^2 x_0^2}{2}=\left(\frac{a^2+b^2}{2}\right)x_0^2.
\end{aligned}
\end{equation}
By similar reasoning,  there also exists $M_1>1$, such that for any $\eps\in[M_1, \infty)$,
\begin{equation}
\begin{aligned}
\Exp_{u\sim\fN(0,x_0^2)}\left[\frac{\sigma(\eps u)}{\eps}\right]^2\geq \left(\frac{a^2+b^2}{4}\right)x_0^2.
\end{aligned}
\end{equation}
Finally,   as $\sigma^{(1)}(0)=1$ and $\sigma^{(1)}(\cdot)$ is continuous on $\sR$,  there exists $\widetilde{M}>0$, such that for any $s\in[-\widetilde{M}, \widetilde{M}]$, $\sigma^{(1)}(s)\geq \frac{1}{2}.$
Then, for any $u\in\left[-\frac{\widetilde{M}}{M_1}, \frac{\widetilde{M}}{M_1}\right]$ and  $\eps\in[\eps_1, M_1]$,
\begin{equation}
\begin{aligned}
\Exp_{u\sim\fN(0,x_0^2)}\left[\frac{\sigma(\eps u)}{\eps}\right]^2&\geq \Exp_{u\sim\fN(0,x_0^2)}\left[\frac{\sigma(\eps u)}{\eps}\mathbf{1}_{-\frac{\widetilde{M}}{M_1}\leq u\leq  \frac{\widetilde{M}}{M_1}}\right] ^2\\
&\geq \frac{1}{4}\Exp_{u\sim\fN(0,x_0^2)}u^2\mathbf{1}_{-\frac{\widetilde{M}}{M_1}\leq u\leq  \frac{\widetilde{M}}{M_1}}\\
& \geq  \left(\frac{1}{4}\Exp_{z\sim\fN(0,1)} z^2\mathbf{1}_{-\frac{\widetilde{M}}{cM_1}\leq z\leq  \frac{\widetilde{M}}{cM_1}}\right)x_0^2.
\end{aligned}
\end{equation}
We define another auxiliary function $G(\cdot;x_0):[0, +\infty)\to\sR$,
\begin{equation}
    G(\eps;x_0):= \left\{\begin{array}{ll}\Exp_{u\sim\fN(0,x_0^2)}\left[ \sigma^{(1)}(\eps u) \right]^{2}, & \eps\neq 0,  \\
    1,& \eps=0,\end{array}\right.
\end{equation}
where  $x_0>0$ is a fixed constant. 
For function $G(\cdot;x_0)$,  as $\eps \to 0^+$, 
\[
\lim_{\eps \to 0^+}\sigma^{(1)}(\eps u)=\lim_{\eps \to 0^+} \sigma^{(1)}(0)=1,
\]
  by taking expectation on both sides, we obtain that 
\begin{equation}
   \lim_{\eps \to 0^+} \Exp_{u\sim\fN(0,x_0^2)}\left[ \sigma^{(1)}(\eps u) \right]^{2}=\Exp_{u\sim\fN(0,x_0^2)}1=1.
\end{equation}
 $G(\cdot;x_0)$ is continuous on $[0, +\infty)$, and there exists $\eps_1<1$, such that for any $\eps\in(0,\eps_1]$,
\begin{equation}
\begin{aligned}
\Exp_{u\sim\fN(0,x_0^2)}\left[ \sigma^{(1)}(\eps u) \right]^{2}\geq \frac{1}{2}.
\end{aligned}
\end{equation}
Moreover, as $\eps \to \infty$, 
\[
\lim_{\eps \to+ \infty}\sigma^{(1)}(\eps u)=\left\{\begin{array}{ll}a, & u<0,  \\ 1, &u=0,\\
   b,& u>0,\end{array}\right.
\]
then by taking expectation on both sides, we obtain that 
\begin{equation}
\begin{aligned}
   \lim_{\eps \to +\infty} \Exp_{u\sim\fN(0,x_0^2)}\left[\sigma^{(1)}(\eps u)\right]^2&=\Exp_{u\sim\fN(0,x_0^2)}\left[a^2\mathbf{1}_{u<0}\right]+\Exp_{u\sim\fN(0,x_0^2)}\left[b^2\mathbf{1}_{u>0}\right]\\
&=\frac{a^2+b^2}{2}.
\end{aligned}
\end{equation}
Then, by similar reasoning,  there also exists $M_1>1$, such that for any $\eps\in[M_1, \infty)$,
\begin{equation}
\begin{aligned}
\Exp_{u\sim\fN(0,x_0^2)}\left[ \sigma^{(1)}(\eps u) \right]^{2}\geq   \frac{a^2+b^2}{4}.
\end{aligned}
\end{equation}
Finally,   as $\sigma^{(1)}(0)=1$ and $\sigma^{(1)}(\cdot)$ is continuous on $\sR$,  there exists $\widetilde{M}>0$, such that for any $s\in[-\widetilde{M}, \widetilde{M}]$, $\sigma^{(1)}(s)\geq \frac{1}{2}.$
Then, for any $u\in\left[-\frac{\widetilde{M}}{M_1}, \frac{\widetilde{M}}{M_1}\right]$ and  $\eps\in[\eps_1, M_1]$,
\begin{equation}
\begin{aligned}
\Exp_{u\sim\fN(0,x_0^2)}\left[ \sigma^{(1)}(\eps u) \right]^{2}&\geq \Exp_{u\sim\fN(0,x_0^2)}\left[\sigma^{(1)}(\eps u)\mathbf{1}_{-\frac{\widetilde{M}}{M_1}\leq u\leq  \frac{\widetilde{M}}{M_1}}\right] ^2\\
&\geq \frac{1}{4}\Exp_{u\sim\fN(0,x_0^2)} \mathbf{1}_{-\frac{\widetilde{M}}{M_1}\leq u\leq  \frac{\widetilde{M}}{M_1}}\\
&\geq \left(\frac{1}{4}\Exp_{z\sim\fN(0,1)}  \mathbf{1}_{-\frac{\widetilde{M}}{cM_1}\leq z\leq  \frac{\widetilde{M}}{cM_1}}\right),
\end{aligned}
\end{equation}
Therefore,  we set
\[\mu_1:=\min\left\{\frac{1}{2}, \frac{a^2+b^2}{4}, \left(\frac{1}{4}\Exp_{z\sim\fN(0,1)} z^2\mathbf{1}_{-\frac{\widetilde{M}}{cM_1}\leq z\leq  \frac{\widetilde{M}}{cM_1}}\right)\right\}.\]  
\end{proof}
\noindent
We state  two lemmas concerning  the operator norm of a random matrix, and   the vector $2$-norm of a chi-square distribution, whose proofs can be found in~\cite{Du2018Gradient,Yuqing2022ResNet}.
\begin{lemma}\label{A-Lemma...Operator-Norm-Random-Matrix}
Given   $\mW\in\sR^{m\times cm}$ with i.i.d. entry $\mW_{i,j}\sim \fN(0, 1),$ then  for any $t>0$, 
\begin{equation}
\Prob\left(\Norm{\mW}_{2\to 2}\geq (1+\sqrt{c})\sqrt{m}+t\right) \leq 2\exp\left(-C_0t^2\right),
\end{equation}
for some absolute constant  $C_0>0$.
\end{lemma}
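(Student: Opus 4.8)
The plan is to prove this sharp operator-norm tail bound by combining three ingredients: the Lipschitz property of the spectral norm, a Gaussian comparison bound on its expectation, and the Gaussian concentration-of-measure inequality. Throughout I regard $\mW$ as a standard Gaussian vector in $\sR^{m\cdot cm}$, which is legitimate since the entries $\mW_{i,j}$ are i.i.d.\ $\fN(0,1)$, and I equip this ambient space with the Euclidean metric, which coincides with the Frobenius metric on matrices.

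First I would verify that the map $\mW\mapsto\Norm{\mW}_{2\to 2}$ is $1$-Lipschitz with respect to the Frobenius norm. For any two matrices $\mA,\mB$ of the same shape, the triangle inequality for the operator norm together with the domination $\Norm{\cdot}_{2\to 2}\leq\Norm{\cdot}_{\mathrm{F}}$ gives
\[
\Abs{\Norm{\mA}_{2\to 2}-\Norm{\mB}_{2\to 2}}\leq\Norm{\mA-\mB}_{2\to 2}\leq\Norm{\mA-\mB}_{\mathrm{F}},
\]
which is exactly the desired Lipschitz estimate.

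Next I would control the expectation $\Exp\Norm{\mW}_{2\to 2}$. Writing the operator norm as the supremum of a Gaussian process, $\Norm{\mW}_{2\to 2}=\sup_{\vu\in S^{m-1},\,\vv\in S^{cm-1}}\vu^\T\mW\vv$, I would apply the Sudakov--Fernique comparison inequality against the decoupled process $Y_{\vu,\vv}:=\vg^\T\vu+\vh^\T\vv$, where $\vg\sim\fN(\vzero,\mI_m)$ and $\vh\sim\fN(\vzero,\mI_{cm})$ are independent. A direct computation of increments shows $\Exp(\vu^\T\mW\vv-(\vu')^\T\mW\vv')^2\leq\Exp(Y_{\vu,\vv}-Y_{\vu',\vv'})^2$ for all pairs of unit vectors, the inequality reducing to the elementary fact $(1-\vu^\T\vu')(1-\vv^\T\vv')\geq 0$. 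Hence
\[
\Exp\Norm{\mW}_{2\to 2}\leq\Exp\sup_{\vu}\vg^\T\vu+\Exp\sup_{\vv}\vh^\T\vv=\Exp\Norm{\vg}_2+\Exp\Norm{\vh}_2\leq\sqrt{m}+\sqrt{cm}=(1+\sqrt{c})\sqrt{m},
\]
where the last step uses $\Exp\Norm{\vg}_2\leq(\Exp\Norm{\vg}_2^2)^{1/2}=\sqrt{m}$ and likewise for $\vh$.

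Finally I would invoke the Gaussian concentration inequality of Tsirelson--Ibragimov--Sudakov: for any $1$-Lipschitz $f$ and a standard Gaussian vector, $\Prob(f-\Exp f\geq t)\leq\exp(-t^2/2)$. Applying this to $f(\mW)=\Norm{\mW}_{2\to 2}$ and combining with the expectation bound from the previous step yields
\[
\Prob\left(\Norm{\mW}_{2\to 2}\geq(1+\sqrt{c})\sqrt{m}+t\right)\leq\Prob\left(\Norm{\mW}_{2\to 2}-\Exp\Norm{\mW}_{2\to 2}\geq t\right)\leq\exp\left(-\frac{t^2}{2}\right),
\]
which is the claimed estimate with $C_0=\tfrac{1}{2}$; the prefactor $2$ in the statement is a harmless weakening. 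The main obstacle is the second step: the Lipschitz property and the concentration inequality are standard black boxes, but obtaining the \emph{sharp} constant $1+\sqrt{c}$ in front of $\sqrt{m}$ requires the Gaussian comparison machinery, since a cruder $\epsilon$-net argument over the two spheres would introduce an extra dimensional factor or a suboptimal constant. If one is content with a weaker constant, the net argument furnishes an alternative route, but to match $(1+\sqrt{c})\sqrt{m}$ exactly the Sudakov--Fernique comparison is the cleanest.
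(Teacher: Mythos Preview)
Your proof is correct and follows the standard route: the spectral norm is $1$-Lipschitz in the Frobenius metric, Sudakov--Fernique (equivalently Gordon's comparison) gives the sharp expectation bound $\Exp\Norm{\mW}_{2\to 2}\le(1+\sqrt{c})\sqrt m$, and Gaussian concentration for Lipschitz functionals yields the sub-Gaussian tail with $C_0=\tfrac12$.

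The paper does not actually prove this lemma; it merely states it and defers to \cite{Du2018Gradient,Yuqing2022ResNet} for the argument. Those references (and Vershynin's texts cited elsewhere in the paper) carry out essentially the same three-step proof you outline, so there is nothing to compare: you have supplied the omitted details rather than taken a different route.
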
 
\begin{lemma}\label{A-Lemma......2-Norm-Chi-Square}
Given   $\va\in \sR^{m}$ with i.i.d. entry $\va_{i}\sim \fN(0, 1),$
then, for any $t>0$, 
\begin{equation} 
    \Prob\left(\Norm{\va}_2 \geq \sqrt{m} +t\right) \leq \exp({-C_0t^2}),
\end{equation}
for some absolute constant  $C_0>0$.
\end{lemma}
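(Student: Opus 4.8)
The plan is to derive this tail bound from the concentration of Lipschitz functions of a standard Gaussian vector, together with a Jensen bound on the mean of $\Norm{\va}_2$. First I would observe that the map $\va \mapsto \Norm{\va}_2$ is $1$-Lipschitz with respect to the Euclidean norm on $\sR^m$: by the reverse triangle inequality,
\[
\Abs{\Norm{\vu}_2 - \Norm{\vz}_2} \leq \Norm{\vu - \vz}_2
\]
for all $\vu, \vz \in \sR^m$. I would then invoke the standard Gaussian concentration inequality (Borell--TIS / Cirel'son--Ibragimov--Sudakov), which states that for any $1$-Lipschitz $f:\sR^m\to\sR$ and $\va$ with i.i.d.\ $\fN(0,1)$ entries,
\[
\Prob\left(f(\va) - \Exp f(\va) \geq t\right) \leq \exp\left(-\frac{t^2}{2}\right).
\]
Applied to $f(\va) = \Norm{\va}_2$, this yields $\Prob\left(\Norm{\va}_2 \geq \Exp\Norm{\va}_2 + t\right) \leq \exp\left(-t^2/2\right)$.

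Next I would control the mean. Since $\Norm{\va}_2^2 = \sum_{i=1}^m \va_i^2$ has $\Exp\Norm{\va}_2^2 = m$, Jensen's inequality applied to the concave square root gives
\[
\Exp\Norm{\va}_2 = \Exp\sqrt{\Norm{\va}_2^2} \leq \sqrt{\Exp\Norm{\va}_2^2} = \sqrt{m}.
\]
Combining this with the concentration bound and using monotonicity of probability (because $\sqrt{m} + t \geq \Exp\Norm{\va}_2 + t$), I obtain
\[
\Prob\left(\Norm{\va}_2 \geq \sqrt{m} + t\right) \leq \Prob\left(\Norm{\va}_2 \geq \Exp\Norm{\va}_2 + t\right) \leq \exp\left(-\frac{t^2}{2}\right),
\]
which is exactly the stated claim with $C_0 = \tfrac{1}{2}$.

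The only nontrivial ingredient, and hence the main obstacle, is the Gaussian concentration inequality itself; the Lipschitz observation and the Jensen mean bound are routine. Since the lemma is cited from \cite{Du2018Gradient,Yuqing2022ResNet}, this inequality may be taken as a black box. A fully self-contained alternative that avoids Borell--TIS would work directly with the $\chi^2(m)$ law of $\Norm{\va}_2^2$: one applies a Chernoff / moment-generating-function argument to $\sum_{i=1}^m \va_i^2$ (the Laurent--Massart estimates) to obtain a sub-exponential tail for $\Norm{\va}_2^2$, and then transfers it to a tail for $\Norm{\va}_2$ via the elementary implication $\{\Norm{\va}_2 \geq \sqrt{m} + t\} \subseteq \{\Norm{\va}_2^2 \geq m + t^2\}$ (valid since $(\sqrt{m}+t)^2 \geq m + t^2$). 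This route is more computational but requires no external concentration theorem, and yields the same form of bound up to the value of the absolute constant $C_0$.
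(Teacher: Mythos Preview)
Your argument is correct: the $1$-Lipschitz property of the Euclidean norm, Gaussian Lipschitz concentration (Borell--TIS), and the Jensen bound $\Exp\Norm{\va}_2\le\sqrt{m}$ combine exactly as you describe to give the inequality with $C_0=\tfrac12$. The paper itself does not supply a proof of this lemma; it merely cites \cite{Du2018Gradient,Yuqing2022ResNet}, so there is no in-paper argument to compare against, and your proof (either the concentration route or the Laurent--Massart alternative you sketch) is an acceptable self-contained justification.
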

\noindent
We  shall introduce the sub-exponential norm~\cite{Vershynin2010Introduction}  and the  Bernstein's Inequality~\cite{vershynin2018high}.
\begin{definition}
The sub-exponential norm of a random variable $\rX$ is defined as
\begin{equation}
\Norm{\rX}_{\psi} := \inf\left\{s>0 \mid \Exp_{\rX}\left[\exp\left(\frac{\abs{\rX}}{s}\right)\right]\leq 2\right\}.
\end{equation}
\end{definition}   
\noindent
    In particular, we denote $\rY:=\chi^2(d)$ as   a   chi-square distribution with  $d$ degrees of freedom, and its 
   sub-exponential norm by  $C_{\psi,d}:=\norm{\rY}_{\psi},$  and we remark that 
\[\frac{2}{1-2^{-\frac{2}{d}}}\leq C_{\psi,d}<3.\]
\begin{theorem}\label{A-Thm...Bernstein-Inequality}
Let $\{\rX_k\}_{k=1}^m$ be i.i.d.\ sub-exponential random variables satisfying  $\Exp\rX_1=\mu,$ then for any $\eta> 0$, we have
\begin{equation*}
\Prob\left(\Abs{\frac{1}{m}\sum_{k=1}^m\rX_k-\mu}\geq \eta\right)\leq 2\exp\left(-C_0 m \min\left(\frac{\eta^2}{\norm{\rX_1}^2_{\psi}},\frac{\eta}{\norm{\rX_1}_{\psi}}\right)\right),
\end{equation*}
for some absolute constant $C_0$.    
\end{theorem}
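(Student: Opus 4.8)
The plan is to follow the standard Chernoff-bound route for sums of independent sub-exponential variables, adapted to the Orlicz-norm definition stated above. First I would reduce to the mean-zero case by setting $\rY_k := \rX_k - \mu$; these are i.i.d., centered, and satisfy $\Norm{\rY_1}_{\psi} \leq 2\Norm{\rX_1}_{\psi}$, since centering at most doubles the sub-exponential norm (this follows from the triangle inequality for $\Norm{\cdot}_{\psi}$ together with the elementary bound $\Abs{\mu} = \Abs{\Exp \rX_1} \leq \Norm{\rX_1}_{\psi}$). Writing $K := \Norm{\rY_1}_{\psi}$, the task reduces to controlling $\Prob(\frac{1}{m}\sum_{k=1}^m \rY_k \geq \eta)$ and its mirror image.

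The key technical ingredient is a moment generating function bound for a single centered variable: there exist absolute constants $c_1, c_2 > 0$ such that
\[
\Exp\left[\exp(\lambda \rY_1)\right] \leq \exp\left(c_1 \lambda^2 K^2\right) \qquad \text{whenever } \Abs{\lambda} \leq c_2/K.
\]
I would derive this directly from the defining inequality $\Exp[\exp(\Abs{\rY_1}/K)] \leq 2$, which yields the factorial moment growth $\Exp\Abs{\rY_1}^p \leq 2\, p!\, K^p$ for every integer $p \geq 1$. Expanding $\exp(\lambda \rY_1)$ as a power series and taking expectations, the constant term is $1$ and the linear term vanishes because $\Exp \rY_1 = 0$; the remaining series is then dominated by a convergent geometric series once $\Abs{\lambda} K$ is small enough, which produces the stated quadratic-in-$\lambda$ envelope.

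Next I would invoke the Chernoff bound together with independence: for any $\lambda \in (0, c_2/K]$,
\[
\Prob\left(\sum_{k=1}^m \rY_k \geq m\eta\right) \leq e^{-\lambda m \eta}\prod_{k=1}^m \Exp\left[\exp(\lambda \rY_k)\right] \leq \exp\left(m\left(c_1 \lambda^2 K^2 - \lambda \eta\right)\right).
\]
Optimizing the exponent over the admissible interval of $\lambda$ splits into two regimes. When $\eta$ is small the unconstrained minimizer $\lambda = \eta/(2 c_1 K^2)$ lies in the admissible range and gives the sub-Gaussian bound $\exp(-m\eta^2/(4 c_1 K^2))$; when $\eta$ is large I would instead take $\lambda = c_2/K$ at the boundary, producing the sub-exponential bound $\exp(-c\, m\, \eta/K)$. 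Both cases are captured at once by $\exp(-C_0 m \min(\eta^2/K^2, \eta/K))$ for a suitable absolute constant $C_0$.

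Finally I would rerun the identical argument with $-\rY_k$ in place of $\rY_k$ to control the lower tail, and combine the two one-sided estimates by a union bound, which supplies the factor $2$ in the statement. Absorbing the factor relating $K = \Norm{\rY_1}_{\psi}$ to $\Norm{\rX_1}_{\psi}$ into the absolute constant then yields exactly the claimed form. I expect the main obstacle to be the moment generating function bound: one must track the centering and the power-series estimate carefully enough that $c_1$ and $c_2$ remain genuinely absolute (independent of the underlying distribution), and that the vanishing linear term is exploited so the geometric domination is valid on a fixed neighborhood of the origin. Everything downstream is a routine optimization.
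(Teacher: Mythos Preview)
Your proposal is a correct and complete outline of the standard Chernoff-bound proof of Bernstein's inequality for sub-exponential variables. The paper, however, does not prove this theorem at all: it simply states the result and cites Vershynin's book \cite{vershynin2018high} as the source, treating it as a black-box concentration tool invoked later in Propositions~\ref{A-prop...Concentration-on-L+1-th-Gram-Matrix}, \ref{A-prop...Concentration-on-l<L-th-Gram-Matrix}, and elsewhere. So there is nothing to compare against in the paper itself; your argument is essentially the proof one finds in the cited reference, and it is sound.
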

\section{Detailed Proofs on Several  Propositions}\label{Appendix-B}
\subsection{Proof of Proposition \ref{B-prop...Parameter-no-Movement-2-Norm}}\label{B-subsection...Proof-of-Proposition-Parameter-no-Movements}

Based on  dynamics \eqref{eqgroup...text...Normalized-Dynamics}, by taking  norm  on both sides,  then for any $l\in[L-1]$,    
\begin{align*}
\frac{\D \Norm{\frac{\overline{\mW}^{[l]}}{\sqrt{m}}}_{2\to 2}}{\D t}&\leq \frac{\kappa}{\alpha_l^2}\Norm{\left(\prod_{k=l}^{L-1}\overline{\mE}^{[k]}(\vx_i)\right)  \vsigma_{[L]}^{(1)}\left(\vx_i\right)\frac{\bar{\va}}{\sqrt{m}}}_2 \Norm{\bar{\vx}_i^{[l-1]}}_2\sqrt{\frac{\sum_{i=1}^n e_i^2}{n}}\\
&\leq\frac{\sqrt{2}\kappa}{\alpha_l^2} \left(\prod_{k=1}^{l-1}\Norm{\frac{\overline{\mW}^{[k]}}{\sqrt{m}}}_{2\to 2}\right)\left(\prod_{k=l+1}^{L}\Norm{ \frac{\overline{\mW}^{[k]}}{\sqrt{m}} }_{2\to 2}\right)\Norm{\frac{\bar{\va}}{\sqrt{m}}}_2 \sqrt{R_\fS(\vtheta)},\\
\frac{\D \Norm{\frac{\overline{\mW}^{[L]}}{\sqrt{m}}}_{2\to 2}}{\D t}&\leq \frac{\kappa}{\alpha_L^2}\Norm{\vsigma_{[L]}^{(1)}\left(\vx_i\right)\frac{\bar{\va}}{\sqrt{m}}}_2 \Norm{\bar{\vx}_i^{[L-1]}}_2\sqrt{\frac{\sum_{i=1}^n e_i^2}{n}}\\
&\leq\frac{\sqrt{2}\kappa}{\alpha_L^2} \left(\prod_{k=1}^{L-1}\Norm{\frac{\overline{\mW}^{[k]}}{\sqrt{m}}}_{2\to 2}\right) \Norm{\frac{\bar{\va}}{\sqrt{m}}}_2 \sqrt{R_\fS(\vtheta)},\\
\frac{\mathrm{d} \Norm{\frac{\bar{\va}}{\sqrt{m}}}_2 }{\mathrm{d} t} &\leq \frac{\kappa}{\alpha_{L+1}^2}    \Norm{\bar{\vx}_i^{[L]}}_2\sqrt{\frac{\sum_{i=1}^n e_i^2}{n}} \leq \frac{\sqrt{2}\kappa}{\alpha_{L+1}^2}\left(\prod_{k=1}^{L}\Norm{\frac{\overline{\mW}^{[k]}}{\sqrt{m}}}_{2\to 2}\right) \sqrt{R_\fS(\vtheta)}. 
\end{align*}
Based on Proposition \ref{B-prop...Loss-Initial-Decay}, we remark that for any time $t\in[0,t^*)$,
\begin{align*}
\int_{0}^t\sqrt{R_\fS(\vtheta(s))}\D s&\leq \int_{0}^t\exp\left(- \frac{1}{2n}\left[\left(\sum_{l=1}^{L+1} \frac{\kappa^2}{\alpha_l^2}\right) {\lambda_\fS}\right]s\right)\sqrt{R_\fS(\vtheta^0)} \D s\\
&\leq \int_{0}^{\infty}\exp\left(- \frac{1}{2n}\left[\left(\sum_{l=1}^{L+1} \frac{\kappa^2}{\alpha_l^2}\right) {\lambda_\fS}\right]s\right)\sqrt{R_\fS(\vtheta^0)} \D s=\frac{2n}{\left(\sum_{l=1}^{L+1} \frac{\kappa^2}{\alpha_l^2}\right) {\lambda_\fS}}.
\end{align*}
Thus,  we have for any $l\in[L]$ and  time $t\in[0,t^*)$,
\begin{align*}
 \Norm{\frac{\overline{\mW}^{[l]}(t)}{\sqrt{m}}-\frac{\overline{\mW}^{[l]}(0)}{\sqrt{m}}}_{2\to 2} 
&\leq     \frac{\sqrt{2}\kappa}{\alpha_l^2} \left(\prod_{k=1}^{l-1}p_k(t)\right)\left(\prod_{k=l+1}^{L+1}p_k(t)\right) \int_{0}^t\sqrt{R_\fS(\vtheta(s))}\D s\\
& \leq \frac{\sqrt{2}\kappa}{\alpha_l^2}\sqrt{R_\fS(\vtheta^0)} \left(\prod_{k=1}^{l-1}p_k(t)\right)\left(\prod_{k=l+1}^{L+1}p_k(t)\right) \frac{2n}{\left(\sum_{l=1}^{L+1} \frac{\kappa^2}{\alpha_l^2}\right) {\lambda_\fS}}\\
&\leq \frac{2\sqrt{2}n\sqrt{R_\fS(\vtheta^0)}}{\kappa \lambda_\fS} \left(\prod_{k=1}^{l-1}p_k(t)\right)\left(\prod_{k=l+1}^{L+1}p_k(t)\right),
\end{align*}
then by similar reasoning,
\begin{align*}
\Norm{\frac{\bar{\va}(t)}{\sqrt{m}}-\frac{\bar{\va}(0)}{\sqrt{m}}}_{2}
&\leq      \frac{2\sqrt{2}n\sqrt{R_\fS(\vtheta^0)}}{\kappa \lambda_\fS}\left(\prod_{k=1}^{L}p_k(t)\right).
\end{align*}
Directly from  Proposition \ref{A-prop..Upper-Bound-and-Lower-Bound-Initial-Parameter},   for any $l\in[L+1]$,
\[p_l(0)\leq 2.\]
Moreover, for any $l\in[L]$ and   time $t\in[0,t^*)$,
\begin{align*}
p_l(t)&\leq 2+  \frac{2\sqrt{2}n\sqrt{R_\fS(\vtheta^0)}}{\kappa \lambda_\fS} \left(\prod_{k=1}^{l-1}p_k(t)\right)\left(\prod_{k=l+1}^{L+1}p_k(t)\right),  \\
p_{L+1}(t)&\leq 2+ \frac{2\sqrt{2}n\sqrt{R_\fS(\vtheta^0)}}{\kappa \lambda_\fS}\left(\prod_{k=1}^{L}p_k(t)\right),
\end{align*}
hence if we choose $m$ large enough, such that 
\[
m\geq \left(4^L \frac{2\sqrt{2}n\sqrt{R_\fS(\vtheta^0)}}{  \lambda_\fS}\right)^{\frac{1}{\frac{L+1}{2}-\sum_{k=1}^{L+1}\gamma_k}},
\]
then the following holds
\[
\frac{2\sqrt{2}n\sqrt{R_\fS(\vtheta^0)}}{\kappa \lambda_\fS}\leq \frac{1}{4^L}.
\]
As we denote  
\begin{equation}
    p(t):=\max_{l\in[L+1]}p_l(t),
\end{equation}
and we  define the time 
\begin{equation}
    t^{**} = \inf\{t\in[0,t^*) \mid p(t)>3\},
\end{equation}
as $p(0)\leq 2$, hence $t^{**}$ is non-empty.
Suppose we have $t^{**}<t^*$, then as   $t\to t^{**}$,   
\begin{equation}
    p(t^{**})\leq 2+\frac{3^L}{4^L}\leq \frac{11}{4}<3,
\end{equation}
which leads to  contradiction with the definition of $t^{**}$.  Therefore $t^{**}=t^*$,  and    for any $l\in[L+1]$ and  $t\in[0,t^*)$, 
\[ 
p_l(t)\leq 4.
\] 
\subsection{Proof of Theorem \ref{Theorem...repeaet}}\label{B-subsection...Proof-of-Theorem}
It suffices to show that $t^*=\infty$.    \\ 
\noindent (i). Firstly, we  demonstrate that for any $i\in[n]$ and    time $t\in[0, t^*)$,
\begin{equation}\label{B-Thm-proof...eq...l<L+1-th-2-Norm}
\begin{aligned}
\Norm{\bar{\vx}_{i}^{[l]}(t)}_2&\leq4^l,\\
\Norm{\bar{\vx}_{i}^{[l]}(t)-\bar{\vx}_{i}^{[l]}(0)}_2&\leq \frac{2\sqrt{2}n\sqrt{R_\fS(\vtheta^0)}}{\kappa \lambda_\fS}\left(8^L\right)^l.
\end{aligned}
\end{equation}
For $l=1$, as we set  $\eps_1:= \alpha_1$, then for any time $t\in[0, t^*)$,  
\begin{align*}
 \Norm{ \bar{\vx}_{i}^{[1]}(t)}_2&=\Norm{\frac{\sigma\left(\eps_1\frac{\overline{\mW}^{[1]}(t)}{\sqrt{m}}\Bar{\vx}_i^{[0]}\right)}{\eps_1}}_2 \\
&\leq \Norm{\frac{\overline{\mW}^{[1]}(t)}{\sqrt{m}}}_{2\to 2}\Norm{\Bar{\vx}_j^{[0]}}_2 \leq p_1(t)\leq 4^1,\end{align*} 
and 
\begin{align*}
\Norm{ \bar{\vx}_{i}^{[1]}(t)-\bar{\vx}_{i}^{[1]}(0)}_2&=\Norm{\frac{\sigma\left(\eps_1\frac{\overline{\mW}^{[1]}(t)}{\sqrt{m}}\Bar{\vx}_i^{[0]}\right)}{\eps_1}-\frac{\sigma\left(\eps_1\frac{\overline{\mW}^{[1]}(0)}{\sqrt{m}}\Bar{\vx}_i^{[0]}\right)}{\eps_1}}_2\\
&\leq \Norm{\frac{\overline{\mW}^{[1]}(t)}{\sqrt{m}}-\frac{\overline{\mW}^{[1]}(0)}{\sqrt{m}}}_{2\to 2}\Norm{\Bar{\vx}_i^{[0]}}_2  \leq \frac{2\sqrt{2}n\sqrt{R_\fS(\vtheta^0)}}{\kappa \lambda_\fS}4^L.
\end{align*} 
We assume that   \eqref{B-Thm-proof...eq...l<L+1-th-2-Norm} holds for $l-1$,  as we set $\eps_l:= \prod_{k=1}^l\alpha_k$, then for any time $t\in[0, t^*)$,  
\begin{align*}
 \Norm{ \bar{\vx}_{i}^{[l]}(t)}_2&=\Norm{\frac{\sigma\left(\eps_l\frac{\overline{\mW}^{[l]}(t)}{\sqrt{m}}\Bar{\vx}_i^{[l-1]}\right)}{\eps_l}}_2 \\
&\leq \Norm{\frac{\overline{\mW}^{[l]}(t)}{\sqrt{m}}}_{2\to 2}\Norm{\Bar{\vx}_j^{[l-1]}}_2 \leq p_l(t)4^{l-1}\leq 4^l,\end{align*}
and  
\begin{align*}
\Norm{ \bar{\vx}_{i}^{[l]}(t)-\bar{\vx}_{i}^{[l]}(0)}_2&=\Norm{\frac{\sigma\left(\eps_l\frac{\overline{\mW}^{[l]}(t)}{\sqrt{m}}\Bar{\vx}_i^{[l-1]}(t)\right)}{\eps_l}-\frac{\sigma\left(\eps_l\frac{\overline{\mW}^{[l]}(0)}{\sqrt{m}}\Bar{\vx}_i^{[l-1]}(0)\right)}{\eps_l}}_2\\
&\leq \Norm{\frac{\overline{\mW}^{[l]}(t)}{\sqrt{m}}-\frac{\overline{\mW}^{[l]}(0)}{\sqrt{m}}}_{2\to 2}\Norm{\Bar{\vx}_i^{[l-1]}(t)}_2 \\
&~~+ \Norm{\frac{\overline{\mW}^{[l]}(0)}{\sqrt{m}}}_{2\to 2}\Norm{\Bar{\vx}_i^{[l-1]}(t)-\Bar{\vx}_i^{[l-1]}(0)}_2\\
&\leq \frac{2\sqrt{2}n\sqrt{R_\fS(\vtheta^0)}}{\kappa \lambda_\fS}4^L4^{l-1}+2\frac{2\sqrt{2}n\sqrt{R_\fS(\vtheta^0)}}{\kappa \lambda_\fS}\left(8^L\right)^{l-1} \\
&\leq \frac{2\sqrt{2}n\sqrt{R_\fS(\vtheta^0)}}{\kappa \lambda_\fS}\left(8^L\right)^{l}.
\end{align*}
(ii).  As we recall that
\begin{equation} 
\begin{aligned}
\overline{\mH}_{ij}^{[L]}   (\vtheta)&:=\left< 
 \vsigma_{[L]}^{(1)}\left(\vx_i\right)\frac{\Bar{\va}}{\sqrt{m}},   
 \vsigma_{[L]}^{(1)}\left(\vx_j\right)\frac{\Bar{\va}}{\sqrt{m}}\right>,\\
\overline{\mH}_{ij}^{[l]}   (\vtheta) &:= \left<
 \prod_{k=l}^{L-1}\overline{\mE}^{[k]}(\vx_i)
 \vsigma_{[L]}^{(1)}\left(\vx_i\right)\frac{\Bar{\va}}{\sqrt{m}},  
\prod_{k=l}^{L-1}\overline{\mE}^{[k]}(\vx_j)
 \vsigma_{[L]}^{(1)}\left(\vx_j\right)\frac{\Bar{\va}}{\sqrt{m}}\right>,
\end{aligned}
\end{equation}
and for any $i\in[n]$ and  $l\in[L]$, $\left\{\vlambda_i^{[l]}(t)\right\}_{l=1}^L$ are inductively defined as follows
\begin{equation}
\begin{aligned}
\vlambda_i^{[L]}(t)&=\vsigma_{[L]}^{(1)}\left(\vx_i(t)\right)\frac{\Bar{\va}(t)}{\sqrt{m}}, \\
\vlambda_i^{[l]}(t)&=\overline{\mE}^{[l]}(\vx_i) \vlambda_i^{[l+1]}(t)=\vsigma_{[l]}^{(1)}\left({\vx}_{i}(t)\right)\left(\frac{\overline{\mW}^{[l+1]}(t)}{\sqrt{m}}\right)^\T\vlambda_i^{[l+1]}(t),
\end{aligned}    
\end{equation}
therefore, we have that  for any $l\in[L]$ 
\begin{align*}
\overline{\mH}_{ij}^{[l]}(\vtheta(t))& = \left< 
  \vlambda_i^{[l]}(t),  
  \vlambda_j^{[l]}(t)\right>,   \\
\overline{\mG}_{ij}^{[l]}   (\vtheta(t))&=  \overline{\mH}_{ij}^{[l]}(\vtheta(t))\left< \bar{\vx}_{i}^{[l-1]}(t),  \bar{\vx}_{j}^{[l-1]}(t)  \right>. 
\end{align*}
 and for any $i\in[n]$, $l\in[L]$, and  any time $t\in[0, t^*)$,
\begin{align*}
\Norm{\vlambda_i^{[l]}(t)}_2&\leq \prod_{k=l+1}^{L+1}p_{k}(t)\leq  4^{L-l+1},  
\end{align*}
and we   demonstrate that 
for any $i,j\in[n]$,  $l\in[L]$, and   any time $t\in[0, t^*)$,
\begin{equation}\label{B-Thm-proof...eq...l<L-th-Infinity-Norm}
\begin{aligned}
&\Abs{\overline{\mH}_{ij}^{[l]}   (\vtheta(t))-\overline{\mH}_{ij}^{[l]}   (\vtheta(0))}\leq \frac{2\sqrt{2}n\sqrt{R_\fS(\vtheta^0)}}{\kappa \lambda_\fS}\left(1024^{L}\right)^{2L-l}.
\end{aligned}
\end{equation}
For $l=L$,  we demonstrate that   \eqref{B-Thm-proof...eq...l<L-th-Infinity-Norm} holds for $L$.
Since  we have
\begin{align*}
&\Abs{\overline{\mH}_{ij}^{[L]}   (\vtheta(t))-\overline{\mH}_{ij}^{[L]}   (\vtheta(0))}\\
=&\Abs{\left< 
 \vsigma_{[L]}^{(1)}\left(\vx_i(t)\right)\frac{\Bar{\va}(t)}{\sqrt{m}},  
 \vsigma_{[L]}^{(1)}\left(\vx_j(t)\right)\frac{\Bar{\va}(t)}{\sqrt{m}}\right>-\left< 
 \vsigma_{[L]}^{(1)}\left(\vx_i(0)\right)\frac{\Bar{\va}(0)}{\sqrt{m}},  
 \vsigma_{[L]}^{(1)}\left(\vx_j(0)\right)\frac{\Bar{\va}(0)}{\sqrt{m}}\right>}\\
\leq &
\Abs{
\left< 
 \vsigma_{[L]}^{(1)}\left(\vx_i(t)\right)\frac{\Bar{\va}(t)}{\sqrt{m}},  
 \vsigma_{[L]}^{(1)}\left(\vx_j(t)\right)\frac{\Bar{\va}(t)}{\sqrt{m}}\right>
 -
 \left< 
 \vsigma_{[L]}^{(1)}\left(\vx_i(t)\right)\frac{\Bar{\va}(0)}{\sqrt{m}},  
 \vsigma_{[L]}^{(1)}\left(\vx_j(t)\right)\frac{\Bar{\va}(0)}{\sqrt{m}}\right>
 }
\\
&+\Abs{
 \left< 
 \vsigma_{[L]}^{(1)}\left(\vx_i(t)\right)\frac{\Bar{\va}(0)}{\sqrt{m}},  
 \vsigma_{[L]}^{(1)}\left(\vx_j(t)\right)\frac{\Bar{\va}(0)}{\sqrt{m}}\right>
-
 \left< 
 \vsigma_{[L]}^{(1)}\left(\vx_i(0)\right)\frac{\Bar{\va}(0)}{\sqrt{m}},  
 \vsigma_{[L]}^{(1)}\left(\vx_j(0)\right)\frac{\Bar{\va}(0)}{\sqrt{m}}\right>
 },
\end{align*}
estimate  on the first term reads
\begin{align*}
&\Abs{
\left< 
 \vsigma_{[L]}^{(1)}\left(\vx_i(t)\right)\frac{\Bar{\va}(t)}{\sqrt{m}},  
 \vsigma_{[L]}^{(1)}\left(\vx_j(t)\right)\frac{\Bar{\va}(t)}{\sqrt{m}}\right>
 -
 \left< 
 \vsigma_{[L]}^{(1)}\left(\vx_i(t)\right)\frac{\Bar{\va}(0)}{\sqrt{m}},  
 \vsigma_{[L]}^{(1)}\left(\vx_j(t)\right)\frac{\Bar{\va}(0)}{\sqrt{m}}\right>
 }  \\
\leq & \Abs{
\left< 
 \frac{\Bar{\va}(t)}{\sqrt{m}},  
 \frac{\Bar{\va}(t)}{\sqrt{m}}\right>
-
\left< 
 \frac{\Bar{\va}(0)}{\sqrt{m}},  
 \frac{\Bar{\va}(0)}{\sqrt{m}}\right>
 }\leq \Abs{\Norm{\frac{\Bar{\va}(t)}{\sqrt{m}}}_2-\Norm{\frac{\Bar{\va}(0)}{\sqrt{m}}}_2}\left(\Norm{\frac{\Bar{\va}(t)}{\sqrt{m}}}_2+\Norm{\frac{\Bar{\va}(0)}{\sqrt{m}}}_2\right)\\
\leq &2p_{L+1}(t)\Abs{\Norm{\frac{\Bar{\va}(t)}{\sqrt{m}}}_2-\Norm{\frac{\Bar{\va}(0)}{\sqrt{m}}}_2}\leq 8\Norm{\frac{\Bar{\va}(t)}{\sqrt{m}}-\frac{\Bar{\va}(0)}{\sqrt{m}}}_2\leq \frac{16\sqrt{2}n\sqrt{R_\fS(\vtheta^0)}}{\kappa \lambda_\fS}4^L,
\end{align*}
and  for  the second  term,  as we set $\eps_L:= \prod_{k=1}^L\alpha_k$,  we only need to focus on the case where $\eps_L>1$,   and since  the entries in $\vsigma^{(1)}_{[L]}(\vx_i(t))$ and $\vsigma^{(1)}_{[L]}(\vx_j(t))$ read
\[
\vsigma^{(1)}_{[L]}(\vx_i(t))=\mathrm{diag}\left([\mu_p^i(t)]_{m\times 1}\right),~~\vsigma^{(1)}_{[L]}(\vx_j(t))=\mathrm{diag}\left([\mu_p^j(t)]_{m\times 1}\right),
\]
we obtain that the second term reads
\begin{align*}
  &\Abs{
 \left< 
 \vsigma_{[L]}^{(1)}\left(\vx_i(t)\right)\frac{\Bar{\va}(0)}{\sqrt{m}},  
 \vsigma_{[L]}^{(1)}\left(\vx_j(t)\right)\frac{\Bar{\va}(0)}{\sqrt{m}}\right>
-
 \left< 
 \vsigma_{[L]}^{(1)}\left(\vx_i(0)\right)\frac{\Bar{\va}(0)}{\sqrt{m}},  
 \vsigma_{[L]}^{(1)}\left(\vx_j(0)\right)\frac{\Bar{\va}(0)}{\sqrt{m}}\right>
 }  \\
=&\Abs{\frac{1}{m}\sum_{p=1}^m \Bar{\va}_p^2(0)\left(\mu_p^i(t)\mu_p^j(t)-\mu_p^i(0)\mu_p^j(0)\right)}\\
\leq &\Abs{\frac{1}{m}\sum_{p=1}^m \Bar{\va}_p^2(0)\left(\mu_p^i(t)-\mu_p^i(0) \right)}\max_{p\in [m]}\Abs{\mu_p^j(t)}+\Abs{\frac{1}{m}\sum_{p=1}^m \Bar{\va}_p^2(0)\left(\mu_p^j(t)-\mu_p^j(0) \right)}\max_{p\in [m]}\Abs{\mu_p^i(0)}\\
\leq &\Abs{\frac{1}{m}\sum_{p=1}^m \Bar{\va}_p^2(0)\left(\mu_p^i(t)-\mu_p^i(0) \right)} +\Abs{\frac{1}{m}\sum_{p=1}^m \Bar{\va}_p^2(0)\left(\mu_p^j(t)-\mu_p^j(0) \right)}, 
\end{align*}
as  we   write  $\frac{\overline{\mW}^{[L]}(t)}{\sqrt{m}}$ into
\[\frac{\overline{\mW}^{[L]}(t)}{\sqrt{m}}:=\begin{pmatrix}
\left(\frac{\Bar{\vw}_{L,1}(t)}{\sqrt{m}}\right)^\T \\
\left(\frac{\Bar{\vw}_{L,2}(t)}{\sqrt{m}}\right)^\T \\
\vdots\\
\left(\frac{\Bar{\vw}_{L,m}(t)}{\sqrt{m}} \right)^\T 
\end{pmatrix}, \] 
and we define the following events: For any  $i\in[n]$ and $k'\in[m]$,
\begin{equation}\label{B-Thm-proof...eq...Events-AR-BR}
\begin{aligned} 
\sA_{i,L,k'}(R_{k'}):=\Bigg\{&\Abs{\left<\frac{\Bar{\vw}_{L,k'}(t)}{\sqrt{m}}, \bar{\vx}_{i}^{[L-1]}(t)\right>-\left<\frac{\Bar{\vw}_{L,k'}(0)}{\sqrt{m}},\bar{\vx}_{i}^{[L-1]}(0)\right>}=R_{k'} ,\\
&~\mathbf{1}_{\left<\frac{\Bar{\vw}_{L,k'}(t)}{\sqrt{m}}, ~\bar{\vx}_{i}^{[L-1]}(t)\right>>0}\neq \mathbf{1}_{\left<\frac{\Bar{\vw}_{L,k'}(0)}{\sqrt{m}},~\bar{\vx}_{i}^{[L-1]}(0)\right>>0}~~~~~~~~~\Bigg\}.
\end{aligned}
\end{equation}
More importantly, given  that 
\[
\Abs{\left<\frac{\Bar{\vw}_{L,k'}(0)}{\sqrt{m}}, \bar{\vx}_{i}^{[L-1]}(0)\right>}>2R_{k'},
\]
then the event $\sA_{i,L,k'}(R_{k'})$ would never happen.  Hence,   estimates on   probability $\Prob\left(\sA_{i,L,k'}(R_{k'})\right)$ reads  
\begin{equation}
\begin{aligned}
\Prob\left(\sA_{i,L,k'}(R_{k'}) 
\right)
&\leq   \Prob\left(\Abs{\left<\frac{\Bar{\vw}_{L,k'}(0)}{\sqrt{m}}, \bar{\vx}_{i}^{[L-1]}(0)\right>}\leq 2R_{k'}\right)\\ 
&\leq 2 \int_0^{\frac{2R_{k'}\sqrt{m}}{\Norm{\bar{\vx}_{i}^{[L-1]}(0)}_2}} \frac{1}{\sqrt{2\pi}}\exp\left( -\frac{y^2}{2}\right)\D y\\
&\leq \frac{2}{\sqrt{2\pi}}\frac{2R_{k'}\sqrt{m}}{\Norm{\bar{\vx}_{i}^{[L-1]}(0)}_2}\leq R_{k'}\sqrt{m}8^{L-1}.
\end{aligned}
\end{equation}
Then,   estimate  on the   second  term reads,
\begin{align*}
  &\Abs{
 \left< 
 \vsigma_{[L]}^{(1)}\left(\vx_i(t)\right)\frac{\Bar{\va}(0)}{\sqrt{m}},  
 \vsigma_{[L]}^{(1)}\left(\vx_j(t)\right)\frac{\Bar{\va}(0)}{\sqrt{m}}\right>
-
 \left< 
 \vsigma_{[L]}^{(1)}\left(\vx_i(0)\right)\frac{\Bar{\va}(0)}{\sqrt{m}},  
 \vsigma_{[L]}^{(1)}\left(\vx_j(0)\right)\frac{\Bar{\va}(0)}{\sqrt{m}}\right>
 }  \\
\leq &\Abs{\frac{1}{m}\sum_{p=1}^m \Bar{\va}_p^2(0)\left(\mu_p^i(t)-\mu_p^i(0) \right)} +\Abs{\frac{1}{m}\sum_{p=1}^m \Bar{\va}_p^2(0)\left(\mu_p^j(t)-\mu_p^j(0) \right)} \\ 
\leq &\Abs{b-a}\Abs{\frac{1}{m}\sum_{p=1}^m \Bar{\va}_p^2(0)\Abs{\mathbf{1}_{\left<\frac{\Bar{\vw}_{L,p}(t)}{\sqrt{m}}, ~\bar{\vx}_{i}^{[L-1]}(t)\right>>0}- \mathbf{1}_{\left<\frac{\Bar{\vw}_{L,p}(0)}{\sqrt{m}},~\bar{\vx}_{i}^{[L-1]}(0)\right>>0}}}\\
&+\Abs{b-a}\Abs{\frac{1}{m}\sum_{p=1}^m \Bar{\va}_p^2(0)\Abs{\mathbf{1}_{\left<\frac{\Bar{\vw}_{L,p}(t)}{\sqrt{m}}, ~\bar{\vx}_{j}^{[L-1]}(t)\right>>0}- \mathbf{1}_{\left<\frac{\Bar{\vw}_{L,p}(0)}{\sqrt{m}},~\bar{\vx}_{j}^{[L-1]}(0)\right>>0}}}\\
\leq &2\Abs{b-a}\Abs{\frac{1}{m}\sum_{p=1}^m \Bar{\va}_p^2(0)\Abs{\mathbf{1}_{\left<\frac{\Bar{\vw}_{L,p}(t)}{\sqrt{m}}, ~\bar{\vx}_{i}^{[L-1]}(t)\right>>0}- \mathbf{1}_{\left<\frac{\Bar{\vw}_{L,p}(0)}{\sqrt{m}},~\bar{\vx}_{i}^{[L-1]}(0)\right>>0}}},
\end{align*}
we omit the term $\Abs{b-a}$  for simplicity, then for any $p\in[m]$  and $i\in[n]$,  we observe that 
\[
\Norm{\Bar{\va}_p^2(0)\Abs{\mathbf{1}_{\left<\frac{\Bar{\vw}_{L,p}(t)}{\sqrt{m}}, ~\bar{\vx}_{i}^{[L-1]}(t)\right>>0}- \mathbf{1}_{\left<\frac{\Bar{\vw}_{L,p}(0)}{\sqrt{m}},~\bar{\vx}_{i}^{[L-1]}(0)\right>>0}}}_{\psi}\leq \Norm{\Bar{\va}_{p}^2(0)}_{\psi}\leq C_{\psi, 1},
\]
is a sub-exponential random variable, and as we notice that
\begin{align*}
&\Exp\left[ \frac{1}{m}\sum_{p=1}^m\Bar{\va}_p^2(0)\Abs{\mathbf{1}_{\left<\frac{\Bar{\vw}_{L,p}(t)}{\sqrt{m}}, ~\bar{\vx}_{i}^{[L-1]}(t)\right>>0}- \mathbf{1}_{\left<\frac{\Bar{\vw}_{L,p}(0)}{\sqrt{m}},~\bar{\vx}_{i}^{[L-1]}(0)\right>>0}}\right]\\
=&\frac{1}{m}\sum_{p=1}^m\Exp\left[ \Bar{\va}_p^2(0)\right]\Exp\left[\Abs{\mathbf{1}_{\left<\frac{\Bar{\vw}_{L,p}(t)}{\sqrt{m}}, ~\bar{\vx}_{i}^{[L-1]}(t)\right>>0}- \mathbf{1}_{\left<\frac{\Bar{\vw}_{L,p}(0)}{\sqrt{m}},~\bar{\vx}_{i}^{[L-1]}(0)\right>>0}}\right]\\
\leq& \frac{1}{m}\sum_{p=1}^m\Prob\left(\sA_{i,L,p}(R_{p})\right)\leq \frac{1}{m}\sum_{p=1}^mR_{p}\sqrt{m}8^{L-1},
\end{align*}
whose estimate reads,
\begin{align*}
& \frac{1}{m}\sum_{p=1}^mR_{p}\sqrt{m}=\frac{1}{m}\sum_{p=1}^m\sqrt{m}\Abs{\left<\frac{\Bar{\vw}_{L,p}(t)}{\sqrt{m}}, \bar{\vx}_{i}^{[L-1]}(t)\right>-\left<\frac{\Bar{\vw}_{L,p}(0)}{\sqrt{m}},\bar{\vx}_{i}^{[L-1]}(0)\right>}\\
\leq & \frac{1}{m}\sum_{p=1}^m \sqrt{m}\left(\Norm{\frac{\Bar{\vw}_{L,p}(t)}{\sqrt{m}}- \frac{\Bar{\vw}_{L,p}(0)}{\sqrt{m}}}_2  \Norm{\bar{\vx}_{i}^{[L-1]}(t)}_2+\Abs{ \left<\frac{\Bar{\vw}_{L,p}(0)}{\sqrt{m}}, \bar{\vx}_{i}^{[L-1]}(t)-\bar{\vx}_{i}^{[L-1]}(0)\right>}\right)\\
\leq & \sqrt{ \sum_{p=1}^m\left(2\Norm{\frac{\Bar{\vw}_{L,p}(t)}{\sqrt{m}}- \frac{\Bar{\vw}_{L,p}(0)}{\sqrt{m}}}_2^2 \Norm{\bar{\vx}_{i}^{[L-1]}(t)}_2^2+2\Abs{ \left<\frac{\Bar{\vw}_{L,p}(0)}{\sqrt{m}}, \bar{\vx}_{i}^{[L-1]}(t)-\bar{\vx}_{i}^{[L-1]}(0)\right>}_2^2\right)}\\
=& \sqrt{2\Norm{\frac{\overline{\mW}^{[L]}(t)}{\sqrt{m}}-\frac{\overline{\mW}^{[L]}(0)}{\sqrt{m}}}_{\mathrm{F}}^216^{L-1}+2\Norm{\frac{\overline{\mW}^{[L]}(0)}{\sqrt{m}}\left(\bar{\vx}_{i}^{[L-1]}(t)-\bar{\vx}_{i}^{[L-1]}(0)\right)}_2^2}\\
\leq &\sqrt{2\Norm{\frac{\overline{\mW}^{[L]}(t)}{\sqrt{m}}-\frac{\overline{\mW}^{[L]}(0)}{\sqrt{m}}}_{\mathrm{F}}^216^{L-1}+2\Norm{\frac{\overline{\mW}^{[L]}(0)}{\sqrt{m}}}_2^2\Norm{\bar{\vx}_{i}^{[L-1]}(t)-\bar{\vx}_{i}^{[L-1]}(0)}_2^2}\\
\leq & 8^{L-1} \left(\Norm{\frac{\overline{\mW}^{[L]}(t)}{\sqrt{m}}-\frac{\overline{\mW}^{[L]}(0)}{\sqrt{m}}}_{\mathrm{F}}+\Norm{\bar{\vx}_{i}^{[L-1]}(t)-\bar{\vx}_{i}^{[L-1]}(0)}_2\right)\\
\leq& 8^{L-1}\left(\frac{2\sqrt{2}n\sqrt{R_\fS(\vtheta^0)}}{\kappa \lambda_\fS}4^L+\frac{2\sqrt{2}n\sqrt{R_\fS(\vtheta^0)}}{\kappa \lambda_\fS}\left(8^L\right)^{L-1}\right)\leq \frac{2\sqrt{2}n\sqrt{R_\fS(\vtheta^0)}}{\kappa \lambda_\fS}\left(64^L\right)^{L-1},
\end{align*}
then with high probability, the following holds  
\begin{align*}
  &\Abs{
 \left< 
 \vsigma_{[L]}^{(1)}\left(\vx_i(t)\right)\frac{\Bar{\va}(0)}{\sqrt{m}},  
 \vsigma_{[L]}^{(1)}\left(\vx_j(t)\right)\frac{\Bar{\va}(0)}{\sqrt{m}}\right>
-
 \left< 
 \vsigma_{[L]}^{(1)}\left(\vx_i(0)\right)\frac{\Bar{\va}(0)}{\sqrt{m}},  
 \vsigma_{[L]}^{(1)}\left(\vx_j(0)\right)\frac{\Bar{\va}(0)}{\sqrt{m}}\right>
 }  \\
\leq &\frac{2}{m} \sum_{p=1}^mR_{p}\sqrt{m}8^{L-1} \leq \frac{4\sqrt{2}n\sqrt{R_\fS(\vtheta^0)}}{\kappa \lambda_\fS}\left(64^L\right)^{L-1}8^{L-1}\leq \frac{2\sqrt{2}n\sqrt{R_\fS(\vtheta^0)}}{\kappa \lambda_\fS}\left(512^L\right)^{L-1},
\end{align*}
to sum up,  we obtain that for  any time $t\in[0, t^*)$
\begin{align*}
 \Abs{\overline{\mH}_{ij}^{[L]}   (\vtheta(t))-\overline{\mH}_{ij}^{[L]}   (\vtheta(0))}  
&\leq  \frac{16\sqrt{2}n\sqrt{R_\fS(\vtheta^0)}}{\kappa \lambda_\fS}4^L+\frac{2\sqrt{2}n\sqrt{R_\fS(\vtheta^0)}}{\kappa \lambda_\fS}\left(512^L\right)^{L-1}\\
&\leq \frac{2\sqrt{2}n\sqrt{R_\fS(\vtheta^0)}}{\kappa \lambda_\fS}\left(1024^{L}\right)^L.
\end{align*}

We assume that   \eqref{B-Thm-proof...eq...l<L-th-Infinity-Norm} holds for $l+1$,  and we proceed to demonstrate that \eqref{B-Thm-proof...eq...l<L-th-Infinity-Norm} holds for $l$. Since   we have
\begin{align*}
&\Abs{\overline{\mH}_{ij}^{[l]}   (\vtheta(t))-\overline{\mH}_{ij}^{[l]}   (\vtheta(0))}=\Abs{
\left< 
\vlambda_i^{[l]}(t),  
 \vlambda_j^{[l]}(t)
 \right>
-
\left< 
\vlambda_i^{[l]}(0),  
 \vlambda_j^{[l]}(0)
 \right>
 }\\
\leq &\Bigg|
\left< 
 \vsigma_{[l]}^{(1)}\left({\vx}_{i}(t)\right)\left(\frac{\overline{\mW}^{[l+1]}(t)}{\sqrt{m}}\right)^\T\vlambda_i^{[l+1]}(t),  
 \vsigma_{[l]}^{(1)}\left({\vx}_{j}(t)\right)\left(\frac{\overline{\mW}^{[l+1]}(t)}{\sqrt{m}}\right)^\T\vlambda_j^{[l+1]}(t)
 \right>
 \\
&~~-
\left< 
 \vsigma_{[l]}^{(1)}\left({\vx}_{i}(t)\right)\left(\frac{\overline{\mW}^{[l+1]}(0)}{\sqrt{m}}\right)^\T\vlambda_i^{[l+1]}(t),  
 \vsigma_{[l]}^{(1)}\left({\vx}_{j}(t)\right)\left(\frac{\overline{\mW}^{[l+1]}(0)}{\sqrt{m}}\right)^\T\vlambda_j^{[l+1]}(t)
 \right>\Bigg|\\
 &+\Bigg| 
\left< 
 \vsigma_{[l]}^{(1)}\left({\vx}_{i}(t)\right)\left(\frac{\overline{\mW}^{[l+1]}(0)}{\sqrt{m}}\right)^\T\vlambda_i^{[l+1]}(t),  
 \vsigma_{[l]}^{(1)}\left({\vx}_{j}(t)\right)\left(\frac{\overline{\mW}^{[l+1]}(0)}{\sqrt{m}}\right)^\T\vlambda_j^{[l+1]}(t)
 \right>\\
 &~~-
\left< 
 \vsigma_{[l]}^{(1)}\left({\vx}_{i}(t)\right)\left(\frac{\overline{\mW}^{[l+1]}(0)}{\sqrt{m}}\right)^\T\vlambda_i^{[l+1]}(0),  
 \vsigma_{[l]}^{(1)}\left({\vx}_{j}(t)\right)\left(\frac{\overline{\mW}^{[l+1]}(0)}{\sqrt{m}}\right)^\T\vlambda_j^{[l+1]}(0)
 \right>\Bigg|\\
&+\Bigg| 
\left< 
 \vsigma_{[l]}^{(1)}\left({\vx}_{i}(t)\right)\left(\frac{\overline{\mW}^{[l+1]}(0)}{\sqrt{m}}\right)^\T\vlambda_i^{[l+1]}(0),  
 \vsigma_{[l]}^{(1)}\left({\vx}_{j}(t)\right)\left(\frac{\overline{\mW}^{[l+1]}(0)}{\sqrt{m}}\right)^\T\vlambda_j^{[l+1]}(0)
 \right> \\
 &~~-
\left< 
 \vsigma_{[l]}^{(1)}\left({\vx}_{i}(0)\right)\left(\frac{\overline{\mW}^{[l+1]}(0)}{\sqrt{m}}\right)^\T\vlambda_i^{[l+1]}(0),  
 \vsigma_{[l]}^{(1)}\left({\vx}_{j}(0)\right)\left(\frac{\overline{\mW}^{[l+1]}(0)}{\sqrt{m}}\right)^\T\vlambda_j^{[l+1]}(0)
 \right>\Bigg|,
\end{align*}
so there are three terms to be analyzed, then for the first term, we obtain that 
\begin{align*}
&\Bigg|
\left< 
 \vsigma_{[l]}^{(1)}\left({\vx}_{i}(t)\right)\left(\frac{\overline{\mW}^{[l+1]}(t)}{\sqrt{m}}\right)^\T\vlambda_i^{[l+1]}(t),  
 \vsigma_{[l]}^{(1)}\left({\vx}_{j}(t)\right)\left(\frac{\overline{\mW}^{[l+1]}(t)}{\sqrt{m}}\right)^\T\vlambda_j^{[l+1]}(t)
 \right>
 \\
&~~ -
\left< 
 \vsigma_{[l]}^{(1)}\left({\vx}_{i}(t)\right)\left(\frac{\overline{\mW}^{[l+1]}(0)}{\sqrt{m}}\right)^\T\vlambda_i^{[l+1]}(t),  
 \vsigma_{[l]}^{(1)}\left({\vx}_{j}(t)\right)\left(\frac{\overline{\mW}^{[l+1]}(0)}{\sqrt{m}}\right)^\T\vlambda_j^{[l+1]}(t)
 \right>\Bigg|
\\
\leq & \Norm{\left(\frac{\overline{\mW}^{[l+1]}(t)}{\sqrt{m}}-\frac{\overline{\mW}^{[l+1]}(0)}{\sqrt{m}}\right)^\T}_{2\to 2}\Norm{\vlambda_i^{[l+1]}(t)}_2 \Norm{\left(\frac{\overline{\mW}^{[l+1]}(t)}{\sqrt{m}}\right)^\T}_{2\to 2}\Norm{\vlambda_j^{[l+1]}(t)}_2\\
&~~+\Norm{\left(\frac{\overline{\mW}^{[l+1]}(0)}{\sqrt{m}}\right)^\T}_{2\to 2}\Norm{\vlambda_j^{[l+1]}(t)}_2 \Norm{\left(\frac{\overline{\mW}^{[l+1]}(t)}{\sqrt{m}}-\frac{\overline{\mW}^{[l+1]}(0)}{\sqrt{m}}\right)^\T}_{2\to 2}\Norm{\vlambda_i^{[l+1]}(t)}_2\\
\leq & 8\Norm{\frac{\overline{\mW}^{[l+1]}(t)}{\sqrt{m}}-\frac{\overline{\mW}^{[l+1]}(0)}{\sqrt{m}}}_{2\to 2}16^{L-l+1}  \leq \frac{2\sqrt{2}n\sqrt{R_\fS(\vtheta^0)}}{\kappa \lambda_\fS}64^L,
\end{align*}
and for the second term, we obtain that
\begin{align*}
 & \Bigg| 
\left< 
 \vsigma_{[l]}^{(1)}\left({\vx}_{i}(t)\right)\left(\frac{\overline{\mW}^{[l+1]}(0)}{\sqrt{m}}\right)^\T\vlambda_i^{[l+1]}(t),  
 \vsigma_{[l]}^{(1)}\left({\vx}_{j}(t)\right)\left(\frac{\overline{\mW}^{[l+1]}(0)}{\sqrt{m}}\right)^\T\vlambda_j^{[l+1]}(t)
 \right>\\
 &~~-
\left< 
 \vsigma_{[l]}^{(1)}\left({\vx}_{i}(t)\right)\left(\frac{\overline{\mW}^{[l+1]}(0)}{\sqrt{m}}\right)^\T\vlambda_i^{[l+1]}(0),  
 \vsigma_{[l]}^{(1)}\left({\vx}_{j}(t)\right)\left(\frac{\overline{\mW}^{[l+1]}(0)}{\sqrt{m}}\right)^\T\vlambda_j^{[l+1]}(0)
 \right>\Bigg|\\
 \leq & \Bigg| 
\left< 
 \left(\frac{\overline{\mW}^{[l+1]}(0)}{\sqrt{m}}\right)^\T\vlambda_i^{[l+1]}(t),  
\left(\frac{\overline{\mW}^{[l+1]}(0)}{\sqrt{m}}\right)^\T\vlambda_j^{[l+1]}(t)
 \right>\\
 &~~-
\left< 
\left(\frac{\overline{\mW}^{[l+1]}(0)}{\sqrt{m}}\right)^\T\vlambda_i^{[l+1]}(0),  
 \left(\frac{\overline{\mW}^{[l+1]}(0)}{\sqrt{m}}\right)^\T\vlambda_j^{[l+1]}(0)
 \right>\Bigg|,
\end{align*}
since the entries in  $\left(\frac{\overline{\mW}^{[l+1]}(0)}{\sqrt{m}}\right)^\T$  reads 
\[
\left(\frac{\overline{\mW}^{[l+1]}(0)}{\sqrt{m}}\right)^\T=\left[\frac{w_{p,q}}{\sqrt{m}}\right]_{m\times m},
\] 
and the entries in $\vlambda_i^{[l+1]}(t)$ and $\vlambda_j^{[l+1]}(t)$ read 
\[  \vlambda_i^{[l+1]}(t)=\left[\frac{\lambda_r^i(t)}{\sqrt{m}}\right]_{m\times 1},~~\vlambda_j^{[l+1]}(t)=\left[\frac{\lambda_r^j(t)}{\sqrt{m}}\right]_{m\times 1},\]
then
\begin{align*}
\left< 
 \left(\frac{\overline{\mW}^{[l+1]}(0)}{\sqrt{m}}\right)^\T\vlambda_i^{[l+1]}(t),  
\left(\frac{\overline{\mW}^{[l+1]}(0)}{\sqrt{m}}\right)^\T\vlambda_j^{[l+1]}(t)
 \right>=\frac{1}{m}\sum_{p,q,r=1}^m w_{p,q}\frac{\lambda_q^i(t)}{\sqrt{m}}w_{p,r}\frac{\lambda_r^j(t)}{\sqrt{m}},    
\end{align*}
and 
\begin{align*}
\left< 
 \left(\frac{\overline{\mW}^{[l+1]}(0)}{\sqrt{m}}\right)^\T\vlambda_i^{[l+1]}(0),  
\left(\frac{\overline{\mW}^{[l+1]}(0)}{\sqrt{m}}\right)^\T\vlambda_j^{[l+1]}(0)
 \right>=\frac{1}{m}\sum_{p,q,r=1}^m w_{p,q}\frac{\lambda_q^i(0)}{\sqrt{m}}w_{p,r}\frac{\lambda_r^j(0)}{\sqrt{m}}.   
\end{align*}
If $q\neq r$, then $w_{p,q}, w_{p,r}$ are independent with each other  with expectation  zero. Therefore, we   focus on the case where $q=r,$ and with high probability, the coefficients of $\frac{\lambda_q^i(t)}{\sqrt{m}}\frac{\lambda_q^j(t)}{\sqrt{m}}$ converges to  \[\frac{1}{m}\sum_{p=1}^m w_{p,q}^2\to 1, \] 
 therefore,  with high probability,
\begin{align*}
 & \Bigg| 
\left< 
 \vsigma_{[l]}^{(1)}\left({\vx}_{i}(t)\right)\left(\frac{\overline{\mW}^{[l+1]}(0)}{\sqrt{m}}\right)^\T\vlambda_i^{[l+1]}(t),  
 \vsigma_{[l]}^{(1)}\left({\vx}_{j}(t)\right)\left(\frac{\overline{\mW}^{[l+1]}(0)}{\sqrt{m}}\right)^\T\vlambda_j^{[l+1]}(t)
 \right>\\
 &~~-
\left< 
 \vsigma_{[l]}^{(1)}\left({\vx}_{i}(t)\right)\left(\frac{\overline{\mW}^{[l+1]}(0)}{\sqrt{m}}\right)^\T\vlambda_i^{[l+1]}(0),  
 \vsigma_{[l]}^{(1)}\left({\vx}_{j}(t)\right)\left(\frac{\overline{\mW}^{[l+1]}(0)}{\sqrt{m}}\right)^\T\vlambda_j^{[l+1]}(0)
 \right>\Bigg|\\
 \leq & \Abs{\sum_{q=1}^m\left(\frac{\lambda_q^i(t)}{\sqrt{m}}\frac{\lambda_q^j(t)}{\sqrt{m}}-\frac{\lambda_q^i(0)}{\sqrt{m}}\frac{\lambda_q^j(0)}{\sqrt{m}}\right)} = \Abs{\overline{\mH}_{ij}^{[l+1]}   (\vtheta(t))-\overline{\mH}_{ij}^{[l+1]}   (\vtheta(0))},
\end{align*}
and for the third term,   as we set  $\eps_l:= \prod_{k=1}^l\alpha_k$,  we only need to focus on the case where $\eps_l>1$,   and since  the entries in $\vsigma^{(1)}_{[l]}(\vx_i(t))$ and $\vsigma^{(1)}_{[l]}(\vx_j(t))$ read
\[
\vsigma^{(1)}_{[l]}(\vx_i(t))=\mathrm{diag}\left([\mu_p^i(t)]_{m\times 1}\right),~~\vsigma^{(1)}_{[l]}(\vx_j(t))=\mathrm{diag}\left([\mu_p^j(t)]_{m\times 1}\right),
\]
then the third term reads
\begin{align*}
& \Bigg| 
\left< 
 \vsigma_{[l]}^{(1)}\left({\vx}_{i}(t)\right)\left(\frac{\overline{\mW}^{[l+1]}(0)}{\sqrt{m}}\right)^\T\vlambda_i^{[l+1]}(0),  
 \vsigma_{[l]}^{(1)}\left({\vx}_{j}(t)\right)\left(\frac{\overline{\mW}^{[l+1]}(0)}{\sqrt{m}}\right)^\T\vlambda_j^{[l+1]}(0)
 \right> \\
 &~~-
\left< 
 \vsigma_{[l]}^{(1)}\left({\vx}_{i}(0)\right)\left(\frac{\overline{\mW}^{[l+1]}(0)}{\sqrt{m}}\right)^\T\vlambda_i^{[l+1]}(0),  
 \vsigma_{[l]}^{(1)}\left({\vx}_{j}(0)\right)\left(\frac{\overline{\mW}^{[l+1]}(0)}{\sqrt{m}}\right)^\T\vlambda_j^{[l+1]}(0)
 \right>\Bigg|\\
=&\Abs{\sum_{p,q,r=1}^m\left( \mu_p^i(t)\frac{w_{p,q}}{\sqrt{m}}\frac{\lambda_q^i(0)}{\sqrt{m}}\mu_p^j(t)\frac{w_{p,r}}{\sqrt{m}}\frac{\lambda_r^j(0)}{\sqrt{m}}-\mu_p^i(0)\frac{w_{p,q}}{\sqrt{m}}\frac{\lambda_q^i(0)}{\sqrt{m}}\mu_p^j(0)\frac{w_{p,r}}{\sqrt{m}}\frac{\lambda_r^j(0)}{\sqrt{m}}\right)},
\end{align*}
if $q\neq r$, the quantity converges to zero 
with high probability, i.e.,
\[
\frac{1}{m}\sum_{p=1}^m\left( \mu_p^i(t)\mu_p^j(t)-\mu_p^i(0)  \mu_p^j(0) \right)w_{p,q}w_{p,r}  \frac{\lambda_q^i(0)}{\sqrt{m}} \frac{\lambda_r^j(0)}{\sqrt{m}}\to 0.
\]
Therefore,  we   focus on the case where $q=r,$
\begin{align*}
& \Abs{\frac{1}{m}\sum_{p,q=1}^m\left( \mu_p^i(t)\mu_p^j(t)-\mu_p^i(0)  \mu_p^j(0) \right)w_{p,q}^2  \frac{\lambda_q^i(0)}{\sqrt{m}} \frac{\lambda_q^j(0)}{\sqrt{m}}}\\
\leq & \frac{1}{m}\sum_{p,q=1}^m \Abs{ \frac{\lambda_q^i(0)}{\sqrt{m}} \frac{\lambda_q^j(0)}{\sqrt{m}}} w_{p,q}^2\Abs{ \mu_p^i(t)\mu_p^j(t)-\mu_p^i(0)  \mu_p^j(0)},
\end{align*}
then for any $p\in[m]$,
\[\Norm{\sum_{q=1}^m \Abs{ \frac{\lambda_q^i(0)}{\sqrt{m}} \frac{\lambda_q^j(0)}{\sqrt{m}}} w_{p,q}^2\Abs{ \mu_p^i(t)\mu_p^j(t)-\mu_p^i(0)  \mu_p^j(0)}}_{\psi}\leq \Norm{\sum_{q=1}^m w_{p,q}^2}_{\psi}=C_{\psi, m},\]
is a sub-exponential random variable, and  as  we   write   $\frac{\overline{\mW}^{[l]}(t)}{\sqrt{m}}$ into  
\[\frac{\overline{\mW}^{[l]}(t)}{\sqrt{m}}:=\begin{pmatrix}
\left(\frac{\Bar{\vw}_{l,1}(t)}{\sqrt{m}}\right)^\T \\
\left(\frac{\Bar{\vw}_{l,2}(t)}{\sqrt{m}}\right)^\T \\
\vdots\\
\left(\frac{\Bar{\vw}_{l,m}(t)}{\sqrt{m}} \right)^\T 
\end{pmatrix}. \] 
and we define the following events: For any  $i\in[n]$,   and $k'\in[m]$,
\begin{equation}\label{B-Thm-proof...eq...Events-AR-BR-l<L}
\begin{aligned} 
\sA_{i,l,k'}(R_{k'}):=\Bigg\{&\Abs{\left<\frac{\Bar{\vw}_{l,k'}(t)}{\sqrt{m}}, \bar{\vx}_{i}^{[l-1]}(t)\right>-\left<\frac{\Bar{\vw}_{l,k'}(0)}{\sqrt{m}},\bar{\vx}_{i}^{[l-1]}(0)\right>}=R_{k'} ,\\
&~\mathbf{1}_{\left<\frac{\Bar{\vw}_{l,k'}(t)}{\sqrt{m}}, ~\bar{\vx}_{i}^{[l-1]}(t)\right>>0}\neq \mathbf{1}_{\left<\frac{\Bar{\vw}_{l,k'}(0)}{\sqrt{m}},~\bar{\vx}_{i}^{[l-1]}(0)\right>>0}~~~~~~~~~\Bigg\}.
\end{aligned}
\end{equation}
As we notice that
\begin{align*}
&\Exp\left[ \frac{1}{m}\sum_{p,q=1}^m \Abs{ \frac{\lambda_q^i(0)}{\sqrt{m}} \frac{\lambda_q^j(0)}{\sqrt{m}}} w_{p,q}^2\Abs{ \mu_p^i(t)\mu_p^j(t)-\mu_p^i(0)  \mu_p^j(0)}\right]\\
\leq&2\Abs{b-a}\frac{\Abs{\overline{\mH}_{ij}^{[l+1]}(\vtheta(0))}}{m}\sum_{p=1}^m\Exp\left[ w_{p,q}^2\right]\Exp\left[\Abs{\mathbf{1}_{\left<\frac{\Bar{\vw}_{l,p}(t)}{\sqrt{m}}, ~\bar{\vx}_{i}^{[l-1]}(t)\right>>0}- \mathbf{1}_{\left<\frac{\Bar{\vw}_{l,p}(0)}{\sqrt{m}},~\bar{\vx}_{i}^{[l-1]}(0)\right>>0}}\right]\\
\leq&16^{L-l+1}\frac{2\Abs{b-a}}{m}\sum_{p=1}^m\Prob\left(\sA_{i,l,p}(R_{p})\right)\leq \frac{2\Abs{b-a}}{m}\sum_{p=1}^mR_{p}\sqrt{m}16^{L-l+1},
\end{align*}
we omit the term $\Abs{b-a}$  for simplicity, and as the following estimate holds,
\begin{align*}
& \frac{1}{m}\sum_{p=1}^mR_{p}\sqrt{m}=\frac{1}{m}\sum_{p=1}^m\sqrt{m}\Abs{\left<\frac{\Bar{\vw}_{l,p}(t)}{\sqrt{m}}, \bar{\vx}_{i}^{[l-1]}(t)\right>-\left<\frac{\Bar{\vw}_{l,p}(0)}{\sqrt{m}},\bar{\vx}_{i}^{[l-1]}(0)\right>}\\
\leq & \frac{1}{m}\sum_{p=1}^m \sqrt{m}\left(\Norm{\frac{\Bar{\vw}_{l,p}(t)}{\sqrt{m}}- \frac{\Bar{\vw}_{l,p}(0)}{\sqrt{m}}}_2  \Norm{\bar{\vx}_{i}^{[l-1]}(t)}_2+\Abs{ \left<\frac{\Bar{\vw}_{l,p}(0)}{\sqrt{m}}, \bar{\vx}_{i}^{[l-1]}(t)-\bar{\vx}_{i}^{[l-1]}(0)\right>}\right)\\
\leq & 8^{l-1} \left(\Norm{\frac{\overline{\mW}^{[l]}(t)}{\sqrt{m}}-\frac{\overline{\mW}^{[l]}(0)}{\sqrt{m}}}_{\mathrm{F}}+\Norm{\bar{\vx}_{i}^{[l-1]}(t)-\bar{\vx}_{i}^{[l-1]}(0)}_2\right)\\
\leq& 8^{l-1}\left(\frac{2\sqrt{2}n\sqrt{R_\fS(\vtheta^0)}}{\kappa \lambda_\fS}4^L+\frac{2\sqrt{2}n\sqrt{R_\fS(\vtheta^0)}}{\kappa \lambda_\fS}\left(8^L\right)^{l-1}\right)\leq \frac{2\sqrt{2}n\sqrt{R_\fS(\vtheta^0)}}{\kappa \lambda_\fS}\left(64^L\right)^{l-1},
\end{align*}
then with high probability, the following holds  
\begin{align*}
& \Bigg| 
\left< 
 \vsigma_{[l]}^{(1)}\left({\vx}_{i}(t)\right)\left(\frac{\overline{\mW}^{[l+1]}(0)}{\sqrt{m}}\right)^\T\vlambda_i^{[l+1]}(0),  
 \vsigma_{[l]}^{(1)}\left({\vx}_{j}(t)\right)\left(\frac{\overline{\mW}^{[l+1]}(0)}{\sqrt{m}}\right)^\T\vlambda_j^{[l+1]}(0)
 \right> \\
 &~~-
\left< 
 \vsigma_{[l]}^{(1)}\left({\vx}_{i}(0)\right)\left(\frac{\overline{\mW}^{[l+1]}(0)}{\sqrt{m}}\right)^\T\vlambda_i^{[l+1]}(0),  
 \vsigma_{[l]}^{(1)}\left({\vx}_{j}(0)\right)\left(\frac{\overline{\mW}^{[l+1]}(0)}{\sqrt{m}}\right)^\T\vlambda_j^{[l+1]}(0)
 \right>\Bigg|\\
\leq &\frac{2}{m}\sum_{p=1}^mR_{p}\sqrt{m}16^{L-l+1}\leq \frac{4\sqrt{2}n\sqrt{R_\fS(\vtheta^0)}}{\kappa \lambda_\fS}\left(64^L\right)^{l-1}16^{L-l+1}\leq \frac{2\sqrt{2}n\sqrt{R_\fS(\vtheta^0)}}{\kappa \lambda_\fS}1024^{L},
\end{align*}
to sum up,  we obtain that for  any time $t\in[0, t^*)$,
\begin{align*}
 \Abs{\overline{\mH}_{ij}^{[l]}   (\vtheta(t))-\overline{\mH}_{ij}^{[l]}   (\vtheta(0))} 
&\leq  \frac{2\sqrt{2}n\sqrt{R_\fS(\vtheta^0)}}{\kappa \lambda_\fS}64^L+\Abs{\overline{\mH}_{ij}^{[l+1]}   (\vtheta(t))-\overline{\mH}_{ij}^{[l+1]}   (\vtheta(0))} \\
&~~+\frac{2\sqrt{2}n\sqrt{R_\fS(\vtheta^0)}}{\kappa \lambda_\fS}1024^{L}  \\
&\leq  \frac{2\sqrt{2}n\sqrt{R_\fS(\vtheta^0)}}{\kappa \lambda_\fS}64^L+\frac{2\sqrt{2}n\sqrt{R_\fS(\vtheta^0)}}{\kappa \lambda_\fS}\left(1024^{L}\right)^{2L-l-1}\\
&~~+\frac{2\sqrt{2}n\sqrt{R_\fS(\vtheta^0)}}{\kappa \lambda_\fS}1024^{L}  \\
&\leq \frac{2\sqrt{2}n\sqrt{R_\fS(\vtheta^0)}}{\kappa \lambda_\fS}\left(1024^{L}\right)^{2L-l}.
\end{align*}
(iii). Based on the relations \eqref{B-Thm-proof...eq...l<L+1-th-2-Norm} and 
\eqref{B-Thm-proof...eq...l<L-th-Infinity-Norm}, we obtain that for any $l=L+1$,
\begin{equation*}
\begin{aligned}
&\Abs{\overline{\mG}_{ij}^{[L+1]}   (\vtheta(t))-\overline{\mG}_{ij}^{[L+1]}   (\vtheta(0))}\\
\leq& \Norm{\bar{\vx}_{i}^{[L]}(t)-\bar{\vx}_{i}^{[L]}(0)}_2\Norm{\bar{\vx}_{j}^{[L]}(t)}_2+\Norm{\bar{\vx}_{i}^{[L]}(0)}_2\Norm{\bar{\vx}_{j}^{[L]}(t) -\bar{\vx}_{j}^{[L]}(0)}_2\\ 
\leq & \frac{2\sqrt{2}n\sqrt{R_\fS(\vtheta^0)}}{\kappa \lambda_\fS}\left(8^L\right)^L4^L+\frac{2\sqrt{2}n\sqrt{R_\fS(\vtheta^0)}}{\kappa \lambda_\fS}\left(8^L\right)^L4^L \leq  \frac{2\sqrt{2}n\sqrt{R_\fS(\vtheta^0)}}{\kappa \lambda_\fS}\left(64^L\right)^L, 
\end{aligned}
\end{equation*}
and for any $l\in[L]$,
\begin{equation*}
\begin{aligned}
&\Abs{\overline{\mG}_{ij}^{[l]}   (\vtheta(t))-\overline{\mG}_{ij}^{[l]}   (\vtheta(0))}\\
 \leq &\Abs{\overline{\mH}_{ij}^{[l]}(\vtheta(t))-\overline{\mH}_{ij}^{[l]}   (\vtheta(0))}\Norm{\bar{\vx}_{i}^{[l-1]}(t)}_2\Norm{\bar{\vx}_{j}^{[l-1]}(t)}_2\\
&+\Abs{ \overline{\mH}_{ij}^{[l]}   (\vtheta(0))}\Abs{\left< \bar{\vx}_{i}^{[l-1]}(t),  \bar{\vx}_{j}^{[l-1]}(t)  \right>-\left< \bar{\vx}_{i}^{[l-1]}(0),  \bar{\vx}_{j}^{[l-1]}(0)  \right>}\\
 \leq&\frac{2\sqrt{2}n\sqrt{R_\fS(\vtheta^0)}}{\kappa \lambda_\fS}\left(1024^{L}\right)^{2L-l}16^{l-1}+16^{L-l+1}\frac{4\sqrt{2}n\sqrt{R_\fS(\vtheta^0)}}{\kappa \lambda_\fS}\left(8^L\right)^{l-1}4^{l-1}\\
\leq & \frac{2\sqrt{2}n\sqrt{R_\fS(\vtheta^0)}}{\kappa \lambda_\fS}\left(\left(1024^2\right)^{L}\right)^{L}.
\end{aligned}
\end{equation*}
To sum up,  for any $l\in[L+1]$ and   time $t\in[0, t^*)$,
\begin{equation}
\Norm{\overline{\mG}^{[l]}   (\vtheta(t))-\overline{\mG}^{[l]}   (\vtheta(0))}_{\infty}  \leq\frac{2\sqrt{2}n\sqrt{R_\fS(\vtheta^0)}}{\kappa \lambda_\fS}\left(\left(1024^2\right)^{L}\right)^{L},  
\end{equation}
then for  any time $t\in[0, t^*)$,   
\begin{align*}
&\Norm{\mG(\vtheta(t)) - \mG\left(\vtheta(0)\right)}_\mathrm{F} \\
\leq& \Norm{\sum_{l=1}^{L+1} \left(\mG^{[l]} (\vtheta(t))-\mG^{[l]} (\vtheta(0))\right)}_\mathrm{F} \leq \sum_{l=1}^{L+1}\Norm{ \mG^{[l]} (\vtheta(t))-\mG^{[l]} (\vtheta(0))}_\mathrm{F}\\
=&\sum_{l=1}^{L+1}\frac{\kappa^2}{\alpha_l^2}\Norm{\overline{\mG}^{[l]}   (\vtheta(t))-\overline{\mG}^{[l]}   (\vtheta(0))}_\mathrm{F} \leq \sum_{l=1}^{L+1}\frac{\kappa^2}{\alpha_l^2} n\Norm{\overline{\mG}^{[l]}   (\vtheta(t))-\overline{\mG}^{[l]}   (\vtheta(0))}_{\infty}\\
\leq &\sum_{l=1}^{L+1}\frac{\kappa^2}{\alpha_l^2}\frac{2\sqrt{2}n^2\sqrt{R_\fS(\vtheta^0)}}{\kappa \lambda_\fS}\left(\left(1024^2\right)^{L}\right)^{L}.
\end{align*}
As we   choose  $m$ large enough, i.e., if
\[
m\geq \left(\left(\left(1024^2\right)^{L}\right)^{L}{8\sqrt{2}\sqrt{R_\fS(\vtheta^0)}}\left(\frac{n}{\lambda_\fS}\right)^2\right)^{\frac{1}{\frac{L+1}{2}-\sum_{k=1}^{L+1}\gamma_k}},
\]
then 
\[
 \frac{2\sqrt{2}n^2\sqrt{R_\fS(\vtheta^0)}}{\kappa \lambda_\fS}\left(\left(1024^2\right)^{L}\right)^{L}\leq \frac{\lambda_\fS}{4},
\]
hence  for  any time $t\in[0, t^*)$,    we have
\begin{equation}
\Norm{\mG(\vtheta(t)) - \mG\left(\vtheta(0)\right)}_\mathrm{F} \leq \Norm{\sum_{l=1}^{L+1} \left(\mG^{[l]} (\vtheta(t))-\mG^{[l]} (\vtheta(0))\right)}_\mathrm{F} \leq \left(\sum_{l=1}^{L+1}\frac{\kappa^2}{\alpha_l^2}\right)  \frac{\lambda_\fS}{4}.    
\end{equation}
Suppose we have $t^*<+\infty$, then by sending  $t\to t^*$  would  lead to  contradiction with the definition of $t^*$. Therefore $t^*=+\infty$, and  with high probability, for any time $t\geq 0$,
\begin{equation}\label{B-Thm-proof...eq...Loss-all-time-Decay}
R_\fS(\vtheta(t)) \leq \exp\left(- \frac{1}{n}\left[\left(\sum_{l=1}^{L+1} \frac{\kappa^2}{\alpha_l^2}\right) {\lambda_\fS}\right]t\right)R_\fS(\vtheta(0)).
\end{equation} 

We only need to prove relation \eqref{Thm...eq...Theta-Lazy...Part-One}. Based on  Proposition \ref{A-prop..Upper-Bound-and-Lower-Bound-Initial-Parameter},   for any   $l\in[2:L]$, and time $t\geq 0$,
\begin{align*}
\frac{\Norm{\Bar{\vtheta}_{\mW^{[1]}}(t)-\Bar{\vtheta}_{\mW^{[1]}}(0)}_2}{\Norm{\Bar{\vtheta}_{\mW^{[1]}}(0)}_2}&=\frac{\Norm{\frac{\overline{\mW}^{[1]}(t)}{\sqrt{m}}-\frac{\overline{\mW}^{[1]}(0)}{\sqrt{m}}}_{\mathrm{F}}}{\Norm{\frac{\overline{\mW}^{[1]}(0)}{\sqrt{m}}}_{\mathrm{F}}}  \leq\sqrt{\frac{2}{d}}\frac{2\sqrt{2}n\sqrt{R_\fS(\vtheta^0)}}{\kappa \lambda_\fS}4^L,\\
\frac{\Norm{\Bar{\vtheta}_{\mW^{[l]}}(t)-\Bar{\vtheta}_{\mW^{[l]}}(0)}_2}{\Norm{\Bar{\vtheta}_{\mW^{[l]}}(0)}_2}&=\frac{\Norm{\frac{\overline{\mW}^{[l]}(t)}{\sqrt{m}}-\frac{\overline{\mW}^{[l]}(0)}{\sqrt{m}}}_{\mathrm{F}}}{\Norm{\frac{\overline{\mW}^{[l]}(0)}{\sqrt{m}}}_{\mathrm{F}}} \leq\sqrt{\frac{2}{m}}\frac{2\sqrt{2}n\sqrt{R_\fS(\vtheta^0)}}{\kappa \lambda_\fS}4^L,\\
\frac{\Norm{\Bar{\vtheta}_{\va}(t)-\Bar{\vtheta}_{\va}(0)}_2}{\Norm{\Bar{\vtheta}_{\va}(0)}_2}&=\frac{\Norm{\frac{\bar{\va}(t)}{\sqrt{m}}-\frac{\bar{\va}(0)}{\sqrt{m}}}_2}{\Norm{\frac{\bar{\va}(0)}{\sqrt{m}}}_2}  \leq\sqrt{2}\frac{2\sqrt{2}n\sqrt{R_\fS(\vtheta^0)}}{\kappa \lambda_\fS}4^L,
\end{align*}
as $\sum_{k=1}^{L+1}\gamma_k<\frac{L+1}{2}$, then for any time $t\geq 0$,
\begin{align*}
 \frac{\Norm{\Bar{\vtheta}_{\mW^{[1]}}(t)-\Bar{\vtheta}_{\mW^{[1]}}(0)}_2}{\Norm{\Bar{\vtheta}_{\mW^{[1]}}(0)}_2}&\lesssim  \frac{1}{m^{\frac{L+1}{2}-\sum_{k=1}^{L+1}\gamma_k}}\frac{4n}{\lambda_\fS}\sqrt{\frac{R_\fS(\vtheta^0)}{d}}4^L,\\
 \frac{\Norm{\Bar{\vtheta}_{\mW^{[l]}}(t)-\Bar{\vtheta}_{\mW^{[l]}}(0)}_2}{\Norm{\Bar{\vtheta}_{\mW^{[l]}}(0)}_2}&\lesssim   \frac{1}{m^{\frac{L+2}{2}-\sum_{k=1}^{L+1}\gamma_k}}\frac{4n\sqrt{{R_\fS(\vtheta^0)}}}{\lambda_\fS}
 4^L,\\
\frac{\Norm{\Bar{\vtheta}_{\va}(t)-\Bar{\vtheta}_{\va}(0)}_2}{\Norm{\Bar{\vtheta}_{\va}(0)}_2}&\lesssim   \frac{1}{m^{\frac{L+1}{2}-\sum_{k=1}^{L+1}\gamma_k}}\frac{4n\sqrt{{R_\fS(\vtheta^0)}}}{\lambda_\fS}4^L,
\end{align*}
which finishes the proof. 

\end{document}